\documentclass[11pt,letterpaper]{article}
\usepackage[margin=1in]{geometry}
\usepackage[T1]{fontenc}
\usepackage[utf8]{inputenc}
\usepackage{lmodern}
\usepackage{microtype}
\usepackage{amsmath,amssymb,amsthm}
\usepackage{algorithm}
\usepackage{algpseudocode}
\usepackage[authoryear,round]{natbib}
\setcitestyle{citesep={;},aysep={,},yysep={;}}
\usepackage{xurl}
\usepackage{graphicx}
\graphicspath{{figures/}{imgs/}}
\usepackage{tikz}
\usetikzlibrary{positioning,fit,arrows.meta}
\usepackage[hidelinks]{hyperref}
\hypersetup{
  pdftitle={Neural LoFi with Backward Coupling: \\
  A Spectral Theory of Cross-Layer Feature Learning},
  pdfauthor={Yatin Dandi, Matteo Vilucchio, Hugo Tabanelli, Florent Krzakala},
  pdfkeywords={feature learning, neural networks, spectral methods, backward coupling}
}

\title{Neural LoFi with Backward Coupling:\\ A Spectral Theory of Cross-Layer Feature Learning}
\author{%
  Yatin Dandi\thanks{Joint first authors.} \quad Matteo Vilucchio\footnotemark[1] \quad Hugo Tabanelli \quad Florent Krzakala\\[0.4em]
  \normalsize Information, Learning, and Physics Laboratory (IDEPHICS) EPFL, Lausanne, Switzerland
}
\date{}

\newcommand{\Etrain}{\widehat{\mathbb E}_n}
\newcommand{\RMS}{\operatorname{RMS}}

\newcommand{\Fwd}{\mathsf{FwdLoFi}}
\newcommand{\R}{\mathbb{R}}

\newcommand{\tr}{\operatorname{tr}}
\newcommand{\sign}{\operatorname{sign}}

\newtheorem{theorem}{Theorem}
\newtheorem{formaltheorem}{Theorem}[section]
\newtheorem{proposition}{Proposition}
\newtheorem{lemma}{Lemma}
\theoremstyle{definition}
\newtheorem{assumption}{Assumption}

\begin{document}
\raggedbottom
\maketitle

\begin{abstract}
During joint training, backpropagation allows later layers to reshape earlier representations, a feedback mechanism absent from purely layer-wise learning. We extend Neural Low-Degree Filtering (Neural LoFi), a recently proposed layer-wise spectral surrogate for feature learning, to capture this interaction. Starting from the exact backpropagation gradients, we derive a local coupling between a neuron’s incoming weights and its corresponding next-layer column. In a controlled small-initialization regime with smooth activations and fixed surroundings, these dynamics amplify singular modes of a cross-layer correlation operator. Each growing mode pairs an earlier feature with an upper-layer response through shared neuron coefficients. This motivates a backward-coupling algorithm in which upper-layer sensitivities generate sample-dependent effective labels for earlier spectral steps, while corrected features refine upward transmission. A function-space variational principle characterizes the selected feature–response pairs by maximizing the label correlation of their product under kernel-norm constraints. For three-layer networks, we derive self-consistent continuous-time dynamics for the filtered hidden-layer features, the activation kernels, and the network output under the proposed backward coupling. In a hierarchical multi-index model, we establish a feature-recovery separation: backward coupling recovers lower-layer directions invisible to the original forward label-weighted statistic. The linear specialization recovers residual-dependent singular-mode dynamics, while paired modes provide a dynamical mechanism for linked low-rank covariance components consistent with the rainbow-network picture. Experiments with fully connected and convolutional networks on binary CIFAR-10 show improved accuracy over forward Neural LoFi at matched widths.
\end{abstract}

\section{Introduction}
\label{sec:introduction}

Deep neural networks learn hierarchies of task-adapted representations
through backpropagation \citep{rumelhart1986learning,lecun2015deep}.
Yet a hierarchy in the forward computation does not imply a layer-wise
order of learning. During joint training, features emerging in later
layers reshape the learning signals received below, while changes in
earlier representations alter what the upper layers can learn.
Understanding this feedback requires going beyond how features are
constructed successively. It raises a central question:
\begin{center}
\emph{How does learning in later layers change which features earlier layers can discover?}
\end{center}

This paper builds on Neural Low-Degree Filtering (Neural LoFi),
introduced by \citet{dandi_neural_lofi} as an explicit spectral construction
for purely layer-wise learning. Given the current representation, each layer
selects directions through low-degree correlations with the labels and
passes them through a nonlinear random lift. Depth changes the geometry
in which subsequent layers search for signal, making structure that is
complex in the input accessible through low-degree statistics of
intermediate representations. However, this construction proceeds only
forward: features learned above cannot refine the directions selected
below. A useful upper representation may therefore become available only
after the lower feature-selection step that could benefit from it has
already been completed.

Existing theories describe complementary aspects of feature
learning. Neural-tangent limits characterize training with essentially
fixed representations \citep{jacot2018ntk,lee2019wide}; mean-field and
dynamical-field frameworks describe evolving features in shallow and
deep nonlinear networks
\citep{mei2018meanfield,chizat2018global,nguyen2020rigorous,bordelon2022selfconsistent,hajjar2024training}.
Feature-space mean-field models and tensor programs further
describe feature learning at infinite width
\citep{fang2021features,yang2021feature}.
Deep linear networks admit explicit analyses of coupled singular-mode
dynamics \citep{saxe2014exact}, while spectral analyses of quadratic and
diagonal networks connect learned weight structure to generalization
\citep{defilippis2026scaling}. Backward feature correction studies how joint training with
higher layers can refine lower representations
\citep{allenzhu2023backward}.

In this paper, we seek a tractable and interpretable spectral mechanism
that identifies how an upper-layer response changes the "earlier" features. We extend Neural LoFi with a tractable and interpretable
spectral surrogate that captures backward feature correction and,
in addition, identifies a distinct effect of joint training:
learning how lower-level cues selectively activate higher-level
features. This couples the learning of lower representations to
the learning of their transmission upward.
Starting from the exact backpropagation gradients, we
couple a neuron's incoming row to the next-layer column that reads its
activation. For a jointly trained block at small initialization, with smooth
activations and fixed surroundings, we prove that the leading dynamics amplify
singular modes of a cross-layer correlation operator. Each mode pairs
an earlier feature with an upper response, with the same growing
coefficients across intermediate neurons in both weight factors.
The paired updates coordinate what a neuron detects with how
its activation is transmitted upward: the incoming row acquires the
lower feature, while the outgoing column couples it to the selected
upper response.
A similar mechanism has been seen to spontaneously emerge in the feed-forward blocks of modern LLMs \citep{geva2021transformer,geva2022transformer} and we note additionally that
with linear activations, the residual-driven
row--column equations are exact and recover the singular-mode framework
of \citet{saxe2014exact}.

This mechanism motivates BackLoFi, a spectral correction
of a forward LoFi network that visits layer boundaries from the top
down. Upper-layer sensitivities define sample-dependent effective
labels for earlier spectral steps; the corrected nonlinear activations
then determine the corresponding upward transmission. The algorithm
uses spectral selection and normalized corrections as a surrogate for
the local dynamics. Its selection rule has a function-space
interpretation: an earlier feature $\chi$ and an upper response $\psi$
maximize $\Etrain[y\psi\chi]$, subject to the kernel-norm constraints
of their respective spaces. The search for a lower feature thus depends
on the response learned above as well as on the input representation
and the labels. That backward feedback can
make a previously invisible lower feature detectable, providing a drastic advantage to cross-layer training over layer-wise one.

We illustrate
this in a hierarchical multi-index model \citep{
nichani2023provable,wang2023learning,fu2025learning,dandi2026computational,tabanelli2026deep} with a visible anchor
and additional hidden directions. In its Gaussian  analysis, the original label-weighted second moment has no signal in
the hidden block, while modulation by a sensitivity of the upper
conditional predictor creates one. This illustrates a staircase learning mechanism through the backward pass.

A second theoretical contribution is a closed
description of coupled nonlinear feature learning in a three-layer
network. For a continuous variant with fixed Gaussian lifts, continual
forward filtering, and readout gradient flow, we prove convergence to
self-consistent feature and kernel dynamics. The evolving kernels
determine subsequent feature selection, while an accumulated coupling
retains earlier column corrections as lower features change. This provide an unifying description of how multi-layer kernel adaptivity \cite{bach2021quest} arise through cross-layer feature learning. It generalizes and extend the seminal work of \citet{saxe2014exact} that was limited to linear networks. This also provides a dynamical perspective on the key--value
organization identified in transformer feed-forward blocks by
\citet{geva2021transformer,geva2022transformer}: our local analysis
shows how feature detectors and their associated output directions
can be jointly selected and amplified during training.

Further, this mechanism also allows to provide an explanation for the rainbow-network picture of
\citep{guth2024rainbow}:
Shared neuron coefficients  yield linked low-rank
covariance components in adjacent weight matrices, providing a local
dynamical mechanism predicting the rainbow structure.

Finally, mechanistic experiments with fully connected and
convolutional networks on binary CIFAR-10 show considerable improved accuracy over
forward Neural LoFi at matched widths. They demonstrates that using
upper-layer information to revise earlier representations is useful
beyond the controlled theoretical setting, and that the predicted mechanism translate into real data scenarios.

A public implementation, together with codes to reproduce the numerical illustrations, is available at
\url{https://github.com/IdePHICS/BackLoFi-Theory}.

\section{Neural Low-Degree Filtering with backward coupling}
\label{sec:setting}

\paragraph{Backpropagation, and the guiding approximation ---}
\label{sec:gradient}
We first discuss the exact updates of a neuron's incoming row and
outgoing column, then use their local coupling to motivate our spectral
algorithm. Consider data $\mathcal D_n=\{(\mathbf x_\mu,y_\mu)\}_{\mu=1}^n$,
with centered scalar labels, and write $\Etrain$ for the empirical
average. Let $\mathbf z_0:\mathcal X\to\mathbb R^{p_0}$ be a given
input feature map; raw inputs are the special case $\mathbf z_0(\mathbf x)=\mathbf x$.
A depth-$L$ network constructs
\begin{equation}
 \mathbf h_\ell=\mathbf W_\ell\mathbf z_{\ell-1},\qquad
 \mathbf z_\ell=\frac1{\sqrt{p_\ell}}\sigma(\mathbf h_\ell),\qquad
 \widehat f=\langle\mathbf a_L,\mathbf z_L\rangle+b.
 \label{eq:main-network}
\end{equation}
Here $\mathbf W_\ell\in\mathbb R^{p_\ell\times p_{\ell-1}}$ and the
activation acts coordinatewise. The sensitivity of the output to the
preactivations, $\mathbf g^\ell=\partial\widehat f/\partial\mathbf h_\ell$,
obeys the backward recursion
\begin{equation}
 \mathbf g^L=\frac{\sigma'(\mathbf h_L)}{\sqrt{p_L}}\odot\mathbf a_L,
 \qquad
 \mathbf g^\ell=\frac{\sigma'(\mathbf h_\ell)}{\sqrt{p_\ell}}
      \odot(\mathbf W_{\ell+1}^{\top}\mathbf g^{\ell+1}).
 \label{eq:main-backward-signals}
\end{equation}
For the square loss, write $\Delta=y-\widehat f$ for the residual;
gradient flow then satisfies
\begin{equation}
 \dot{\mathbf W}_\ell
       =\Etrain[\Delta\,\mathbf g^\ell\mathbf z_{\ell-1}^{\top}].
 \label{eq:main-gradient}
\end{equation}

Neuron $j$ of layer $\ell$ owns two weight vectors: its incoming row,
whose transpose we write $\mathbf w_{\ell,j}$ (row $j$ of $\mathbf W_\ell$),
and its outgoing column $\boldsymbol\omega_{\ell+1,j}$ (column $j$ of
$\mathbf W_{\ell+1}$), through which the next layer reads its activation
(Figure~\ref{fig:backward-coupling-mechanism}).
By \eqref{eq:main-backward-signals} and \eqref{eq:main-gradient}, their
updates by gradient descent are
\begin{align}
 \dot{\mathbf w}_{\ell,j}
 &=\Etrain\!\left[\Delta\,
 \frac{\sigma'(h_{\ell,j})}{\sqrt{p_\ell}}
 \langle\boldsymbol\omega_{\ell+1,j},\mathbf g^{\ell+1}\rangle
 \mathbf z_{\ell-1}\right], \quad \dot{\boldsymbol\omega}_{\ell+1,j}
 =\Etrain[\Delta\,z_{\ell,j}\mathbf g^{\ell+1}].
 \label{eq:main-pair-gradient}
\end{align}
Each update depends on the other vector: the outgoing column
sets the sample-dependent gain driving the row, while the row's activation
determines the column update.

\paragraph{Layer-wise vs coupled training ---}
In \citep{dandi_neural_lofi}, the authors considered
the layer-wise ``Neural LoFi regime'' of the first equation in
which the layers above are still untrained and vanishingly small. The factor
$\langle\boldsymbol\omega_{\ell+1,j},\mathbf g^{\ell+1}(\mathbf x)\rangle$, the
effective readout of neuron $j$, then reduces to a sample-independent random
constant $\bar a_j$, and the residual to the label. For an activation smooth
near zero, $\sigma'(h_{\ell,j})\simeq\sigma'(0)+\sigma''(0)h_{\ell,j}$ gives
\begin{equation}
 \dot{\mathbf w}_{\ell,j}\simeq\frac{\bar a_j}{\sqrt{p_\ell}}\Bigl(
 \sigma'(0)\,\Etrain[y\,\mathbf z_{\ell-1}]
 +\sigma''(0)\,\Etrain[y\,\mathbf z_{\ell-1}\mathbf z_{\ell-1}^{\top}]\,
 \mathbf w_{\ell,j}\Bigr).
 \label{eq:main-lofi-limit}
\end{equation}
The rows evolve independently under the same label-weighted
moments. Neural LoFi retains the first-moment direction and the leading
second-moment eigenvectors by absolute eigenvalue, then applies a random lift. This leads to Alg.\ref{alg:neural-lofi} and weights of the form \eqref{eq:main-forward-lofi}.

We consider now a {\it joint training} procedure. With a learned upper representation, the effective readout becomes
sample dependent. To isolate the effect of this coupling, we treat $\mathbf g^{\ell+1}$ and $\mathbf z_{\ell-1}$ as fixed.
For a smooth activation with $\sigma(0)=0$, we retain the constant
derivative $\sigma'(0)$ in the row update and the linear activation in
the column update. Assuming $\sigma'(0) > 0$ gives
\begin{equation}
 \dot{\mathbf w}_{\ell,j}\simeq
 \frac{\sigma'(0)}{\sqrt{p_\ell}}\widetilde{\mathbf A}_\ell^\top\boldsymbol\omega_{\ell+1,j},
 \qquad
 \dot{\boldsymbol\omega}_{\ell+1,j}\simeq
 \frac{\sigma'(0)}{\sqrt{p_\ell}}\widetilde{\mathbf A}_\ell\mathbf w_{\ell,j},
 \qquad
 \widetilde{\mathbf A}_\ell
 :=\Etrain[\Delta\,\mathbf g^{\ell+1}\mathbf z_{\ell-1}^\top].
 \label{eq:main-coupled-residual}
\end{equation}
Therefore, the same matrix acts in both directions and its
singular pairs determine the growing modes. Close to initialization, under the initialization assumptions of
App.~\ref{app:scaled-feedback-proof}, the prediction is small, so
$\Delta\simeq y$ and the frozen coupling matrix becomes
\begin{equation}
 \mathbf A_\ell
 :=\Etrain[y\,\mathbf g^{\ell+1}\mathbf z_{\ell-1}^{\top}].
 \label{eq:main-coupled-operator}
\end{equation}
Write $s_i$ and
$(\mathbf u_i,\mathbf v_i)$ for the singular values and pairs of
$\mathbf A_\ell$. We obtain the following approximation for the joint-evolution of $\dot{\mathbf w}_{\ell,j}$ and $\langle\boldsymbol\omega_{\ell+1,j}$:

\begin{theorem}[Informal: amplification across adjacent layers]
\label{thm:controlled-feedback}
Under vanishing initialization and fixed surrounding layers,
for small enough time, a singular mode $(\mathbf u_i,\mathbf v_i,s_i)$ of the
frozen $\mathbf A_\ell$, with non-zero initial component, has growing components:
\begin{equation}
 \begin{aligned}
 \langle\mathbf w_{\ell,j}(t),\mathbf v_i\rangle
       &\simeq c_{ji}^{+}e^{\sigma'(0)s_i t/\sqrt{p_\ell}}
              +c_{ji}^{-}e^{-\sigma'(0)s_i t/\sqrt{p_\ell}},\\
 \langle\boldsymbol\omega_{\ell+1,j}(t),\mathbf u_i\rangle
       &\simeq c_{ji}^{+}e^{\sigma'(0)s_i t/\sqrt{p_\ell}}
              -c_{ji}^{-}e^{-\sigma'(0)s_i t/\sqrt{p_\ell}},
 \end{aligned}
 \label{eq:informal-paired-amplification}
\end{equation}
with amplitudes
$c_{ji}^{\pm}=\tfrac12(\langle\mathbf w_{\ell,j}(0),\mathbf v_i\rangle
\pm\langle\boldsymbol\omega_{\ell+1,j}(0),\mathbf u_i\rangle)$
set by initialization. See
App.~\ref{app:scaled-feedback-proof}
for details and error bounds.
\end{theorem}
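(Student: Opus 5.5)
The plan is to first solve the idealized linear system \eqref{eq:main-coupled-residual} with $\widetilde{\mathbf A}_\ell$ replaced by the frozen $\mathbf A_\ell$ of \eqref{eq:main-coupled-operator}, and then control the gap between this linearization and the exact pair dynamics \eqref{eq:main-pair-gradient} on a short time window.

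\textbf{Solving the linearized system.} Set $\kappa=\sigma'(0)/\sqrt{p_\ell}$ and write the singular value decomposition $\mathbf A_\ell=\sum_i s_i\mathbf u_i\mathbf v_i^\top$. The linearized system reads $\dot{\mathbf w}=\kappa\mathbf A_\ell^\top\boldsymbol\omega$ and $\dot{\boldsymbol\omega}=\kappa\mathbf A_\ell\mathbf w$. Projecting onto a singular pair, with $\alpha_i=\langle\mathbf w,\mathbf v_i\rangle$ and $\beta_i=\langle\boldsymbol\omega,\mathbf u_i\rangle$, we use $\mathbf A_\ell^\top\mathbf u_i=s_i\mathbf v_i$ and $\mathbf A_\ell\mathbf v_i=s_i\mathbf u_i$. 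This gives the closed $2\times2$ system
\[
\dot\alpha_i=\kappa s_i\beta_i,\qquad \dot\beta_i=\kappa s_i\alpha_i .
\]
The sum and difference $\alpha_i\pm\beta_i$ decouple: they evolve as $e^{\pm\kappa s_i t}$. Inverting gives exactly \eqref{eq:informal-paired-amplification}, with $c^\pm_{ji}$ equal to half the initial sum and half the initial difference. Components of $\mathbf w$ in $\ker\mathbf A_\ell$ and of $\boldsymbol\omega$ in $\ker\mathbf A_\ell^\top$ stay frozen. Hence, whenever $c^+_{ji}\neq0$, the mode grows at rate $\kappa s_i$, and the same coefficients $c^\pm_{ji}$ appear in both the row and the column.

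\textbf{Error bounds for the true dynamics.} Holding $\mathbf g^{\ell+1}$ and $\mathbf z_{\ell-1}$ fixed, write the exact vector field as the linear one plus a remainder $\mathbf R(\mathbf w,\boldsymbol\omega,t)$. The remainder has three sources.
\begin{enumerate}
\item Replacing $\sigma'(h_{\ell,j})$ by $\sigma'(0)$ costs $O(\|\sigma''\|_\infty|h_{\ell,j}|)$.
\item Replacing $\sigma(h_{\ell,j})$ by $\sigma'(0)h_{\ell,j}$ costs $O(h_{\ell,j}^2)$, using $\sigma(0)=0$.
\item Replacing $\Delta$ by $y$ costs $|\widehat f|$, which stays small under the small-initialization assumptions.
\end{enumerate}
Assuming bounded empirical moments of $\mathbf z_{\ell-1}$ and $\mathbf g^{\ell+1}$, and writing $\varepsilon$ for the initialization scale, we aim for
\[
\|\mathbf R\|\lesssim \|(\mathbf w,\boldsymbol\omega)\|^2+\delta_f(t)\,\|(\mathbf w,\boldsymbol\omega)\|,
\]
where $\delta_f$ bounds the output. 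We then apply Duhamel's formula with the propagator $e^{\kappa t\mathbf M}$, where $\mathbf M$ is the symmetric block matrix $\begin{pmatrix}0&\mathbf A_\ell^\top\\ \mathbf A_\ell&0\end{pmatrix}$ with operator norm $s_1$, followed by a Grönwall/bootstrap argument. As long as $\|(\mathbf w,\boldsymbol\omega)\|\le C\varepsilon e^{\kappa s_1 t}$ stays below a small constant, the deviation from the linear solution is $O(\varepsilon^2e^{2\kappa s_1t})$. This gives the ``small enough time'' window $t\lesssim \log(1/\varepsilon)/(\kappa s_1)$.

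\textbf{Main obstacle.} The hard step is making the approximation meaningful for a non-leading mode $i$. Its signal is of size $\varepsilon e^{\kappa s_i t}$, while the error is driven by the top mode at size $\varepsilon^2e^{2\kappa s_1 t}$. The claim is only informative when $\varepsilon^2 e^{2\kappa s_1t}\ll |c^+_{ji}|e^{\kappa s_it}$. I would state the bound componentwise and require $c^+_{ji}$ of order $\varepsilon$, for example via random Gaussian initialization with high probability. The window then shrinks to $t\le(1-\eta)\log(1/\varepsilon)/(\kappa(2s_1-s_i))$.

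A secondary difficulty is controlling $\widehat f$ to bound the $\Delta-y$ term. Since the surrounding layers are fixed and small, $\widehat f$ depends linearly on $\boldsymbol\omega$ through $\mathbf g^{\ell+1}$ times $\sigma(h_\ell)$, so it is quadratically small in the block weights and can be absorbed into the same bootstrap.
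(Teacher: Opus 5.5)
Your solution of the linearized system is exactly the paper's Lemma~\ref{lem:pair-linear-modes}, and your Duhamel-plus-bootstrap skeleton matches its final argument. The error analysis, however, has a gap in its scale bookkeeping. This is precisely what the paper's Lemma~\ref{lem:pair-upper-stability} and condition \eqref{eq:pair-correlation-scale} supply, and there are three problems.

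\emph{The output is not controlled by the pair.} $\widehat f$ is not quadratically small in the pair's weights. With the pair at zero, the output is $f(\mathbf h_{\ell+1,0})$, produced by the other neurons of layer $\ell$ and independent of $\boldsymbol\omega_{\ell+1,j}$. With order-one surroundings this term is generically of order one, so the replacement $\Delta\to y$ fails. The amplified modes are then those of $\widetilde{\mathbf A}_\ell$ built from $y-f(\mathbf h_{\ell+1,0})$, not those of $\mathbf A_\ell$. In the paper it is small because every path to the output crosses the small weights $\mathbf W_{\ell+2},\dots,\mathbf W_L,\mathbf a_L$, so $|f|\le C\prod_{k=\ell+2}^{L+1}\alpha_k$.

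\emph{The remainder constants vanish with the growth rate.} The same factors multiply $\mathbf g^{\ell+1}$. Each remainder term is linear in $\mathbf g^{\ell+1}$, so the rate $\kappa s_1$ and every remainder constant $K$ vanish together. Your Duhamel integral really gives $E(t)\lesssim (K/\kappa s_1)\,\varepsilon^2e^{2\kappa s_1t}$. The ratio $K/\kappa s_1\sim\|\mathbf g^{\ell+1}\|/s_1$ stays bounded only if averaging against $y$ and $\mathbf z_{\ell-1}$ does not cancel the leading correlation, which is exactly \eqref{eq:pair-correlation-scale}. With it, $\|\mathbf g^{\ell+1}\|,|f|\le C\lambda$ with $\lambda=\kappa s_1$, and the hierarchy $\alpha_{k+1}=o(\alpha_k)$ gives $\lambda=o(\alpha_\ell)$. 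That is what makes the residual term $|f|\,\|\mathbf g^{\ell+1}\|\rho\le C\lambda^2\rho$ negligible; over your window its relative effect is $O(\lambda\log(1/\varepsilon))$.

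\emph{The sensitivity is not frozen.} The pair shifts $\mathbf h_{\ell+1}$ by $O(\rho^2)$, so the sensitivity moves by $O(\lambda\rho^2)$. Accordingly, $\mathbf A_\ell$ must be built from the sensitivity at $\mathbf h_{\ell+1,0}$. This cubic term is harmless but belongs in your remainder list.

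Alternatively, you could make $\mathbf h_{\ell+1,0}$ itself vanish with $\varepsilon$, so that $f(\mathbf h_{\ell+1,0})$ is small while $\mathbf g^{\ell+1}$ stays of order one. That would have to be assumed explicitly, and it is not the paper's setting.

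Once repaired, your route differs from the paper's mainly in the time window. The paper stops at $\lambda T\le M$. There the linear propagator is bounded by $C_M$, the weights stay $O(\alpha_\ell)$, and the errors are $O(\alpha_\ell^2)$ uniformly over all modes, so your non-leading-mode obstacle never arises. The price is that the exponentials in \eqref{eq:informal-paired-amplification} change only by bounded factors.

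Your escape-time window $t\lesssim\log(1/\varepsilon)/\lambda$ captures genuine amplification out of the initialization scale. As you correctly diagnose, though, the error is driven by the top mode. The statement for mode $i$ therefore holds only on the shorter, mode-dependent window $t\le(1-\eta)\log(1/\varepsilon)/(\kappa(2s_1-s_i))$, and it needs $c^+_{ji}$ of order $\varepsilon$.
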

Collecting the growing amplitudes of the linear approximation
\eqref{eq:main-coupled-residual} over the neurons of layer $\ell$,
$\mathbf q_i=(c_{ji}^{+})_j$, the growing parts have the rank-one directions
$\mathbf q_i\mathbf v_i^{\top}$ in $\mathbf W_\ell$ and
$\mathbf u_i\mathbf q_i^{\top}$ in $\mathbf W_{\ell+1}$, which we call a \emph{column spike}.
The same coefficients
$\mathbf q_i$ specify which intermediate neurons learn the feature and
which columns transmit it.

\paragraph{Neural LoFi with backward coupling ---}
\label{sec:combined}
Our algorithm, BackLoFi (Algorithm~\ref{alg:combined}), is motivated by this result and uses these singular
modes to construct a spectral surrogate for cross-layer feature learning.
On a fitted network, using labels in place of residuals is a design
choice; the theorem justifies this replacement near initialization.

The starting point is to use a {\it first pass} with forward Neural LoFi \citep{dandi_neural_lofi}
(Algorithm~\ref{alg:neural-lofi}), which builds the network one layer at a time
from the input up. Given
$\mathbf z_{\ell-1}$, a filter
$\mathbf F_\ell\in\mathbb R^{p_{\ell-1}\times k_\ell}$ collects the
$k_\ell$ directions amplified in \eqref{eq:main-lofi-limit}: the leading label weighted moments.
A Gaussian lift $\mathbf R_\ell\in\mathbb R^{p_\ell\times k_\ell}$ then distributes them over the $p_\ell$ neurons,
\begin{equation}
 \mathbf W_\ell=\mathbf R_\ell\mathbf F_\ell^{\top}.
 \label{eq:main-forward-lofi}
\end{equation}

\begin{algorithm}[t]
\caption{Forward Neural Low-Degree Filtering \citep{dandi_neural_lofi}}
\label{alg:neural-lofi}
\begin{algorithmic}[1]
\setlength{\itemsep}{1pt}
\Require Data $\{(\mathbf x_\mu,y_\mu)\}_{\mu=1}^n$, depth $L$, ranks $k_\ell$,
 widths $p_\ell$; $\mathbf z_0=\mathbf x$, $p_0=d$.
\For{$\ell=1,\ldots,L$}
 \State $\widehat{\mathbf m}_\ell\gets\Etrain[y\,\mathbf z_{\ell-1}]$,\;
  $\widehat{\mathbf C}_\ell\gets\Etrain[y\,\mathbf z_{\ell-1}\mathbf z_{\ell-1}^{\top}]$
  \Comment{label-weighted moments}
 \State $\mathbf F_\ell\gets[\,\widehat{\mathbf m}_\ell/\|\widehat{\mathbf m}_\ell\|\mid$
  leading $k_\ell-1$ eigenvectors of $\widehat{\mathbf C}_\ell$ by $|\lambda|\,]$
  \Comment{filter}
 \State $\mathbf z_\ell\gets\sigma(\mathbf R_\ell\mathbf F_\ell^{\top}\mathbf z_{\ell-1})/\sqrt{p_\ell}$,\;
  $\mathbf R_\ell\in\R^{p_\ell\times k_\ell}$ Gaussian
  \Comment{random lift}
\EndFor
\State Fit the readout $(\mathbf a_L,b)$ on $\mathbf z_L$;\;
 \Return $\widehat f(\mathbf x)=\langle\mathbf a_L,\mathbf z_L(\mathbf x)\rangle+b$
\end{algorithmic}
\end{algorithm}

\begin{algorithm}[t]
\caption{Neural Low-Degree Filtering with Backward Coupling (BackLoFi)}
\label{alg:combined}
\begin{algorithmic}[1]
\setlength{\itemsep}{1pt}
\Require Forward Neural LoFi network (Algorithm~\ref{alg:neural-lofi})
 $\mathbf W_\ell=\mathbf R_\ell\mathbf F_\ell^{\top}$, $\ell=1,\dots,L$,
 with readout $(\mathbf a_L,b)$; spike ranks $r_\ell$, scale $\beta>0$,
 passes $P$.
\For{$t=1,\ldots,P$}\label{line:pass}
 \State\label{line:top} Compute $\mathbf h_\ell,\mathbf z_\ell$ for all $\ell$ by
  \eqref{eq:main-network};\;
  $\mathbf g^L\gets\frac{\sigma'(\mathbf h_L)}{\sqrt{p_L}}\odot\mathbf a_L$
  \Comment{exact top sensitivity}
 \For{$\ell=L-1,\ldots,1$} \Comment{boundaries, top-down}
  \State\label{line:svd} $(\mathbf U_\ell,\mathbf V_\ell)\gets$ leading $r_\ell$
   singular pairs of $\mathbf A_\ell\gets\Etrain[y\,\mathbf g^{\ell+1}\mathbf z_{\ell-1}^{\top}]$
  \State\label{line:scores} $\mathbf s_\ell\gets\mathbf U_\ell^{\top}\mathbf g^{\ell+1}$
   \Comment{scores}
  \State\label{line:q0} $\mathbf Q^{0}_\ell\gets\mathbf W_\ell\mathbf V_\ell$
   \Comment{spike columns at order zero}
  \State\label{line:rows} $\mathbf F_\ell\gets[\mathbf F_\ell\mid
   \mathbf V_\ell\boldsymbol\Gamma^{\rm row}_\ell]$,\;
   $\mathbf R_\ell\gets[\mathbf R_\ell\mid\mathbf R^{\leftarrow}_\ell]$
   \Comment{rows of layer $\ell$}
  \State\label{line:recompute} Recompute $\mathbf h_\ell,\mathbf z_\ell$
   \Comment{features of the corrected rows}
  \State\label{line:q} $\mathbf Q_\ell\gets\Etrain[\mathbf z_\ell(y\mathbf s_\ell)^{\top}]$
   \Comment{correlations with the modified labels}
  \State\label{line:cols} $\mathbf F_{\ell+1}\gets[\mathbf F_{\ell+1}\mid
   \mathbf Q_\ell\boldsymbol\Gamma^{\rm col}_\ell]$,\;
   $\mathbf R_{\ell+1}\gets[\mathbf R_{\ell+1}\mid\mathbf U_\ell]$
   \Comment{columns of layer $\ell+1$}
  \State\label{line:signal} $\mathbf g^{\ell}\gets\frac{\sigma'(\mathbf h_\ell)}{\sqrt{p_\ell}}\odot\bigl(
   \Etrain[\mathbf W_{\ell+1}^{\top}\mathbf g^{\ell+1}]
   +\mathbf Q^{0}_\ell\mathbf s_\ell\bigr)$
   \Comment{first-order signal \eqref{eq:main-first-order}}
 \EndFor
 \State\label{line:refit} Recompute $\mathbf z_L$;\; refit $(\mathbf a_L,b)$
\EndFor
\State \Return $\widehat f(\mathbf x)=\langle\mathbf a_L,\mathbf z_L(\mathbf x)\rangle+b$
\end{algorithmic}
\end{algorithm}

After the forward pass, a backward pass visits boundaries from the top down, then refits
the readout. Boundary $\ell$ couples the rows of $\mathbf W_\ell$ to
the columns of $\mathbf W_{\ell+1}$. It receives the sensitivity updated
above and reads the representation below, which has not yet changed
during the pass.

First, retain the $r_\ell$ leading singular pairs of $\mathbf A_\ell$:
$\mathbf V_\ell=[\mathbf v_i]$ selects lower features and
$\mathbf U_\ell=[\mathbf u_i]$ selects upper responses
(line~\ref{line:svd}). These modes and responses are frozen during the
boundary update and recomputed at the next pass.

Second, install the lower directions with fresh Gaussian lift columns
$\mathbf R_\ell^{\leftarrow}$. Theorem~\ref{thm:controlled-feedback}
motivates coupling their upward transmission to the same intermediate
neurons. The algorithm estimates this transmission from the corrected
activations. After the row update, recompute $\mathbf z_\ell$ and set
\begin{equation}
 \mathbf s_\ell=\mathbf U_\ell^\top\mathbf g^{\ell+1},
 \qquad \widetilde{\mathbf y}_\ell=y\,\mathbf s_\ell,
 \qquad
 \mathbf Q_\ell=\Etrain[\mathbf z_\ell\widetilde{\mathbf y}_\ell^\top].
 \label{eq:main-exact-scores}
\end{equation}
Each column of $\mathbf Q_\ell$ is a first moment under a modified label.
Equivalently, $\mathbf U_\ell\mathbf Q_\ell^\top$ is the label-weighted
column gradient projected onto the retained upper directions. Computing
$\mathbf Q_\ell$ after the row step includes features absent from the
forward filter and uses the corrected nonlinear activations without
linearizing them.
The two updates are
\begin{equation}
 \mathbf W_\ell\leftarrow\mathbf W_\ell
   +\mathbf R^{\leftarrow}_\ell\boldsymbol\Gamma^{\rm row}_\ell\mathbf V_\ell^{\top},
 \qquad
 \mathbf W_{\ell+1}\leftarrow\mathbf W_{\ell+1}
   +\mathbf U_\ell\boldsymbol\Gamma^{\rm col}_\ell\mathbf Q_\ell^{\top},
 \label{eq:main-coupled-steps}
\end{equation}
implemented sequentially at lines~\ref{line:rows}--\ref{line:cols}.
The new directions must be added \emph{perturbatively} so that the modes of $\mathbf A_\ell$, selected on the current network, remain informative after the update.
Since $\mathbf W_\ell=\mathbf R_\ell\mathbf F_\ell^\top$, this can be done either by adding the small low-rank terms of
\eqref{eq:main-coupled-steps} or, equivalently, by appending columns to the filter and the lift.
The positive diagonal matrices $\boldsymbol\Gamma^{\rm row}_\ell$ and
$\boldsymbol\Gamma^{\rm col}_\ell$ normalize correction amplitudes using
a common scale $\beta$ (Appendix~\ref{app:algorithm}).
Fresh row lifts and separate normalization modify the theorem's paired
gradient flow while preserving coupling through the activations.

\begin{figure}[t]
 \centering
 \begin{minipage}[c]{.47\linewidth}
  \centering
  \begin{tikzpicture}[x=1cm,y=1cm,
  font=\normalfont\normalsize,
  >={Latex[length=3pt,width=2.7pt]},
  line cap=round,line join=round]
\definecolor{bcink}{HTML}{34495E}
\definecolor{bcgray}{HTML}{B5BEC5}
\definecolor{bcrow}{HTML}{247B9B}
\definecolor{bccol}{HTML}{CC6B50}
\tikzset{
  bcpoint/.style={circle,draw=bcgray,fill=white,line width=.4pt,
    minimum size=3.8pt,inner sep=0pt},
  bccontext/.style={draw=bcgray!60,line width=.24pt}
}

\foreach \k in {1,...,7} {
  \pgfmathsetmacro\yy{3.47-(\k-1)*.32}
  \coordinate (bcpre\k) at (1.3,\yy);
  \coordinate (bcmid\k) at (3.15,\yy);
  \coordinate (bcpost\k) at (5.0,\yy);
}
\foreach \k in {1,...,9} {
  \pgfmathsetmacro\yy{3.79-(\k-1)*.32}
  \coordinate (bcearly\k) at (.16,\yy);
}
\foreach \k/\yy in {1/3.15,2/2.51,3/1.87}
  \coordinate (bclate\k) at (6.14,\yy);

\foreach \i in {1,...,9}
  \foreach \j in {1,...,7}
    \draw[bccontext] (bcearly\i) -- (bcpre\j);
\foreach \i in {1,...,7} {
  \foreach \j in {1,...,7} {
    \draw[bccontext] (bcpre\i) -- (bcmid\j);
    \draw[bccontext] (bcmid\i) -- (bcpost\j);
  }
  \foreach \j in {1,...,3}
    \draw[bccontext] (bcpost\i) -- (bclate\j);
}
\foreach \k in {1,...,7} {
  \draw[bcrow,line width=.95pt] (bcpre\k) -- (bcmid4);
  \draw[bccol,line width=.95pt] (bcmid4) -- (bcpost\k);
}
\foreach \k in {1,...,9} \node[bcpoint] at (bcearly\k) {};
\foreach \k in {1,...,3} \node[bcpoint] at (bclate\k) {};
\foreach \k in {1,...,7} {
  \node[bcpoint] at (bcpre\k) {};
  \node[bcpoint] at (bcpost\k) {};
}
\foreach \k in {1,2,3,5,6,7} \node[bcpoint] at (bcmid\k) {};
\node[circle,draw=bcink,fill=bcink,line width=.4pt,minimum size=13pt,
  inner sep=0pt,text=white] at (bcmid4) {$j$};

\node[text=bcink,inner xsep=3pt,inner ysep=0pt] (bcfeedback)
  at (3.15,4.01) {backward feedback};
\draw[bcink!80,dashed,line width=.6pt,rounded corners=3pt]
  (5.0,3.64) -- (5.0,4.01) -- (bcfeedback.east);
\draw[bcink!80,dashed,line width=.6pt,->,rounded corners=3pt]
  (bcfeedback.west) -- (1.3,4.01) -- (1.3,3.64);

\begin{scope}[yshift=5.5mm]
\node[text=bcrow,anchor=east,align=right,inner sep=1pt] at (1.65,.26) {incoming\\row};
\node[text=bccol,anchor=west,align=left,inner sep=1pt] at (4.65,.26) {outgoing\\column};

\foreach \v [count=\k] in {.35,.6,.9,1,.8,.5,.3} {
  \pgfmathtruncatemacro\shade{100*\v}
  \fill[bcrow!\shade] ({1.805+(\k-1)*.12},.2) rectangle ++(.12,.12);
}
\foreach \u [count=\k] in {.8,1,.65,.4,.75,.5,.3} {
  \pgfmathtruncatemacro\shade{100*\u}
  \fill[bccol!\shade] (4.015,{-.16+(7-\k)*.12}) rectangle ++(.12,.12);
}
\draw[bcrow!30,line width=.25pt,step=.12,shift={(1.805,-.16)}] (0,0) grid (.84,.84);
\draw[bccol!30,line width=.25pt,step=.12,shift={(3.655,-.16)}] (0,0) grid (.84,.84);
\draw[bcrow,line width=.6pt] (1.805,.2) rectangle ++(.84,.12);
\draw[bccol,line width=.6pt] (4.015,-.16) rectangle ++(.12,.84);
\node[text=bcink,align=center,inner sep=0pt] at (3.15,.26) {paired\\spikes};
\end{scope}
\end{tikzpicture}
 \end{minipage}\hfill
 \begin{minipage}[c]{.50\linewidth}
  \caption{\textbf{Backward coupling and paired spikes.}
  The incoming row (blue) and outgoing column (red) meet at neuron $j$. Upper-layer sensitivity changes
  the feature learned below, whose activation changes upward transmission.
  Growing modes
  form paired rank-one spikes with shared neuron coefficients. The small
  matrices illustrate a mode concentrated on neuron $j$: an incoming row
  and its matching outgoing column.}
  \label{fig:backward-coupling-mechanism}
 \end{minipage}
\end{figure}
Finally, propagate the sensitivity to the boundary below.
At the top, $\mathbf g^L$ is exact.
Below, the exact
sensitivity $\mathbf g^{\ell}$ applies the chain rule
\eqref{eq:main-backward-signals} through the corrected $\mathbf W_{\ell+1}$. A local estimate is also
appealing. Since the column update \eqref{eq:main-coupled-steps} adds
$\mathbf U_\ell\boldsymbol\Gamma^{\rm col}_\ell\mathbf Q_\ell^\top$ to
$\mathbf W_{\ell+1}$, the new term to product $\mathbf W_{\ell+1}^\top\mathbf g^{\ell+1}$ is
$\mathbf Q_\ell\boldsymbol\Gamma^{\rm col}_\ell\mathbf s_\ell$: of the upper
sensitivity, the new columns transmit only the scores. The first-order
estimate keeps this term, with its loading replaced by a linearized estimate
$\mathbf Q^0_\ell$ from the pre-correction rows, and replaces the pre-existing
contribution by its training mean:
\begin{equation}
 \mathbf g^{\ell}=\frac{\sigma'(\mathbf h_\ell)}{\sqrt{p_\ell}}\odot\Bigl(
 \Etrain[\mathbf W_{\ell+1}^{\top}\mathbf g^{\ell+1}]
 +\mathbf Q^{0}_\ell\,\mathbf s_\ell\Bigr),
 \qquad
 \mathbf Q^{0}_\ell\propto\mathbf W_\ell\mathbf V_\ell .
 \label{eq:main-first-order}
\end{equation}
The matrices in this estimate are saved at the start of the pass;
the lower gate uses the corrected activations. Each column of
$\mathbf Q_\ell^0$ is rescaled to match the installed column's coordinate
RMS. Both the mean substitution and the pre-correction loading are
additional approximations. Appendix~\ref{app:algorithm} specifies the
normalization, the derivation, and the alternative signals. Subsequent passes repeat the sweep on the
corrected network.

\paragraph{Over the rainbow ---} A first consequence is that this provide a concrete mechanism for the rainbow network of
\citet{guth2024rainbow}.
First, a forward Neural Lofi network {\it is} a rainbow network albeit a simple one. Indeed each layer is a linear dimension reduction
$\mathbf F_\ell$ followed by white random features $\mathbf R_\ell$, its weight
covariance $\mathbf F_\ell\mathbf F_\ell^{\top}$ has low rank, and the
dependence between layers passes only through the filters, the random lifts
being independent.

In their scattering-network experiments,
\citet[Fig.~10]{guth2024rainbow} observe amplification of selected covariance
directions that largely preserves the initial neuron configuration.
Theorem~\ref{thm:controlled-feedback} gives a local mechanism for this
behavior: its growing modes amplify fixed, initialization-dependent neuron
patterns $\mathbf q_i$.
Backward coupling then goes beyond this picture: Shared coefficients link the
weight covariances of consecutive layers: if their covariance is
$\mathbf C_q$, the selected mode matrices $\mathbf V_\ell$ and $\mathbf U_\ell$
produce $\mathbf V_\ell\mathbf C_q\mathbf V_\ell^\top$ in incoming rows and
$\mathbf U_\ell\mathbf C_q\mathbf U_\ell^\top$ in outgoing columns. They also
couple the weights themselves, which the rainbow model takes to be independent
across layers up to a rotation: the column of $\mathbf W_{\ell+1}$ that reads
neuron $j$ carries the coefficient $q_{ji}$ tied to its row, and the lift of
layer $\ell+1$ acquires the deterministic columns $\mathbf u_i$ next to its
random ones.  In this picture it is joint training that {\it
paints the rainbow}: the layers above choose which covariance a layer grows,
the growth starts from the initialization and keeps it, and the covariances
of consecutive layers come out tied to each other through the neurons they
share.

\paragraph{Coupled modes in linear networks ---}
\label{sec:related-work}
The coupling has an exact counterpart in the one-hidden-layer
{\it linear} network $f(x)=\mathbf W_2\mathbf W_1x$, studied by
\citet[Sections~1.3 and~2]{saxe2014exact}. For unnormalized linear activations, whitened inputs
and squared-loss gradient flow, aligned positive modes of the
input--target correlation $\Etrain[yx^\top]=\sum_i s_i u_i v_i^\top$ obey
\begin{equation}
 \alpha_i:=\|\mathbf W_1v_i\|,\quad
 \beta_i:=\|\mathbf W_2^\top u_i\|,\qquad
 \dot\alpha_i=\beta_i(s_i-\alpha_i\beta_i),\quad
 \dot\beta_i=\alpha_i(s_i-\alpha_i\beta_i).
 \label{eq:main-linear-modes}
\end{equation}
The growing components scale as $e^{s_it}$ while
$\alpha_i\beta_i\ll s_i$; their product then saturates at $s_i$.
For deeper networks with aligned singular directions, each
layer's update contains the product of the mode amplitudes in all
other layers. With $D$ weight matrices and equal positive amplitudes
across layers, the end-to-end mode strength $m_i$ satisfies
\begin{equation}
 \dot m_i=D\,m_i^{\,2-2/D}(s_i-m_i).
 \label{eq:main-deep-linear-mode}
\end{equation}
For $D=2$, this gives exponential growth at small $m_i$, followed by
saturation. For $D>2$, the additional powers of $m_i$ change the
early amplification law. Appendix~\ref{sec:linear-dynamics} derives
both cases.
Theorem~\ref{thm:controlled-feedback} identifies the local counterpart
for nonlinear networks: a small adjacent row and column amplify modes
of a cross-layer correlation matrix determined by the surrounding features.
With scalar targets the linear correlation has rank at most one;
input-dependent nonlinear sensitivities can generate multiple cross-layer modes.
Backward coupling thus lets the response learned above alter
feature selection below, while the corrected activations determine
transmission upward. Section~\ref{sec:lessons} characterizes this selection
in function space and follows the resulting kernel evolution.

\section{Lessons from learning with backward coupling}
\label{sec:lessons}
Backward coupling has a function-space description parallel to, but richer than, Neural LoFi:
a selection principle at fixed representation, followed by an evolution of
the representation kernels.

\paragraph{A relevance--complexity trade-off for pairs ---}
Neural LoFi selects a feature whose square correlates maximally with the label,
$\max_{\|\varphi\|_{\mathcal H_{\ell-1}}=1}|\Etrain[y\varphi^2]|$
\citep{dandi_neural_lofi}. Backward coupling instead selects a pair: let
$K_{\ell-1}(\mathbf x,\mathbf x')=\langle\mathbf z_{\ell-1}(\mathbf x),
\mathbf z_{\ell-1}(\mathbf x')\rangle$ and
$G_{\ell+1}(\mathbf x,\mathbf x')=\langle\mathbf g^{\ell+1}(\mathbf x),
\mathbf g^{\ell+1}(\mathbf x')\rangle$, with RKHSs
$\mathcal H_{\ell-1}$ and $\mathcal H^g_{\ell+1}$.
The leading singular pair of $\mathbf A_\ell$ gives
$\psi=\langle\mathbf u,\mathbf g^{\ell+1}\rangle$ and
$\chi=\langle\mathbf v,\mathbf z_{\ell-1}\rangle$ solving
\begin{equation}
 s_1(\mathbf A_\ell) =
 \max_{
    \|\psi\|_{\mathcal H^g_{\ell+1}}\leq1,\,
        \|\chi\|_{\mathcal H_{\ell-1}}\leq1
 }
 \Etrain[y\,\psi\,\chi].
 \label{eq:variational-feature-pair}
\end{equation}
The RKHS constraints favor features that are simple to express in each
layer's geometry. Within these constraints,
\eqref{eq:variational-feature-pair} selects a lower feature $\chi$ and
an upper response $\psi$ whose product is maximally correlated with $y$;
neither feature needs to be predictive on its own. The paired row and
column updates learn the lower cue in the same neurons that transmit
the upper response, allowing the cue to modulate where that response
is expressed.
For example, a fur-texture cue could be coupled to an
upper response associated with an eye or ear detector, strengthening
that response where fur is present when their conjunction is informative
about the label. This selective activation refines where higher-level
features are expressed, and thereby the geometry of the representation.

\paragraph{Exact theory for Kernel dynamics with three layers ---}
\label{sec:variational-coupling}
\label{sec:kernel-evolution}
Joint training changes both the representations and the sensitivities
that guide feature selection. Theorem~\ref{thm:three-layer-closed-flow}
describes their coupled evolution in a three-layer network and its
sequential infinite-width limit for a continuous version of backward
coupling. In this version, the same selected lower perturbation is added
to every first-layer filtered coordinate, forward Neural LoFi updates the second-layer
features as the lower kernel changes, and gradient flow trains the
final readout. The row correction, column correction and readout
gradient update all use the same step size $dt$ before taking the
continuous-time limit. Appendix~\ref{app:three-forward-flow} gives the
algorithm, initialization and assumptions.

We assume that a forward Neural LoFi pass has already filtered the
features at layers $1$ and $2$ to fixed dimensions $k_1$ and $k_2$, respectively.
During the subsequent evolution, these filtered features are
$\mathbf h_{1,t}^\star(x)=\mathbf F_{1,t}^\top\mathbf z_{0,t}(x)\in\mathbb R^{k_1}$ and
$\mathbf h_{2,t}^\star(x)=\mathbf F_{2,t}^\top\mathbf z_{1,t}(x)\in\mathbb R^{k_2}$.
Let $\mathbf r\sim\mathcal N(0,I_{k_1})$ and
$\eta\sim\mathcal N(0,I_{k_2})$ be independent Gaussian vectors that
specify random projections of the filtered coordinates in the first
and second hidden layers, respectively. At finite widths $p_1,p_2$,
the first-layer lift consists of $p_1$ i.i.d.\ draws
$\mathbf r_1,\ldots,\mathbf r_{p_1}$, and the second-layer lift of
$p_2$ i.i.d.\ draws $\eta_1,\ldots,\eta_{p_2}$, sampled independently
of the first collection. Each draw indexes one neuron and remains fixed
while the filtered features evolve.

After taking the continuous-time limit, we let $p_1\to\infty$ and then
$p_2\to\infty$. Averages over the sampled neurons become Gaussian
expectations over $\mathbf r$ and $\eta$. We use these vectors to index
lower and upper neurons, respectively. To retain the column updates
added during training, we define the coupling function $C_t(\eta,\mathbf r)$ as their total
contribution from lower neuron $\mathbf r$ to upper neuron $\eta$,
with $C_0=0$. Its action on the current lower activations gives the
correction to the upper preactivation. The preactivations $h_{1,t},h_{2,t}$
and activations $z_{1,t},z_{2,t}$ are therefore
\begin{align}
 h_{1,t}(\mathbf r,x)&=\mathbf r^\top\mathbf h_{1,t}^\star(x),\qquad
 z_{j,t}=\sigma(h_{j,t}),\nonumber\\
 h_{2,t}(\eta,x)&=\eta^\top\mathbf h_{2,t}^\star(x)
       +\mathbb E_{\mathbf r}[C_t(\eta,\mathbf r)z_{1,t}(\mathbf r,x)].
 \label{eq:main-gaussian-features}
\end{align}

Let $w_{3,t}(\eta)$ be the readout coefficient of upper neuron $\eta$.
We further define a response function $S_t(\eta,s)$ that records this neuron's contribution
to the output sensitivity at a preactivation $s\in\mathbb R$.
This response and the scalar output $h_{3,t}$ are given by
\[
 S_t(\eta,s)=w_{3,t}(\eta)\sigma'(s),\qquad
 h_{3,t}(x)=\mathbb E_\eta[w_{3,t}(\eta)z_{2,t}(\eta,x)].
\]
Evaluating the response at the current preactivation gives
$g_t^2(\eta,x)=S_t(\eta,h_{2,t}(\eta,x))$.
The feature kernels $K_{1,t},K_{2,t}$ and sensitivity kernel $G_t$
average products at $x,x'$ over the same Gaussian lift row:
\begin{align*}
 K_{1,t}(x,x')&=\mathbb E_{\mathbf r}
       [z_{1,t}(\mathbf r,x)z_{1,t}(\mathbf r,x')],\\
 K_{2,t}(x,x')&=\mathbb E_\eta[z_{2,t}(\eta,x)z_{2,t}(\eta,x')],\qquad
 G_t(x,x')=\mathbb E_\eta[g_t^2(\eta,x)g_t^2(\eta,x')].
\end{align*}
The limiting evolution closes in the filtered features and scalar
output together with the functions $C_t(\eta,\mathbf r)$ and $S_t(\eta,s)$
defined above.
\begin{samepage}
\begin{theorem}[Informal: coupled dynamics in three layers]
\label{thm:three-layer-closed-flow}
Under the assumptions of Appendix~\ref{app:three-forward-flow}, the discrete
updates converge under the sequential limits $dt\to0$, $p_1\to\infty$ and
$p_2\to\infty$ to the following set of self-consistent equations, defined
over the training inputs. At each time $t$, let $(\psi_t,\chi_t)$ denote the maximizer of
\eqref{eq:variational-feature-pair} for the current sensitivity kernel $G_t$
and the input kernel $K_0$, with value $\rho_t>0$.
The filtered features, coupling and response satisfy
\begin{align}
 \dot h_{1,t,a}^\star
 &=\chi_t\quad(a=1,\ldots,k_1),\qquad
 \mathbf h_{2,t}^\star=\Fwd_2(K_{1,t},y),
 \label{eq:main-coupled-filter-flow}\\
 \partial_t C_t(\eta,\mathbf r)
 &=\frac{1}{\rho_t}
   \Etrain[y\chi_t(X)g_t^2(\eta,X)]
   \Etrain[y\psi_t(X)z_{1,t}(\mathbf r,X)],\nonumber\\
 \partial_t S_t(\eta,s)
 &=\sigma'(s)\Etrain[(y-h_{3,t}(X))z_{2,t}(\eta,X)],
 \label{eq:main-coupled-response-flow}
\end{align}
where $\Fwd_2(K_{1,t},y)$ gives the forward Neural LoFi features at
layer 2 from the current kernel and labels. The output satisfies
\begin{align}
 \dot h_{3,t}(x)
 &=\Etrain[K_{2,t}(x,X)(y-h_{3,t}(X))]
 +\mathbb E_\eta[g_t^2(\eta,x)\eta]^\top\dot{\mathbf h}_{2,t}^{\star}(x)
 \nonumber\\
 &\quad+\psi_t(x)\Etrain[K_{1,t}(x,X)y\psi_t(X)]
 \nonumber\\
 &\quad+\mathbb E_{\eta,\mathbf r}
       [g_t^2(\eta,x)C_t(\eta,\mathbf r)\dot z_{1,t}(\mathbf r,x)].
 \label{eq:main-three-informal-flow}
\end{align}
\end{theorem}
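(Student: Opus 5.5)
The plan is to treat the result as a limit theorem for a discrete, finite-width algorithm. We write the one-step increments of every tracked object, identify their drift at order $dt$, and then pass to the limits in the order $dt\to0$, $p_1\to\infty$, $p_2\to\infty$. The tracked objects are the filtered features $\mathbf h^\star_{1},\mathbf h^\star_{2}$, the accumulated column corrections, the readout coefficients, and the output $h_3$.

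\emph{Step 1: the lower and upper updates at fixed width.} At a step, the selected mode $(\mathbf u,\mathbf v)$ of $\mathbf A_1=\Etrain[y\,\mathbf g^{2}\mathbf z_0^\top]$ corresponds, by \eqref{eq:variational-feature-pair}, to the pair $\psi=\langle\mathbf u,\mathbf g^2\rangle$, $\chi=\langle\mathbf v,\mathbf z_0\rangle$ with value $\rho=s_1(\mathbf A_1)$.
\begin{itemize}
\item Row update. Adding $dt\,\chi$ to every first-layer filtered coordinate gives the first equation of \eqref{eq:main-coupled-filter-flow} directly. Because the second-layer features are recomputed by forward LoFi from the current $K_{1,t}$, we obtain $\mathbf h^\star_{2,t}=\Fwd_2(K_{1,t},y)$. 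Here we need $\Fwd_2$ to be differentiable in the kernel, which requires an eigengap assumption.
\item Column update. This is $dt\,\mathbf u\,\mathbf Q^\top/\|\cdot\|$ with $\mathbf Q=\Etrain[\mathbf z_1\,y\,\psi]$. Its $(\eta,\mathbf r)$ entry is $u(\eta)\,\Etrain[y\psi z_1(\mathbf r,X)]$. Using the singular-vector relation $\mathbf u=\mathbf A_1\mathbf v/\rho$, we get $u(\eta)=\Etrain[y\chi\, g^2(\eta,X)]/\rho$, which is exactly the stated $\partial_tC_t$. We will need to check that the normalization $\Gamma^{\rm col}$ matches the $1/\rho$ in this limit.
\item Readout update. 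Gradient flow on $w_3$ gives $\dot w_3(\eta)=\Etrain[(y-h_3)z_2(\eta,X)]$. Since $S_t=w_{3,t}\sigma'$, this yields the $S_t$ equation.
\end{itemize}

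\emph{Step 2: the output equation.} We differentiate $h_{3,t}(x)=\mathbb E_\eta[w_3 z_2]$ by the chain rule, which produces four terms.
\begin{itemize}
\item $\dot w_3$ contributes $\Etrain[K_2(x,X)(y-h_3)]$.
\item $\dot h_2$ acting through $\eta^\top\dot{\mathbf h}^\star_2$ contributes the $\mathbb E_\eta[g^2\eta]^\top\dot{\mathbf h}^\star_2$ term.
\item $\partial_tC$ acting on $z_1$ contributes $\mathbb E_{\eta,\mathbf r}[g^2(\eta,x)\,\partial_tC\, z_1(\mathbf r,x)]$. Substituting Step 1 and summing over $\eta$ gives $\mathbb E_\eta[g^2(\eta,x)\Etrain[y\chi g^2(\eta,X)]]/\rho=\psi(x)$, again by the singular-vector relation. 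What remains is $\psi(x)\Etrain[K_1(x,X)y\psi(X)]$.
\item $C$ acting on $\dot z_1$ gives the last term.
\end{itemize}

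\emph{Step 3: the limits.} For $dt\to0$ at fixed width, the updates are Lipschitz in the state, so the standard Euler-scheme convergence argument applies. The exception is the spectral selection, which is only Lipschitz under a uniform gap $s_1-s_2>0$. We also need a sign convention making $(\psi_t,\chi_t)$ continuous in $t$, so we assume a gap and $\rho_t>0$ along the trajectory. For $p_1\to\infty$ and then $p_2\to\infty$, neuron averages are empirical means over i.i.d.\ lifts. Since the lifts are fixed in time, we can use a law-of-large-numbers argument uniform over $t\in[0,T]$, with Grönwall bounding the propagated errors. The state functions $C_t(\eta,\mathbf r)$ and $S_t$ depend on the neurons only through fixed lift draws and finitely many training-sample quantities, so propagation of chaos is trivial here. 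Each neuron's trajectory is a deterministic function of its own lift and of finitely many empirical averages.

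\emph{Main obstacle.} I expect the main difficulty to be uniform control of the spectral maps, namely the SVD of $\mathbf A_1$ and the $\Fwd_2$ eigenvectors, along the whole trajectory. This is needed both for the Euler limit and so that the finite-$p$ kernels $G,K_1$ converge to limits whose spectra keep the gaps. I would first try to obtain it by assuming gaps uniformly on $[0,T]$ for the limit system and using Davis--Kahan perturbation bounds. A bootstrap (continuity) argument would then show that the finite-width system stays in the gapped region with high probability.
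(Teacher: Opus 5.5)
Your Steps 1--2 match the paper's closure argument (Propositions~\ref{prop:three-accumulated-coupling} and~\ref{prop:three-response-closure}). This includes the $1/\rho_t$ from $u_t=A_t\mathbf v_t/\rho_t$, the identity $\langle u_t,g_t^2(x)\rangle=\psi_t(x)$ that produces the $\psi_t(x)\Etrain[K_{1,t}(x,X)y\psi_t(X)]$ term, and the product rule $\partial_t(C_tz_{1,t})=\dot C_tz_{1,t}+C_t\dot z_{1,t}$. Your $dt\to0$ step is the same Euler recursion $E_{k+1}\le(1+L_Tdt)E_k+C_Tdt^2$ under uniform gaps, and your exit-time bootstrap is the paper's.

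The genuine difference is the width limit. The paper does not use a law of large numbers over i.i.d.\ lifts. It builds quadratic optimal-transport maps $T_{1,p_1},T_{2,p_2}$ from the Gaussian laws onto the empirical lift distributions. It then extends each sampled neuron profile, together with $C_t$ and $S_t$, to be constant on cells of mass $1/p_j$. This puts the finite- and infinite-width flows in one space $\mathbb R^{nk_1}\times\operatorname{HS}(L^2(\nu),L^2(\mu))\times L^2(\mu;\mathcal D)$. There a single deterministic Gr\"onwall bound $\sup_{t\le T}E(t)\le C_T(E(0)+\delta_{1,p_1}+\delta_{2,p_2})$ closes the argument, with randomness entering only through $E(0)$ and the transport costs.

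Your synchronous coupling also works: evaluate the limiting $w_{3,t}(\eta)$ and $C_t(\eta,\mathbf r)=\int_0^t u_\tau(\eta)q_\tau(\mathbf r)\,d\tau$ at the sampled lifts and compare. It works because neurons interact only through finitely many training-set averages and the two-time overlaps $\langle q_s,z_{1,t}(\cdot,x)\rangle$. Bounded $\sigma,\sigma'$ also make every limiting kernel bounded and pointwise defined. It costs a uniform-in-$t$ LLN (time grid plus Lipschitz continuity in $t$), and you need the strong LLN to get the almost-sure claim rather than ``with high probability''. In return it gives $p^{-1/2}$-type rates up to logarithms. The transport route avoids function-class LLNs and treats $C_t$ as an operator, but its $\delta_{j,p_j}$ decays at a Wasserstein rate that degrades with $k_1,k_2$.

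\textbf{Corrections and missing pieces.} There is no normalization to reconcile in the column step. The analyzed variant (Algorithm~\ref{alg:three-continuous}) adds the raw increment $dt\,u\otimes q$, and the $1/\rho_t$ comes entirely from $u=A\mathbf v/\rho$, as you derived. An RMS-type $\boldsymbol\Gamma^{\rm col}_\ell$ would change the equation, so the check you defer would fail rather than succeed.

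Your comparison also presupposes a solution of the limit system on $[0,T]$. The paper proves local existence and uniqueness by contraction in the state space above, with Gr\"onwall keeping $w_{3,t}$ bounded.

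You further need a width-consistent initial readout. The paper takes the minimum-norm $w_{3,0}(\eta)=\Etrain[b_0(X)z_{2,0}(\eta,X)]$ realizing $h_{3,0}$, whose coefficients converge because the limiting initial kernel matrix is positive definite. Without such a choice, your claim that each neuron's trajectory is a function of its own lift already fails at $t=0$.

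Finally, differentiating $\mathbb E_\eta[w_3z_2]$ under the Gaussian expectation needs growth bounds such as $|\dot h_{2,t}(\eta,x)|\le C_T(1+\|\eta\|)$ and dominated convergence.
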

\end{samepage}

Theorem~\ref{thm:three-layer-closed-flow} extends the
layer-wise kernel constructions of forward Neural LoFi and AGOP-based
Deep Recursive Feature Machines
\citep{radhakrishnan2024mechanism,beaglehole2024deepcollapse,dandi_neural_lofi}
to a continuous evolution in which upper and lower representations
are {\it revised together}. The learned upper response changes which lower
feature is selected, learning that feature then changes the upper
kernel and the response itself. Thus the space in which features are
selected evolves with training, in contrast with selection of hierarchical features from a prescribed family \citep{bach2008hierarchical}.
Following this evolution requires only the current features, output,
accumulated coupling $C_t$ and response $S_t$. The equations are
autonomous in these quantities, with earlier column updates retained
in $C_t$, without the history integrals and two-time response functions
of the dynamical mean-field formulation of
\citet{bordelon2022selfconsistent}.

Because the corrections are selected by
\eqref{eq:variational-feature-pair}, this evolution can be studied
through the signal-to-noise ratio of correlations between layers.
An upper response can make a lower feature detectable through its
correlation with the modified label $y\psi_t$, even when the feature
had little correlation with $y$ alone. For the current kernels, an
informative leading population singular pair is reliably selected
when its gap dominates the empirical operator error
\citep{yu2015davis_kahan}. Controlling this error along the trajectory
would extend the emergence predictions of Neural LoFi
\citep{dandi_neural_lofi} from successive layers to joint learning,
using kernel concentration and spectral perturbation theory to
predict when an upper response enables a new lower feature to emerge.

\section{Mechanistic Numerical Experiments}
\label{sec:numerics}
We now illustrate the considerable advantages of backward feature correction with BackLofi over Lofi and layer-wise training in different situations.

\paragraph{Synthetic setting: The Deep Staircase ---}\label{sec:hierarchical-discovery}
We first consider a class of synthetic hierarchical models
\citep{nichani2024provable,wang2023learning,fu2025learning,
tabanelli2026deep,dandi_neural_lofi,wortsmanzurich2026scalinglaws},
adapted here to isolate the role of backward information.
The construction is also related to staircase models
\citep{NEURIPS2021_e2db7186,pmlr-v178-abbe22a,pmlr-v195-abbe23a,
JMLR:v25:23-1543,pmlr-v235-dandi24a,pmlr-v258-troiani25a,
tabanelli2025computational}, with one key difference: after the visible
features have been learned, the next useful directions become accessible
only through a backward signal. We refer to this mechanism as the
\emph{deep staircase}.

Specifically, we consider $n$ samples $(\mathbf x_\mu,y_\mu)$ with Gaussian
inputs $\mathbf x_\mu\sim\mathcal N(0,I_d)$ and scalar outputs
$y_\mu\in\R$, and define the sample exponent
$\alpha=\log n/\log d$.
Let $\Phi_2(\mathbf x)\in\R^{D_2(d)}$, with
$D_2(d)=d(d+1)/2$, denote the complete normalized quadratic Hermite
representation of $\mathbf x$. The first level contains
$d_1=p_U+p_V+p_W$ dense quadratic features,
\begin{equation}
    \mathbf h_U=A_U\Phi_2(\mathbf x),
    \qquad
    \mathbf h_V=A_V\Phi_2(\mathbf x),
    \qquad
    \mathbf h_W=A_W\Phi_2(\mathbf x),
    \label{eq:toy-main-blocks}
\end{equation}
where the three blocks have distinct roles. The block $\mathbf h_U$ is the
\emph{visible anchor}, while $\mathbf h_V$ and $\mathbf h_W$ are relevant
to the target but are constructed to be \emph{hidden} to the ordinary forward
population statistic. The second level combines two quadratic coordinates of the visible block,
\begin{equation}
    Q
    =
    \frac{
        \mathbf h_U^\top B_Q\mathbf h_U-\tr(B_Q)
    }{\sqrt2},
    \qquad
    Z
    =
    \frac{
        \mathbf h_U^\top B_Z\mathbf h_U-\tr(B_Z)
    }{\sqrt2},
    \qquad
    S
    =
    \sign(\mathbf h_V^\top C\mathbf h_W).
    \label{eq:toy-main-coordinates}
\end{equation}
We choose $B_Q$ and $B_Z$ symmetric, normalized and orthogonal, with a
sign-flip symmetry that leaves $Q$ invariant while sending $Z$ to $-Z$.
The raw output is
\begin{equation}
    y_{\rm raw}
    =
    \tanh\!\left(
        \frac{Q+ZS}{\sqrt2}+b_0
    \right),
    \label{eq:toy-main-target}
\end{equation}
We finally standardize $y_{\rm raw}$ of mean $\mu_y$ and variance $\sigma^2_y$ to $y = (y_{\rm raw}-\mu_y)/\sigma_y$ with zero mean and unit variance.. Schematically,
\begin{align}
    \mathbf x
    \;
    \longrightarrow
    \;
    \mathbf h^{(1)}
    =
    \big(
        \underbrace{\mathbf h_U}_{\text{visible}},
        \underbrace{\mathbf h_V}_{\text{hidden}},
        \underbrace{\mathbf h_W}_{\text{hidden}}
    \big)
    \;
    \longrightarrow
    \;
    h^{(2)}
    =
    Q(\mathbf h_U)
    +
    Z(\mathbf h_U)S(\mathbf h_V,\mathbf h_W)
    \;
    \longrightarrow
    \;
    y=g(h^{(2)}).
 \label{eq:toy-main-hierarchy}
\end{align}
The precise teacher distributions and normalizations are given in
Appendix~\ref{app:toy-setting}. The mechanism is particularly transparent after decomposing the target.
Set
$t_\pm=(Q\pm Z)/\sqrt2+b_0$, the target reads
\begin{equation}
    a(Q,Z)
    =
    \frac{1}{2\sigma_y}\big(
        \tanh(t_+)+\tanh(t_-)-2\mu_y
    \big),
    \qquad
    b(Q,Z)
    =
    \frac{1}{2\sigma_y}\big( \tanh(t_+)-\tanh(t_-) \big),
\end{equation}
\begin{equation}
    y=a(Q,Z)+b(Q,Z)S.
\end{equation}
By construction,
$a(Q,-Z)=a(Q,Z)$ and $b(Q,-Z)=-b(Q,Z)$
Thus $a$ carries the visible part of the hierarchy, while $bS$ contains the
hidden interaction.

\begin{figure}[t]
 \centering
 \includegraphics[width=\textwidth]{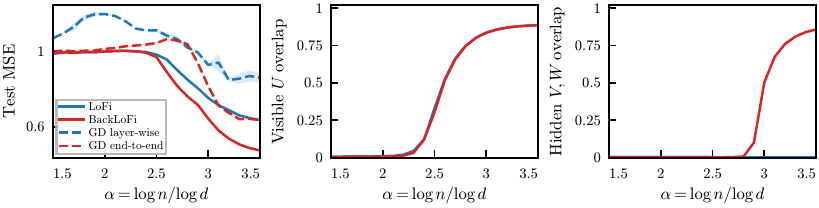}
 \caption{
    \textbf{Backward coupling unlocks the deep staircase.}
    Performance as a function of the sample exponent
    $\alpha=\log n/\log d$.
    \emph{Left:} Test MSE.
    \emph{Center:} Overlap with the visible first-level block $U$.
    \emph{Right:} Overlap with the hidden blocks $V,W$.
    Both spectral methods recover the anchor, the backward
    correction also recovers the hidden blocks. The prediction panel
    additionally compares layer-wise and end-to-end gradient descent.
    }
 \label{fig:toy-main}
\end{figure}

These parities also summarize what the ordinary forward hierarchy can
learn. At the first level, the label-weighted covariance has a nonzero
population component on $\mathbf h_U$, while its entire
$\mathbf h_V\oplus\mathbf h_W$ block cancels.

Forward LoFi therefore
recovers the visible first-level subspace. Once $\mathbf h_U$ is available,
the upper Forward step acts on its quadratic representation:
the first moment identifies $Q$, while the second-order statistic, after removing the $Q$ direction, identifies $Z$. Hence the forward staircase reaches
the visible hierarchy
\begin{equation}
    \mathbf x
    \longrightarrow
    \widehat{\mathbf h}^{(1)} = (\widehat{\mathbf h}_U, 0, 0)
    \longrightarrow
    \widehat{\mathbf h}^{(2)} = (\widehat Q, \widehat Z),
\end{equation}
but it still cannot access the hidden $\mathbf h_V,\mathbf h_W$ branch.

The backward pass provides the missing stepping stone. Specializing the
cross-layer estimator of Section~\ref{sec:setting} to the first boundary,
\begin{equation}
    \mathcal A
    =
    \mathbb E\!\left[
        y\,\mathbf g^{(2)}(\mathbf x)
        \mathbf z_0(\mathbf x)^\top
    \right],
    \label{eq:toy-main-cross-layer}
\end{equation}
where $\mathbf g^{(2)}$ is the sensitivity of the fitted upper predictor.
A retained left singular mode $u$ defines the sample-dependent score
$s=u^\top\mathbf g^{(2)}$. For the present teacher, the informative score
has a a non-zero linear
component along $Z$. The resulting Forward--Backward--Forward mechanism can
therefore be summarized as
\begin{equation}
    \mathbf x
    \xrightarrow{\rm Forward}
    \widehat{\mathbf h}_U
    \xrightarrow{\rm Forward}
    (\widehat Q,\widehat Z)
    \xrightarrow{\rm backward}
    s(\mathbf x)\ni \eta\,\widehat Z
    \xrightarrow{\,y\mapsto ys\,}
    \widehat{\mathbf h}^{(1)}_B
    =
    \bigl(
        \widehat{\mathbf h}_U,
        \widehat{\mathbf h}_{VW}
    \bigr)
    \xrightarrow{\rm Forward}
    \widehat{\mathbf h}^{(2)}_B .
    \label{eq:toy-main-backlofi-pipeline}
\end{equation}
Here the subscript $B$ denotes the representation obtained after the
backward correction. The backward score turns the previously invisible hidden interaction into a nonzero lower-level population signal:
\begin{equation}
    \mathbb E\!\left[
        y\,\mathbf h_V\mathbf h_W^\top
    \right]
    =0,
    \qquad
    \mathbb E\!\left[
        y\,s\,\mathbf h_V\mathbf h_W^\top
    \right]
    \neq0.
    \label{eq:toy-population-separation}
\end{equation}
Moreover, this cancellation persists under any scalar transformation of the
label: $g:\mathbb{R} \rightarrow \mathbb{R}$, analogous to the Statistical Query staircase settings in \cite{pmlr-v258-troiani25a, damian2025the}, namely, $\mathbb E\!\left[g(y)\,\mathbf h_V\mathbf h_W^\top\right]=0$.
The corrected lower pass can therefore recover the missing joint hidden
subspace, after which an ordinary Forward-LoFi pass is rerun on the
corrected representation. Figure~\ref{fig:toy-main} exhibits the same
qualitative separation for gradient descent and its spectral surrogate:
joint training and BackLoFi recover hidden directions missed by their
layer-wise counterparts.

The corresponding sample scales make the staircase quantitative. In the
regime
$p_U,p_V,p_W\asymp d^\epsilon$, $0<\epsilon<1$, the visible lower
subspace appears at
\begin{equation}
    n_U\asymp d^{2+\epsilon},
    \qquad
    n_{Q,Z}\asymp d^{2\epsilon},
    \qquad
    n_{\rm ForwardLoFi}\asymp d^{2+\epsilon},
\end{equation}
while, conditional on an informative backward score, recovery of the hidden
subspace again requires $d^{2+\epsilon}$ samples. The additional bottleneck
is learning the score itself. Appendix~\ref{app:toy-finite-sample}
derives a $d^{-1/2}$ contribution to the cross-layer matrix from the
quadratic feature contraction, leading to the sample scale
\begin{equation}
    n_{\rm score}\asymp d^3,
    \qquad
    n_{\rm BackLoFi}
    \asymp
    \max\{d^{2+\epsilon},d^3\}
    =
    d^3.
    \label{eq:toy-main-sample-scales}
\end{equation}
For the numerical choice $\epsilon=1/2$, this predicts the two characteristic
scales $\alpha_U=\frac52$ and $\alpha_{\rm Back}=3$. The $d^3$ prediction is conditional on the $d^{-1/2}$ informative
cross-layer gap. Appendix~\ref{app:toy-setting} gives the complete
derivation and the assumptions underlying these scalings.

\paragraph{Image classification  ---}\label{sec:experiments}
We now test whether the coupling isolated in
Section~\ref{sec:setting} accounts for what joint training learns on real
data.  We run Algorithm~\ref{alg:combined} on a fully connected and on a
convolutional multi-layer network on binary CIFAR-10, and compare it at matched width with
forward Neural LoFi, with the random-features network of the same
architecture, and with full-batch gradient descent
(Figure~\ref{fig:test-error-vs-n}; settings in
Appendix~\ref{app:hyperparameters}).

\begin{figure}[t]
 \centering
 \includegraphics[width=0.48\textwidth]{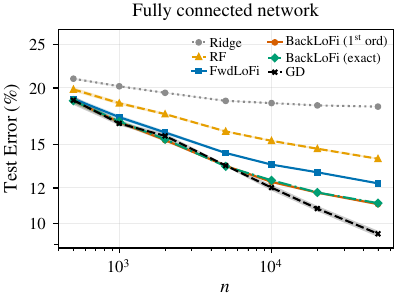}\hfill
 \includegraphics[width=0.48\textwidth]{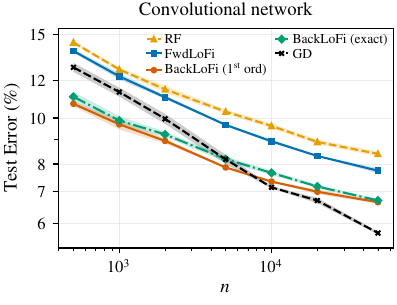}
 \caption{\textbf{Backward coupling improves on forward Neural LoFi and
 approaches gradient descent.} Test error vs number of training
 samples $n$ on binary CIFAR-10 (animal vs.\ vehicle), at matched width, for a
 fully connected network (\textit{left}) and a convnet
 (\textit{right}). Ridge is a linear regression on the pixels, RF the
 random-features network of the same architecture, FwdLoFi forward Neural
 LoFi, BackLoFi Algorithm~\ref{alg:combined} with the first-order or the exact
 backward signal, and GD full-batch gradient descent on the same architecture.
 Curves are means over seeds and bands one std error;
Spectral configurations and GD
 hyperparameters are selected on mean test error (see App.~\ref{app:hyperparameters}).}
 \label{fig:test-error-vs-n}
\end{figure}

One can read Fig.~\ref{fig:test-error-vs-n} in three steps, each holding for both architectures. First, BackLoFi improves on forward
 LoFi at every training-set size and at equal width. The information
that the layers above return to a boundary selects directions that the
forward filter, built from the same representation, did not find. The gain is
largest when data are scarce: on the convolutional network, BackLoFi trained on
$5{,}000$ images comes within $0.15$ points of forward Neural LoFi trained on
$50{,}000$. The replacement of features builds up over a few backward passes, until about half of the forward
columns have been replaced (see Appendix~\ref{app:additional-simulations}).

Second, the exact and the first-order backward signals give
nearly the same curves. Of the sensitivity of the layers above, the
first-order signal keeps only the scores
$\mathbf s_\ell$, one number per
selected mode. What the layers above pass down is therefore
captured by these $r_\ell$ numbers per sample.

Third, BackLoFi follows gradient descent on the same
architecture. It matches it up to $n=5{,}000$ on the fully connected network
and is ahead of it on the convolutional one, by two points at $n=500$.
Gradient descent pulls ahead only on the largest training sets, where
BackLoFi still recovers about half of the gap between forward Neural LoFi and
joint training. This is consistent with joint training using more than a few
coupled modes per boundary once data are abundant. The same mechanism appeared
in the deep staircase above, where directions invisible to the forward
statistic become visible through the response of the layers above.

Over most of the sample range, then, what joint training adds
to layer-wise learning is a low-rank coupling between the rows of a layer and
the columns that read them. Installing a few such modes per boundary, with no
gradient step and using only the scores of the layers above, reproduces it.

\section{Conclusion}
Backward Neural LoFi extends layer-wise spectral learning by
coupling lower features to upper responses: information learned
above changes which directions are selected below. The resulting
feature--response pairs admit a variational characterization,
while their local dynamics provide a mechanism for joint
amplification and structured weight covariances. For a continuous
three-layer variant, we prove convergence to a self-consistent evolution
of the nonlinear features, kernels, and output. The hierarchical
experiment illustrates hidden-feature discovery, and the image
experiments show improvements over forward learning at matched width.
Controlling repeated corrections beyond the local regime and establishing finite-sample guarantees for the implemented algorithm with learned upper scores remain open.
We believe these results help to open new directions into the understanding of the {\it unreasonable efficiency} of learning with deep networks \citep{lecun2015deep}.

\section*{Acknowledgments}
We would like to thank Surya Ganguli, Bruno Loureiro, Andrew Saxe, Matthieu Wyart and  Lenka Zdeborov\'{a} for discussions. We acknowledge funding from the Swiss National Science Foundation grants OperaGOST (grant number $200021\_200390$) and DSGIANGO (grant number $225837$). This work was supported by the Simons Collaboration on the Physics of Learning and Neural Computation via the Simons Foundation grant ($\#1257412$) and by the EPFL AI Center PhD Fellowship Program 2026.

\bibliography{references}
\bibliographystyle{plainnat}

\clearpage
\appendix

\section{Further related works}
\label{app:related-work}
We collect here the lines of work that the main text touches only briefly,
organized by their relation to backward coupling.

\paragraph{Feature correction by the layers above ---}
The closest antecedent is the \emph{backward feature correction} of
\citet{allenzhu2023backward}, where jointly trained upper layers refine the
representation learned below. Backward coupling isolates a narrower and more
explicit effect: an upper response, entering as a modified label, changes
which directions the lower spectral step can see at all. The same reversal of
the forward order appears in a different guise in the alternating-gradient-flow
analyses of \citet{kunin2025alternating}, where features of a two-layer network
activate one after the other through plateaus and sharp drops, and in the
sequential learning of compositional tasks of \citet{marchetti2026sequential}.

\paragraph{Multi-pass learning and staircases ---}
A single low-degree statistic of the input misses targets with information or
leap exponent above two \citep{arous2021online,benarous2024stochastic,
damian2024computational,pmlr-v258-troiani25a}. Repeated passes over the data
change this: reusing batches transforms the effective label and lets gradient
descent escape these constraints
\citep{pmlr-v235-dandi24a}, and staircase-type targets are learned one
correlation at a time, each step being unlocked by the previous one
\citep{NEURIPS2021_e2db7186,pmlr-v195-abbe23a,bardone2024sliding,
tabanelli2025computational}. The generative-exponent picture
\citep{NEURIPS2025_290056cc} quantifies what label transformations can achieve.
BackLoFi sits in this family with a specific choice of transformation, the
sample-wise response of the fitted layers above, and a specific place where it
acts, the spectral step of an earlier layer.

\paragraph{Hierarchical learning and depth ---}
\citet{dandi2026computational} study Gaussian hierarchical single- and
multi-index targets, showing how successive feature learning reduces the
effective dimension and yields a sample-complexity advantage over shallow
networks. In a related compositional setting, \citet{tabanelli2026deep}
recover latent features with layer-wise spectral estimators and establish
sample-complexity gains over shallow kernel methods.
\citet{wortsmanzurich2026scalinglaws} derive recovery thresholds for
individual features with power-law strengths: as the sample size increases,
successively weaker features become detectable, producing power-law decay
of the prediction error. For a multiscale single-index model,
\citet{bruna2026multiscale} proves recovery by online SGD on a correlation
objective with all layers evolving on the same timescale.

For discrete compositional data, \citet{cagnetta2024hierarchy} introduce
the Random Hierarchy Model and relate the emergence of representations
invariant to equivalent feature combinations to the detectability of
correlations between input patches and labels.
\citet{ren2026hierarchy} prove efficient learning of these hierarchies
under nondegeneracy assumptions by training a deep convolutional network
layer by layer. For Boolean targets with hierarchical Fourier structure,
\citet{ren2026algorithmic} give a layer-wise coordinate-descent algorithm for
logarithmic-depth networks and prove an approximation separation from
constant-depth networks with polynomial width, regular activations, and
controlled spectral norms.

\paragraph{Hessian and spectral learning ---}
Label-weighted second-order operators drive spectral initializations and
recovery in single- and multi-index estimation
\citep{lu2020phase,mondelli2018fundamental,maillard2020phase,
vilucchio2025asymptotics}, and describe the early dynamics of shallow
networks through Hessian-like objects
\citep{bonnaire2025role,zhang2025neural,montanari2026phase,
defilippis2026optimal}; the selection of spectral outliers against a bulk
follows the BBP/EJ mechanism \citep{baik2005phase,edwards1976eigenvalue},
as in spiked matrix--tensor models
\citep{ros2019complex,sarao2019afraid,sarao2020marvels}. Neural LoFi
\citep{dandi_neural_lofi} applies one such operator per layer. The
cross-layer matrix $\mathbf A_\ell$ of this paper correlates two
representations through the labels; its leading singular pairs select
the coupled features.

\paragraph{Spectral structure of trained weights ---}
Trained networks show low-rank, non-random structure in their weights:
spikes over a bulk in quadratic and diagonal networks
\citep{defilippis2026scaling}, nuclear-norm-type capacity control in
overparameterized quadratic networks \citep{erbanuclear}, similar spectra in
attention layers \citep{boncoraglio2025single}, and structured weight
spectra across architectures \citep{martin2021implicit}. The rainbow model
\citep{guth2024rainbow} organizes these observations into layers of random
features with learned low-rank covariances, assumed independent across layers
up to a rotation. Section~\ref{sec:lessons} shows what backward coupling
adds: linked covariances and a neuron-by-neuron dependence between the rows
of one layer and the columns of the next.

\paragraph{Feature learning from gradients of a fitted predictor ---}
Recursive feature machines update the feature metric with the average
gradient outer product (AGOP) of a fitted predictor
\citep{radhakrishnan2024mechanism,radhakrishnan2025linear,
beaglehole2023mechanism,boixadsera2026fact,cha2026weightgram}. These methods also feed
information from the output back into the features, through the gradient of
a predictor that must first be fitted. In backward coupling, the upper
sensitivity $\mathbf g^{\ell+1}$ plays a similar role: together with $y$,
it weights a first moment of the fixed lower representation, producing
the cross-layer matrix $\mathbf A_\ell$.

\paragraph{Local and backpropagation-free training ---}
Layer-wise and local training rules avoid an exact backward pass: feedback
alignment and its direct variants replace the transposed weights by random
matrices \citep{lillicrap2016random,nokland2016direct,launay2020direct},
local losses train layers separately \citep{nokland2019training,
belilovsky2020decoupled}, target propagation passes layer-wise targets
\citep{lee2015difference}, and label-modulated Hebbian rules
approximate spectral operators \citep{oja1982simplified,sanger1989optimal,
gerstner2018eligibility,illing2019biologically,illing2021local}. The
backward signal of BackLoFi is itself low-rank and approximate, carried by a
few scores per sample, in the spirit of these rules.

\section{Coupled mode amplification: proof of Theorem~\ref{thm:controlled-feedback}}
\label{app:scaled-feedback-proof}

We now give the full formal statement and assumptions for
Theorem~\ref{thm:controlled-feedback}.
Theorem~\ref{thm:controlled-feedback-formal} specifies the initialization
scales and training interval, and bounds the error between gradient flow
and the coupled linear dynamics used in the main text. It also shows that
the incoming row and matching outgoing column share the same exponentially
growing term when projected onto paired singular vectors.

\subsection{Assumptions and formal statement}

We use the network notation of Section~\ref{sec:setting}.
Fix a neuron $j$ in layer $\ell<L$, with incoming row
$\mathbf w_{\ell,j}$ and matching outgoing column
$\boldsymbol\omega_{\ell+1,j}$. We train this pair jointly and keep
all other weights frozen. With $x=\mathbf z_{\ell-1}$, the next
preactivation is
\begin{equation}
 \mathbf h_{\ell+1}(x)=\mathbf h_{\ell+1,0}(x)
 +\frac{1}{\sqrt{p_\ell}}
   \boldsymbol\omega_{\ell+1,j}\,
   \sigma(\langle\mathbf w_{\ell,j},x\rangle).
 \label{eq:scaled-pair-model}
\end{equation}
Here $\mathbf h_{\ell+1,0}$ is the fixed contribution of all neurons in
layer $\ell$ except $j$.

\begin{assumption}[Joint training of one pair in a fixed network]
\label{ass:pair-context}
The dataset, inputs $x$, preactivation contributions
$\mathbf h_{\ell+1,0}(x)$, depth and widths are fixed. The selected pair
$(\mathbf w_{\ell,j},\boldsymbol\omega_{\ell+1,j})$ follows equal-rate
Euclidean gradient flow for
$\tfrac12\Etrain[(y-f)^2]$; all other weights remain frozen
during the entire training interval. The subsequent layers have no biases and the
output intercept is zero.
\end{assumption}

\begin{assumption}[Activation]
\label{ass:pair-activation}
The activation is $C^3$, with bounded derivatives on the relevant
bounded preactivation sets, $\sigma(0)=0$, and $\sigma'(0)>0$.
\end{assumption}

\begin{assumption}[Layer-wise vanishing initialization]
\label{ass:pair-hierarchy}
Initialize
$\mathbf w_{\ell,j}(0)=\alpha_\ell\overline{\mathbf w}_{\ell,j}$ and
$\boldsymbol\omega_{\ell+1,j}(0)=
\alpha_{\ell+1}\overline{\boldsymbol\omega}_{\ell+1,j}$.
The frozen subsequent weights are
$\mathbf W_k=\alpha_k\overline{\mathbf W}_k$ for
$k=\ell+2,\ldots,L$, and $\mathbf a_L=\alpha_{L+1}a_0$.
The normalized initial vectors $\overline{\mathbf w}_{\ell,j}$,
$\overline{\boldsymbol\omega}_{\ell+1,j}$, matrices
$\overline{\mathbf W}_k$ and readout $a_0$ are fixed. The scales are
positive and satisfy
\begin{equation}
 \alpha_\ell\longrightarrow0,\qquad
 \alpha_{k+1}=o(\alpha_k)\quad(k=\ell,\ldots,L).
 \label{eq:scaled-strict-hierarchy}
\end{equation}
\end{assumption}
This is the scale hierarchy of
\citet[Assumption~1]{dandi_neural_lofi}, applied to the selected pair
and the frozen weights above it.

To obtain the linear approximation, evaluate the upper sensitivity at
$\mathbf h_{\ell+1,0}(x)$, with the selected neuron's contribution removed.
This gives the fixed sensitivity and correlation matrix
\begin{equation}
 \mathbf g_0(x)=
 \left.\frac{\partial f}{\partial\mathbf h_{\ell+1}}
 \right|_{\mathbf h_{\ell+1}=\mathbf h_{\ell+1,0}(x)},\qquad
 \mathbf A_\ell=\Etrain[y\mathbf g_0x^\top].
 \label{eq:scaled-theorem-operator}
\end{equation}
The activation derivative in the row and column updates contributes the
factor $\sigma'(0)/\sqrt{p_\ell}$. The largest growth rate of their linear
approximation is therefore
\[
 \lambda=\frac{\sigma'(0)}{\sqrt{p_\ell}}\|\mathbf A_\ell\|_2.
\]
Both $\mathbf A_\ell$ and $\lambda$ depend on the initialization scales;
they remain fixed during training.

The sensitivity contains one weight factor from each subsequent layer,
including the readout, so its size is bounded by a constant times
$\prod_{k=\ell+2}^{L+1}\alpha_k$. We assume that averaging against the
labels and inputs leaves a correlation of this same order:
\begin{equation}
 \lambda\ge c\prod_{k=\ell+2}^{L+1}\alpha_k
 \quad\text{for some fixed $c>0$ and all sufficiently small scales.}
 \label{eq:pair-correlation-scale}
\end{equation}
This condition excludes cancellation of the leading correlation that
drives the coupled growth.

\begin{samepage}
\begin{formaltheorem}[Formal version of Theorem~\ref{thm:controlled-feedback}]
\label{thm:controlled-feedback-formal}
Under Assumptions~\ref{ass:pair-context}--\ref{ass:pair-hierarchy}
and \eqref{eq:pair-correlation-scale}, let the linear approximation solve
\[
 \dot{\mathbf w}_{\ell,j}^{\rm lin}
   =\frac{\sigma'(0)}{\sqrt{p_\ell}}
     \mathbf A_\ell^\top\boldsymbol\omega_{\ell+1,j}^{\rm lin},
 \qquad
 \dot{\boldsymbol\omega}_{\ell+1,j}^{\rm lin}
   =\frac{\sigma'(0)}{\sqrt{p_\ell}}
     \mathbf A_\ell\mathbf w_{\ell,j}^{\rm lin}
\]
from the same initialization. For any fixed $M>0$ and sufficiently small
initialization scales, the following bounds hold uniformly for
$0\le t\le T$ with $\lambda T\le M$:
\begin{align}
 \|\mathbf w_{\ell,j}(t)\|
 &\le C_M\alpha_\ell, &
 \|\mathbf w_{\ell,j}(t)-\mathbf w_{\ell,j}^{\rm lin}(t)\|
 &\le C_M\alpha_\ell^2,\nonumber\\
 \|\boldsymbol\omega_{\ell+1,j}(t)\|
 &\le C_M\alpha_\ell, &
 \|\boldsymbol\omega_{\ell+1,j}(t)-\boldsymbol\omega_{\ell+1,j}^{\rm lin}(t)\|
 &\le C_M\alpha_\ell^2.
 \label{eq:scaled-theorem-bound}
\end{align}
Here $C_M$ is independent of the initialization scales and time.
Hence, the weights move within their initialization scale
$\alpha_\ell$, along the linearized trajectories with the error bounded as the smaller scale $\mathcal{O}(\alpha_\ell^2)$.
For each singular triple, $\mathbf A_\ell v_i=s_i u_i$ and
$\mathbf A_\ell^\top u_i=s_i v_i$, write
$c_{ji}^{\pm}=\tfrac12(\langle\mathbf w_{\ell,j}(0),v_i\rangle
                    \pm\langle\boldsymbol\omega_{\ell+1,j}(0),u_i\rangle)$.
Then
\begin{align}
 \langle\mathbf w_{\ell,j}(t),v_i\rangle
 &=c_{ji}^{+}e^{\sigma'(0)s_i t/\sqrt{p_\ell}}
   +c_{ji}^{-}e^{-\sigma'(0)s_i t/\sqrt{p_\ell}}
   +\epsilon_{ji}^{w}(t),\nonumber\\
 \langle\boldsymbol\omega_{\ell+1,j}(t),u_i\rangle
 &=c_{ji}^{+}e^{\sigma'(0)s_i t/\sqrt{p_\ell}}
   -c_{ji}^{-}e^{-\sigma'(0)s_i t/\sqrt{p_\ell}}
   +\epsilon_{ji}^{\omega}(t),
 \label{eq:scaled-separate-modal-amplification}
\end{align}
with both errors satisfying the approximation-error bounds in
\eqref{eq:scaled-theorem-bound}.
\end{formaltheorem}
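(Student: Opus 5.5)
The proof compares the exact pair gradient flow with the linear system by Grönwall's inequality in the rescaled time $\tau=\lambda t\in[0,M]$, using a bootstrap on the size of the weights; the modal formulas then follow by projecting the linear system onto singular pairs. Write $F(\mathbf h_{\ell+1})$ for the map from the next preactivation to the output, and $\phi(x)=\langle\mathbf w_{\ell,j},x\rangle$. By \eqref{eq:scaled-pair-model}, the exact flow is
\[
\dot{\mathbf w}_{\ell,j}=\tfrac{1}{\sqrt{p_\ell}}\Etrain\bigl[\Delta\,\sigma'(\phi)\langle\boldsymbol\omega_{\ell+1,j},\nabla F(\mathbf h_{\ell+1})\rangle x\bigr],\qquad
\dot{\boldsymbol\omega}_{\ell+1,j}=\tfrac{1}{\sqrt{p_\ell}}\Etrain\bigl[\Delta\,\sigma(\phi)\nabla F(\mathbf h_{\ell+1})\bigr].
\]
I would split each right-hand side into the linear field $\frac{\sigma'(0)}{\sqrt{p_\ell}}(\mathbf A_\ell^\top\boldsymbol\omega,\ \mathbf A_\ell\mathbf w)$ plus a remainder $\mathcal R$. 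The remainder has four sources.
\begin{enumerate}
\item Replacing $\Delta$ by $y$, which costs $|f|$.
\item Replacing $\sigma'(\phi)$ by $\sigma'(0)$, which costs $O(\|\mathbf w\|)$.
\item Replacing $\sigma(\phi)$ by $\sigma'(0)\phi$, which costs $O(\|\mathbf w\|^2)$.
\item Replacing $\nabla F(\mathbf h_{\ell+1})$ by $\mathbf g_0=\nabla F(\mathbf h_{\ell+1,0})$, which costs $\|\nabla^2F\|\,\|\boldsymbol\omega\|\,\|\mathbf w\|$.
\end{enumerate}
These use Assumption~\ref{ass:pair-activation} with $\sigma(0)=0$, $C^3$ regularity, and bounded data.

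The key scale bookkeeping uses Assumption~\ref{ass:pair-hierarchy}. The quantities $\nabla F$ and $\nabla^2F$ each carry one factor per subsequent layer including the readout, so both are $O(\Pi)$ with $\Pi:=\prod_{k=\ell+2}^{L+1}\alpha_k$; the Hessian carries at least the same factors, since every derivative passes through all the upper weights. The output $f$ is bounded by $\alpha_{\ell+1}\Pi$ plus the frozen $\mathbf h_{\ell+1,0}$ contribution propagated upward, so $|f|=o(1)$. On the bootstrap set where $\|\mathbf w\|,\|\boldsymbol\omega\|\le 2C_M\alpha_\ell$, the remainder then satisfies
\[
\|\mathcal R\|\lesssim \frac{\Pi}{\sqrt{p_\ell}}\bigl(|f|\,\alpha_\ell+\alpha_\ell^2\bigr)\lesssim \lambda\,\alpha_\ell^2 .
\]
The last step uses the nondegeneracy condition \eqref{eq:pair-correlation-scale}, $\lambda\gtrsim\Pi$, together with $|f|=O(\alpha_\ell)$, which follows from $\alpha_{\ell+1}=o(\alpha_\ell)$.

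Next I would run the comparison. The linear system has generator of norm $\lambda$, so $\|(\mathbf w^{\rm lin},\boldsymbol\omega^{\rm lin})(t)\|\le e^{\lambda t}\|(\mathbf w,\boldsymbol\omega)(0)\|\le e^{M}\alpha_\ell$, since $\alpha_{\ell+1}\le\alpha_\ell$. Let $e(t)$ be the difference between the exact and linear trajectories. Then $\dot e=\frac{\sigma'(0)}{\sqrt{p_\ell}}\mathcal L e+\mathcal R$ with $\|\mathcal L\|=\|\mathbf A_\ell\|_2$, and Grönwall gives
\[
\|e(t)\|\le \int_0^t e^{\lambda(t-s)}\,C\lambda\alpha_\ell^2\,ds\le C e^{M}\alpha_\ell^2 .
\]
This yields both pairs of bounds in \eqref{eq:scaled-theorem-bound}. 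It also closes the bootstrap for small $\alpha_\ell$, because then $\alpha_\ell^2\ll\alpha_\ell$, and the continuity argument extends the estimates to all of $[0,T]$.

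For the modal statement, I would project the linear system onto a singular triple using $\mathbf A_\ell v_i=s_iu_i$ and $\mathbf A_\ell^\top u_i=s_iv_i$. Setting $a=\langle\mathbf w^{\rm lin},v_i\rangle$ and $b=\langle\boldsymbol\omega^{\rm lin},u_i\rangle$ gives the closed system $\dot a=\kappa b$, $\dot b=\kappa a$ with $\kappa=\sigma'(0)s_i/\sqrt{p_\ell}$. Hence $a\pm b$ evolve as $e^{\pm\kappa t}$, which is exactly \eqref{eq:scaled-separate-modal-amplification} with the stated $c^\pm_{ji}$. The errors $\epsilon^{w}_{ji},\epsilon^{\omega}_{ji}$ are projections of $e(t)$, so they inherit the $C_M\alpha_\ell^2$ bound.

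I expect the main obstacle to be the bookkeeping that makes the remainder of relative size $O(\alpha_\ell)$ uniformly in the scales. The difficulty is the Hessian term and the $\Delta-y=-f$ term, where one must check that every source of the remainder carries the same factor $\Pi$ as $\lambda$. Condition \eqref{eq:pair-correlation-scale} is what allows dividing by $\lambda$, and the strict hierarchy is needed to bound $f$ by $O(\alpha_\ell)$. I would first verify this by writing $\nabla F$ explicitly via the backward recursion \eqref{eq:main-backward-signals} with frozen upper weights, so that its Lipschitz constant on bounded preactivation sets is $O(\Pi)$.
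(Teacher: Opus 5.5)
Your proposal is correct and follows the paper's proof essentially step for step. It has the same four-source linearization remainder of order $\lambda\alpha_\ell^2$ (using $\lambda\gtrsim\Pi$), the same Duhamel/Gr\"onwall comparison with a bootstrap keeping the pair at scale $\alpha_\ell$, and the same sum/difference projection onto singular pairs. One small tightening: rather than the ``$\alpha_{\ell+1}\Pi$ plus frozen contribution, hence $o(1)$'' argument, note that $F$ itself carries one factor per upper layer, so $|f|\le C\Pi\le C\lambda=o(\alpha_\ell)$ uniformly on the bootstrap set, which is exactly the bound the paper uses.
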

\end{samepage}

\paragraph{Remark (the decaying component).}
The decaying component comes from the difference between the initial row
and column projections. For the linearized trajectories, let
$a(t)=\langle\mathbf w_{\ell,j}^{\rm lin}(t),v_i\rangle$ and
$b(t)=\langle\boldsymbol\omega_{\ell+1,j}^{\rm lin}(t),u_i\rangle$.
Adding and subtracting their evolution equations gives
\[
 \frac{d}{dt}(a+b)=\frac{\sigma'(0)s_i}{\sqrt{p_\ell}}(a+b),
 \qquad
 \frac{d}{dt}(a-b)=-\frac{\sigma'(0)s_i}{\sqrt{p_\ell}}(a-b).
\]
Thus, for $s_i>0$, their sum grows while their difference decays.
The coefficients $c_{ji}^{+}$ and $c_{ji}^{-}$ are half the initial sum
and difference, respectively. The decaying term is needed to recover
both initial projections; it vanishes exactly when they are equal.

The two weights share the growing amplitude $c_{ji}^{+}$, as stated in
Theorem~\ref{thm:controlled-feedback}. Since the outgoing column is
initialized at scale $\alpha_{\ell+1}=o(\alpha_\ell)$,
$c_{ji}^{\pm}=\tfrac12\alpha_\ell
\langle\overline{\mathbf w}_{\ell,j},v_i\rangle+o(\alpha_\ell)$.
At $t=0$, their leading contributions add in the row projection and
cancel in the column projection, leaving its smaller initial value.
The column starts at scale $\alpha_{\ell+1}=o(\alpha_\ell)$ but can grow
to scale $\alpha_\ell$ during this interval. On the shorter interval
$\lambda T\le M\alpha_{\ell+1}/\alpha_\ell$, it remains at its own
initialization scale:
$\|\boldsymbol\omega_{\ell+1,j}(t)\|\le C_M\alpha_{\ell+1}$.
Both approximation errors are then bounded by
$C_M\alpha_\ell\alpha_{\ell+1}$, as shown in the proof below.

\paragraph{Remark (gradient descent).}
We use gradient flow for convenience. For full-batch gradient descent
with step size $\Delta t$, the linear mode factors after $m$ steps are
$(1\pm\Delta t\,\sigma'(0)s_i/\sqrt{p_\ell})^m$.
The same remainder estimates and discrete Gr\"onwall give the
corresponding bounds relative to these linear iterates. If
$\Delta t\,\lambda<1$, replacing the factors by their exponentials
introduces relative error $O(\Delta t\,\lambda^2T)$.
This error vanishes for fixed $\Delta t$, since
$\lambda^2T\le M\lambda\to0$.

\subsection{Proof}

\paragraph{Proof sketch.}
Lemma~\ref{lem:pair-upper-stability} bounds the prediction and the change
in upper sensitivity as the trained pair moves. Using these bounds,
Proposition~\ref{prop:pair-linearization} controls the error in replacing
the gradient updates by the fixed linear system of
Theorem~\ref{thm:controlled-feedback-formal}.
Lemma~\ref{lem:pair-linear-modes} solves this system and bounds how much
it amplifies an error introduced at an earlier time. In the final proof,
we integrate the remainder with this amplification factor and show that
the weights remain small throughout $[0,T]$.

All constants $C$ below are uniform in the initialization scales and
time; they may depend on the fixed data, architecture, normalized seeds,
derivative bounds and the constant in \eqref{eq:pair-correlation-scale}.
Constants denoted $C_M$ may additionally depend on $M$.
Write $\rho=(\|\mathbf w_{\ell,j}\|^2+
\|\boldsymbol\omega_{\ell+1,j}\|^2)^{1/2}$.
We establish the local estimates for $\rho\le r_*$, with a fixed small
radius $r_*\le1$, before showing that the trajectory stays in this ball.

\begin{lemma}[Stability of the upper sensitivity]
\label{lem:pair-upper-stability}
Under the assumptions of Theorem~\ref{thm:controlled-feedback-formal},
for sufficiently small scales and $\rho\le r_*$, uniformly over the
training inputs,
\begin{equation}
 \|\mathbf g^{\ell+1}-\mathbf g_0\|\le C\lambda\rho^2,\qquad
 \|\mathbf g^{\ell+1}\|\le C\lambda,\qquad |f|\le C\lambda.
 \label{eq:scaled-upper-control}
\end{equation}
The upper preactivation changes by at most $C\rho^2$, and
$\lambda=o(\alpha_\ell)$.
\end{lemma}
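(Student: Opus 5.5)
The approach is to write the sensitivity $\mathbf g^{\ell+1}$ as an explicit composition of the frozen upper layers evaluated at $\mathbf h_{\ell+1}$. Each bound in \eqref{eq:scaled-upper-control} then follows from the scale hierarchy of Assumption~\ref{ass:pair-hierarchy} and the derivative bounds of Assumption~\ref{ass:pair-activation}.

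\textbf{Step 1 (preactivation shift).} By \eqref{eq:scaled-pair-model}, the only moving contribution to $\mathbf h_{\ell+1}$ is $p_\ell^{-1/2}\boldsymbol\omega_{\ell+1,j}\sigma(\langle\mathbf w_{\ell,j},x\rangle)$. Since $\sigma(0)=0$, we have $|\sigma(s)|\le C|s|$ on bounded sets. The training inputs $x=\mathbf z_{\ell-1}$ are fixed and bounded, so this term is at most $C\|\boldsymbol\omega_{\ell+1,j}\|\,\|\mathbf w_{\ell,j}\|\le C\rho^2$ by AM--GM. The preactivation $\mathbf h_{\ell+1,0}$ is fixed and bounded. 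Hence for $\rho\le r_*\le1$ all upper preactivations remain in a fixed compact set, on which the derivatives of $\sigma$ are uniformly bounded.

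\textbf{Step 2 (size of sensitivity and output).} Unroll the recursion \eqref{eq:main-backward-signals}: $\mathbf g^{\ell+1}$ is a product of diagonal factors $\sigma'(\mathbf h_k)/\sqrt{p_k}$ and the matrices $\mathbf W_{k}^\top=\alpha_k\overline{\mathbf W}_k^\top$ for $k=\ell+2,\dots,L$, ending with $\mathbf a_L=\alpha_{L+1}a_0$. On the compact set of Step~1 this gives $\|\mathbf g^{\ell+1}\|\le C\prod_{k=\ell+2}^{L+1}\alpha_k$. The correlation lower bound \eqref{eq:pair-correlation-scale} turns this into $\|\mathbf g^{\ell+1}\|\le (C/c)\lambda$. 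For the output, there are no biases and the intercept is zero, and $\sigma(0)=0$. So $f$, viewed as a function of $\mathbf h_{\ell+1}$, is Lipschitz with constant $\sup\|\mathbf g\|$ along the segment from $0$, and vanishes at $\mathbf h_{\ell+1}=0$. Therefore $|f|\le C\lambda\,\|\mathbf h_{\ell+1}\|\le C\lambda$.

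\textbf{Step 3 (variation of the sensitivity).} Differentiate the same unrolled expression with respect to $\mathbf h_{\ell+1}$. Each term of the derivative replaces one $\sigma'$ by $\sigma''$ times a propagated perturbation. The propagated perturbation carries additional small weight factors, but it suffices to keep the full product, which is again $\le C\prod_{k\ge\ell+2}\alpha_k\le C\lambda$. Here the $C^3$ regularity (or $C^2$ with bounded $\sigma''$) is used. By the mean value theorem and Step~1, $\|\mathbf g^{\ell+1}-\mathbf g_0\|\le C\lambda\cdot C\rho^2$.

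\textbf{Step 4 ($\lambda=o(\alpha_\ell)$).} The upper bound of Step~2, evaluated at $\mathbf h_{\ell+1,0}$, also bounds $\|\mathbf A_\ell\|_2$, so $\lambda\le C\prod_{k=\ell+2}^{L+1}\alpha_k\le C\alpha_{\ell+2}$. Since $\alpha_{\ell+2}=o(\alpha_{\ell+1})=o(\alpha_\ell)$ by \eqref{eq:scaled-strict-hierarchy}, the claim follows; when $\ell=L-1$ the product is just $\alpha_{L+1}=o(\alpha_L)$.

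\textbf{Main obstacle.} The only delicate point is keeping all constants uniform in the vanishing scales. In particular, the upper preactivations must stay in a fixed compact set, so that derivative bounds and Lipschitz constants do not depend on the scales. This holds because the frozen upper weights only shrink as the scales go to zero, and I would state that explicitly.
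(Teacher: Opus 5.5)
Your proposal is correct and follows the paper's proof essentially step for step. The shared steps are the $C\rho^2$ preactivation shift from $\sigma(0)=0$, the observation that every chain-rule term for $f$ and its first two derivatives in $\mathbf h_{\ell+1}$ carries the full product $\prod_{k=\ell+2}^{L+1}\alpha_k$, the conversion to $C\lambda$ via \eqref{eq:pair-correlation-scale}, the mean value theorem for $\mathbf g^{\ell+1}-\mathbf g_0$, and the reverse bound on $\|\mathbf A_\ell\|_2$ giving $\lambda=o(\alpha_\ell)$. The only cosmetic difference is in bounding $|f|$: you integrate the sensitivity along the segment from $\mathbf h_{\ell+1}=0$, which needs the derivative bounds on a fixed ball containing $0$ rather than just the neighbourhood of Step~1, whereas the paper propagates $\|\mathbf z_k\|\le C\alpha_k\|\mathbf z_{k-1}\|$ forward layer by layer.
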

\begin{proof}
By $\sigma(0)=0$ and the bounded inputs,
\begin{equation}
 \|\mathbf h_{\ell+1}(x)-\mathbf h_{\ell+1,0}(x)\|
 \le C\|\boldsymbol\omega_{\ell+1,j}\|\,
       \|\mathbf w_{\ell,j}\|
 \le C\rho^2.
 \label{eq:scaled-pair-product-bound}
\end{equation}
Thus the upper preactivations stay in a fixed bounded neighborhood
when $\rho\le r_*$.
Since $\sigma(0)=0$ and $\sigma'$ is bounded on the relevant intervals,
$\|\mathbf z_k\|\le C\alpha_k\|\mathbf z_{k-1}\|$ for each frozen layer
$k=\ell+2,\ldots,L$. Together with
$|f|\le C\alpha_{L+1}\|\mathbf z_L\|$ and the boundedness of
$\mathbf z_{\ell+1}$, this gives
$|f|\le C\prod_{k=\ell+2}^{L+1}\alpha_k$.
Every chain-rule term for the first and second derivatives with
respect to $\mathbf h_{\ell+1}$ contains each subsequent weight matrix and
the readout at least once. The factors $\sigma'$ and $\sigma''$ are
bounded, and any additional weight factors are bounded as the scales
vanish. Therefore
\[
 |f|+\left\|\frac{\partial f}{\partial\mathbf h_{\ell+1}}\right\|
 +\left\|\frac{\partial^2f}{\partial\mathbf h_{\ell+1}^2}\right\|_2
 \le C\prod_{k=\ell+2}^{L+1}\alpha_k
 \le C\lambda,
\]
where the last inequality uses \eqref{eq:pair-correlation-scale}.
The mean-value theorem and \eqref{eq:scaled-pair-product-bound} give
the sensitivity difference in \eqref{eq:scaled-upper-control}.
The same derivative bound at $\mathbf h_{\ell+1,0}(x)$, together with the
definition of $\mathbf A_\ell$, gives
\[
 \lambda\le C\prod_{k=\ell+2}^{L+1}\alpha_k=o(\alpha_\ell),
\]
by the initialization hierarchy.
\end{proof}

This estimate controls the errors from freezing the upper sensitivity
and replacing the residual by the label in the two gradient updates.

\begin{proposition}[Linear coupling of a row and its matching column]
\label{prop:pair-linearization}
Under the assumptions of Lemma~\ref{lem:pair-upper-stability}, for
sufficiently small scales and $\rho\le r_*$,
\begin{equation}
 \left\|\left(
 \dot{\mathbf w}_{\ell,j}
  -\frac{\sigma'(0)}{\sqrt{p_\ell}}\mathbf A_\ell^\top\boldsymbol\omega_{\ell+1,j},\,
 \dot{\boldsymbol\omega}_{\ell+1,j}
  -\frac{\sigma'(0)}{\sqrt{p_\ell}}\mathbf A_\ell\mathbf w_{\ell,j}
 \right)\right\|
 \le C\lambda(\rho^2+\alpha_\ell\rho).
 \label{eq:scaled-field-error}
\end{equation}
\end{proposition}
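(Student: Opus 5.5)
We compute the exact gradient-flow fields of the pair under the model \eqref{eq:scaled-pair-model} and compare them term by term with the linear fields. Write $u=\langle\mathbf w_{\ell,j},x\rangle$, so $|u|\le C\rho$ on the bounded training inputs. Then
\[
 \dot{\mathbf w}_{\ell,j}=\Etrain\!\Bigl[(y-f)\frac{\sigma'(u)}{\sqrt{p_\ell}}\langle\boldsymbol\omega_{\ell+1,j},\mathbf g^{\ell+1}\rangle x\Bigr],
 \qquad
 \dot{\boldsymbol\omega}_{\ell+1,j}=\Etrain\!\Bigl[(y-f)\frac{\sigma(u)}{\sqrt{p_\ell}}\mathbf g^{\ell+1}\Bigr].
\]
The linear fields are obtained from these by three replacements: $y-f\to y$, $\mathbf g^{\ell+1}\to\mathbf g_0$, and $\sigma'(u)\to\sigma'(0)$ or $\sigma(u)\to\sigma'(0)u$. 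The proof telescopes the difference through these replacements one at a time and bounds each piece using Lemma~\ref{lem:pair-upper-stability}.

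\emph{Row field.} We split the difference into three terms.
\begin{itemize}
 \item[(i)] Replacing the residual by the label costs $|f|\,|\sigma'(u)|\,\|\boldsymbol\omega\|\,\|\mathbf g^{\ell+1}\|\le C\lambda\cdot\rho\cdot\lambda$. Since $\lambda=o(\alpha_\ell)$, this is at most $C\lambda\alpha_\ell\rho$.
 \item[(ii)] Replacing $\mathbf g^{\ell+1}$ by $\mathbf g_0$ costs $\|\boldsymbol\omega\|\,\|\mathbf g^{\ell+1}-\mathbf g_0\|\le\rho\cdot C\lambda\rho^2\le C\lambda\rho^2$, using $\rho\le r_*\le1$.
 \item[(iii)] Replacing $\sigma'(u)$ by $\sigma'(0)$ costs $|\sigma''|\,|u|\,\|\boldsymbol\omega\|\,\|\mathbf g_0\|\le C\rho\cdot\rho\cdot\lambda$.
\end{itemize}

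\emph{Column field.} The same three steps apply.
\begin{itemize}
 \item[(i)] The residual replacement costs $|f|\,|\sigma(u)|\,\|\mathbf g^{\ell+1}\|\le C\lambda^2\rho\le C\lambda\alpha_\ell\rho$.
 \item[(ii)] The sensitivity replacement costs $|\sigma(u)|\,\|\mathbf g^{\ell+1}-\mathbf g_0\|\le C\rho\cdot\lambda\rho^2$.
 \item[(iii)] Taylor's theorem with $\sigma(0)=0$ gives $|\sigma(u)-\sigma'(0)u|\le C u^2\le C\rho^2$, so the last step costs $C\rho^2\|\mathbf g_0\|\le C\lambda\rho^2$. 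Here $\|\mathbf g_0\|\le C\lambda$ is the bound of Lemma~\ref{lem:pair-upper-stability} evaluated at $\rho=0$.
\end{itemize}
After these replacements, the averages $\Etrain[y\,\mathbf g_0x^\top]$ are exactly $\mathbf A_\ell$ from \eqref{eq:scaled-theorem-operator}, and summing the six bounds gives \eqref{eq:scaled-field-error}.

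The main delicacy is the residual term. Its natural bound is $\lambda^2\rho$, with no factor $\rho^2$, so it must be absorbed into $\lambda\alpha_\ell\rho$ via $\lambda=o(\alpha_\ell)$; this is precisely why that term appears in the statement. We also need the derivative bounds on $\sigma',\sigma''$ to be uniform on the neighborhood $|u|\le C r_*$ and on the upper preactivation set, which Lemma~\ref{lem:pair-upper-stability} keeps bounded. One must further check that the normalization $1/\sqrt{p_\ell}$, which matches the theorem's scaling, is carried consistently through all the constants.
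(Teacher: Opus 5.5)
Your proposal is correct and follows the paper's own argument: the paper makes the same three replacements (activation Taylor terms, frozen sensitivity $\mathbf g_0$, residual $\to y$) and bounds them via Lemma~\ref{lem:pair-upper-stability}. It obtains the same three costs, $C\lambda\rho^2$, $C\lambda\rho^3$ and $C\lambda^2\rho$, and absorbs the last into $C\lambda\alpha_\ell\rho$ using $\lambda=o(\alpha_\ell)$ and $\rho\le 1$; your term-by-term telescoping simply spells this out more explicitly.
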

\begin{proof}
The exact gradient updates in \eqref{eq:main-pair-gradient} are
\begin{equation}
 \dot{\mathbf w}_{\ell,j}=\frac{1}{\sqrt{p_\ell}}
       \Etrain[(y-f)\sigma'(h_{\ell,j})
       (\boldsymbol\omega_{\ell+1,j}^\top\mathbf g^{\ell+1})x],
 \qquad
 \dot{\boldsymbol\omega}_{\ell+1,j}=\frac{1}{\sqrt{p_\ell}}
       \Etrain[(y-f)\mathbf g^{\ell+1}\sigma(h_{\ell,j})].
 \label{eq:scaled-exact-pair-flow}
\end{equation}
Since $|h_{\ell,j}|\le C\rho$, Taylor's theorem gives
\[
 \sigma'(h_{\ell,j})=\sigma'(0)+O(\rho),\qquad
 \sigma(h_{\ell,j})=\sigma'(0)h_{\ell,j}+O(\rho^2).
\]
Replacing the activation and its derivative by these leading terms,
the sensitivity by $\mathbf g_0$, and the residual by $y$ gives the
linear system in Theorem~\ref{thm:controlled-feedback-formal}.
By Lemma~\ref{lem:pair-upper-stability}, the Taylor remainders contribute
at most $C\lambda\rho^2$, freezing the sensitivity contributes at most
$C\lambda\rho^3$, and replacing the residual contributes at most
$C\lambda^2\rho$. Using $\rho\le1$ and $\lambda=o(\alpha_\ell)$ gives
\eqref{eq:scaled-field-error}.
\end{proof}

The next lemma gives the modes of this linear approximation and the
growth bound used to integrate the remainder.

\begin{lemma}[Amplification of the linear singular modes]
\label{lem:pair-linear-modes}
Every solution of the linear system in
Theorem~\ref{thm:controlled-feedback-formal} satisfies, for
$0\le s\le t\le T$ with $\lambda T\le M$,
\[
 \|(\mathbf w_{\ell,j}^{\rm lin}(t),
       \boldsymbol\omega_{\ell+1,j}^{\rm lin}(t))\|
 \le C_M
 \|(\mathbf w_{\ell,j}^{\rm lin}(s),
       \boldsymbol\omega_{\ell+1,j}^{\rm lin}(s))\|.
\]
The projection identities \eqref{eq:scaled-separate-modal-amplification} hold
for these linear trajectories with both errors equal to zero.
\end{lemma}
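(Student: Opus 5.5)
The system is linear and its generator is explicit, so the plan is to diagonalize it through the singular value decomposition of $\mathbf A_\ell$. Write $\kappa=\sigma'(0)/\sqrt{p_\ell}$ and stack the state as $\xi=(\mathbf w,\boldsymbol\omega)\in\R^{p_{\ell-1}}\times\R^{p_{\ell+1}}$. Then
\[
\dot\xi=\kappa\mathbf B\xi,\qquad
\mathbf B=\begin{pmatrix}0&\mathbf A_\ell^\top\\ \mathbf A_\ell&0\end{pmatrix}.
\]
The matrix $\mathbf B$ is symmetric. Take a full SVD $\mathbf A_\ell=\sum_i s_i\mathbf u_i\mathbf v_i^\top$, completed by orthonormal bases of $\ker\mathbf A_\ell$ and $\ker\mathbf A_\ell^\top$. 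The vectors $(\mathbf v_i,\pm\mathbf u_i)/\sqrt2$ are then orthonormal eigenvectors of $\mathbf B$ with eigenvalues $\pm s_i$. The remaining kernel vectors, padded with zeros, have eigenvalue $0$. Together these form an orthonormal eigenbasis of the whole space.

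\textbf{Projection identities.} For a singular triple, set $a(t)=\langle\mathbf w^{\rm lin}(t),\mathbf v_i\rangle$ and $b(t)=\langle\boldsymbol\omega^{\rm lin}(t),\mathbf u_i\rangle$. Using $\mathbf A_\ell\mathbf v_i=s_i\mathbf u_i$ and $\mathbf A_\ell^\top\mathbf u_i=s_i\mathbf v_i$,
\[
\dot a=\kappa\langle\boldsymbol\omega^{\rm lin},\mathbf A_\ell\mathbf v_i\rangle=\kappa s_i b,\qquad
\dot b=\kappa\langle\mathbf w^{\rm lin},\mathbf A_\ell^\top\mathbf u_i\rangle=\kappa s_i a.
\]
This is a closed $2\times2$ system. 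It is exactly the computation in the Remark on the decaying component: $a+b$ grows at rate $\kappa s_i$ and $a-b$ decays at the same rate. Solving and recombining gives
\[
a=c^+e^{\kappa s_it}+c^-e^{-\kappa s_it},\qquad
b=c^+e^{\kappa s_it}-c^-e^{-\kappa s_it},
\]
with $c^\pm=\tfrac12(a(0)\pm b(0))$. These are the identities \eqref{eq:scaled-separate-modal-amplification} with zero error. Note that this argument does not use orthogonality of the other modes at all.

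\textbf{Growth bound.} The flow is $\xi(t)=e^{\kappa(t-s)\mathbf B}\xi(s)$. Since $\mathbf B$ is symmetric, $\|e^{\kappa(t-s)\mathbf B}\|_2=e^{\kappa(t-s)\|\mathbf B\|_2}$. Moreover $\|\mathbf B\|_2=\|\mathbf A_\ell\|_2$, because the spectrum of $\mathbf B$ is $\{\pm s_i\}\cup\{0\}$. Hence
\[
\|\xi(t)\|\le e^{\lambda(t-s)}\|\xi(s)\|\le e^{\lambda T}\|\xi(s)\|\le e^{M}\|\xi(s)\|,
\]
so $C_M=e^M$ works. This constant is independent of the scales and of time, which is what the lemma requires.

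There is no genuine obstacle here. The only point needing care is identifying $\|\mathbf B\|_2$ with $\|\mathbf A_\ell\|_2$, which is what ties the bound to the specific rate $\lambda$. An alternative that avoids the explicit spectrum is the energy estimate: $\tfrac{d}{dt}\|\xi\|^2=4\kappa\langle\boldsymbol\omega,\mathbf A_\ell\mathbf w\rangle\le2\lambda\|\xi\|^2$, which uses $2\|\mathbf w\|\|\boldsymbol\omega\|\le\|\xi\|^2$. Grönwall then gives the same bound $e^{\lambda(t-s)}$.
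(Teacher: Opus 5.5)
Your proposal is correct and follows the paper's proof: both bound the growth using the symmetric generator $\begin{pmatrix}0&\mathbf A_\ell^\top\\ \mathbf A_\ell&0\end{pmatrix}$, whose spectrum lies in $[-\|\mathbf A_\ell\|_2,\|\mathbf A_\ell\|_2]$, and both obtain the modal identities by projecting onto a singular pair to get a closed $2\times 2$ system and taking its sum and difference. Your version additionally makes the constant explicit ($C_M=e^M$), and your energy/Gr\"onwall estimate is an equally valid alternative route to the same bound.
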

\begin{proof}
The linear system is generated by the symmetric matrix
\[
 \frac{\sigma'(0)}{\sqrt{p_\ell}}
 \begin{pmatrix}0&\mathbf A_\ell^\top\\\mathbf A_\ell&0\end{pmatrix},
\]
whose eigenvalues lie in $[-\lambda,\lambda]$. Since
$\lambda(t-s)\le M$, the scalar evolution along each eigendirection
is bounded by a constant depending only on $M$, proving the norm bound.
Projecting the row and column equations onto a singular pair and
then taking their sum and difference gives
\begin{equation}
 \langle\mathbf w_{\ell,j}^{\rm lin}(t),v_i\rangle
 \pm\langle\boldsymbol\omega_{\ell+1,j}^{\rm lin}(t),u_i\rangle
 =2c_{ji}^{\pm}e^{\pm\sigma'(0)s_i t/\sqrt{p_\ell}}.
 \label{eq:scaled-exact-modal-amplification}
\end{equation}
Adding and subtracting these identities gives
\eqref{eq:scaled-separate-modal-amplification}. Both weights have the
same growing amplitude, and their decaying components have opposite
signs. For $s_i=0$, both projections are constant.
\end{proof}

\begin{proof}[Proof of Theorem~\ref{thm:controlled-feedback-formal}]
We combine the gradient approximation from
Proposition~\ref{prop:pair-linearization} with the growth bound in
Lemma~\ref{lem:pair-linear-modes} to control the error over $[0,T]$.
The initial norm is at most $K\alpha_\ell$ for a fixed $K>0$, so
Lemma~\ref{lem:pair-linear-modes} bounds the norm of the linear solution
by $K_M\alpha_\ell$ throughout $[0,T]$.
Let $E(t)$ be the norm of the difference between the trained pair and
its linear approximation. Integrating the remainder bound and using
the bounded linear evolution gives, while $\rho\le r_*$,
\begin{equation}
 E(t)\le C_M\lambda\int_0^t
                    [\rho(s)^2+\alpha_\ell\rho(s)]\,ds.
 \label{eq:scaled-duhamel}
\end{equation}
Suppose $\rho(s)\le2K_M\alpha_\ell$ for $0\le s\le t$.
Since $\lambda t\le M$, this gives
\[
 E(t)\le C_M\lambda t\,\alpha_\ell^2
       \le C_M\alpha_\ell^2.
\]
For sufficiently small scales,
$\rho(t)\le K_M\alpha_\ell+E(t)<\tfrac32K_M\alpha_\ell$
and $2K_M\alpha_\ell<r_*$.
The bound assumed on $\rho$ holds initially and cannot reach equality
before $T$, by this strict improvement. Thus the flow stays in the
bounded neighborhood and continues to $T$, proving
\eqref{eq:scaled-theorem-bound}.
To obtain the bounds on the shorter interval, integrate the linear
column equation and use the bound on the linear row:
\[
 \|\boldsymbol\omega_{\ell+1,j}^{\rm lin}(t)\|
 \le C\alpha_{\ell+1}+C_M\lambda t\,\alpha_\ell.
\]
If $\lambda T\le M\alpha_{\ell+1}/\alpha_\ell$, this is at most
$C_M\alpha_{\ell+1}$, while the preceding error estimate gives
$E(t)\le C_M\alpha_\ell\alpha_{\ell+1}$.
Adding the error to the linear column bound proves the stated bound
for the trained column.
Finally, Lemma~\ref{lem:pair-linear-modes} gives the projections of the
linear trajectory. Projection onto a unit singular vector cannot
increase the error, so the same bound proves
\eqref{eq:scaled-separate-modal-amplification}.
\end{proof}

\section{Proof of Theorem~\ref{thm:three-layer-closed-flow}}
\label{app:three-forward-flow}

We start by defining the continuous-time
variant of Algorithm~\ref{alg:combined} studied in
Theorem~\ref{thm:three-layer-closed-flow}. At each iteration, the row
correction, column correction and readout gradient update are each
multiplied by the same step size $dt$. We then take $dt\to0$.
A common positive learning-rate multiplier can be absorbed into time, whereas
unequal multipliers change the relative dynamics.
Algorithm~\ref{alg:three-continuous} specifies these updates for a
three-layer network.

\begin{algorithm}[H]
\caption{Three-layer BackLoFi: updates for the continuous-time limit}
\label{alg:three-continuous}
\begin{algorithmic}[1]
\setlength{\itemsep}{3pt}
\Require Forward Neural LoFi features $\mathbf h_1^\star,\mathbf h_2^\star$
 and readout $w_3$; step size $dt$; duration $T$.
\State Set $C=0$ and keep both Gaussian lifts fixed.
\For{each time step of size $dt$, up to time $T$}
 \State \textbf{Select the coupled features.}
 \Statex\hspace{\algorithmicindent}
  Compute $g^2=w_3\sigma'(h_2)$ and the leading singular pair
  $(u,\mathbf v)$ of $\Etrain[y\,g^2\otimes\mathbf z_0]$.
 \Statex\hspace{\algorithmicindent}
  Set $\psi=\langle u,g^2\rangle$, $\chi=\langle\mathbf v,\mathbf z_0\rangle$;
  hold these scores fixed for this step.
 \State \textbf{Correct the first layer.}
  $h_{1,a}^\star\gets h_{1,a}^\star+dt\,\chi$
  for every $a$.
 \State \textbf{Refresh forward LoFi.}
  Recompute $z_1,K_1$ and set $\mathbf h_2^\star\gets\Fwd_2(K_1,y)$.
 \State \textbf{Add the column spike.} Using the updated $z_1$, set
 \Statex\hspace{\algorithmicindent}
  $q\gets\Etrain[y\psi z_1]$,\qquad
  $C\gets C+dt\,u\otimes q$.
 \State \textbf{Train the readout.} Recompute $h_2,z_2,h_3$, then set
 \Statex\hspace{\algorithmicindent}
  $w_3\gets w_3+dt\,\Etrain[(y-h_3)z_2]$.
\EndFor
\State \Return $h_3=\langle w_3,z_2\rangle$.
\end{algorithmic}
\end{algorithm}

\subsection{Full description of the Theorem and the underlying order parameters}

We now give the full description of Theorem~\ref{thm:three-layer-closed-flow}
and the order parameters that allow the equations to close. We study the
continuous-time limit of Algorithm~\ref{alg:three-continuous}, followed
by the width limits $p_1\to\infty$ and $p_2\to\infty$.

\paragraph{Filtered features and accumulated coupling.}
The input map $\mathbf z_0$ is fixed, with kernel
$K_0(x,x')=\langle\mathbf z_0(x),\mathbf z_0(x')\rangle$.
The filtered features $\mathbf h_{1,t}^\star(x)\in\mathbb R^{k_1}$ and
$\mathbf h_{2,t}^\star(x)\in\mathbb R^{k_2}$ describe the two hidden layers
before their Gaussian lifts. The first starts from
$\mathbf h_{1,0}^\star=\Fwd_1(K_0,y)$ and follows the lower correction;
the second is refreshed as $\mathbf h_{2,t}^\star=\Fwd_2(K_{1,t},y)$.

Let $\mathbf r\sim\nu=\mathcal N(0,I_{k_1})$ and
$\eta\sim\mu=\mathcal N(0,I_{k_2})$ be the fixed lift rows.
The scalar function $C_t(\eta,\mathbf r)$ records the total column
coupling between these rows, starting from $C_0=0$. Its action is
\begin{equation}
 (C_tf)(\eta)=\mathbb E_{\mathbf r}
                    [C_t(\eta,\mathbf r)f(\mathbf r)].
 \label{eq:three-gaussian-column-kernel}
\end{equation}
The hidden preactivations and activations are therefore
\begin{align}
 h_{1,t}(\mathbf r,x)&=\mathbf r^\top\mathbf h_{1,t}^\star(x),
 &z_{1,t}&=\sigma(h_{1,t}),\nonumber\\
 h_{2,t}(\eta,x)&=\eta^\top\mathbf h_{2,t}^\star(x)
       +(C_tz_{1,t}(\cdot,x))(\eta),
 &z_{2,t}&=\sigma(h_{2,t}).
 \label{eq:section-three-upper-response}
\end{align}
Thus earlier columns act on the current lower features through $C_tz_{1,t}$.
The arguments of $C_t$ have dimensions $k_2$ and $k_1$, independent of
network width. Neuron inner products are Gaussian expectations, or empirical
averages at finite width; input coefficient inner products are Euclidean.
This normalization absorbs the factors $1/\sqrt{p_j}$
used for the feature vectors in the main text.

\paragraph{Output and sensitivity.}
The scalar output is $h_{3,t}=\langle w_{3,t},z_{2,t}\rangle$.
To track how readout learning changes the sensitivity, retain the response
$S_t(\eta,s)=w_{3,t}(\eta)\sigma'(s)$. Evaluating it at the current
preactivation gives
\begin{equation}
 g_t^2(\eta,x)=S_t(\eta,h_{2,t}(\eta,x)).
 \label{eq:section-three-response}
\end{equation}
The output $h_{3,t}$ alone does not determine this sensitivity when the
hidden features change; $S_t$ retains the readout's response to that change.
Together, $\mathbf h_{1,t}^\star,\mathbf h_{2,t}^\star,C_t,S_t,h_{3,t}$
determine the feature and sensitivity kernels:
\begin{align}
 K_{1,t}(x,x')&=\mathbb E_{\mathbf r}
       [z_{1,t}(\mathbf r,x)z_{1,t}(\mathbf r,x')],\nonumber\\
 K_{2,t}(x,x')&=\mathbb E_\eta[z_{2,t}(\eta,x)z_{2,t}(\eta,x')],\qquad
 G_t(x,x')=\mathbb E_\eta[g_t^2(\eta,x)g_t^2(\eta,x')].
 \label{eq:section-three-kernels}
\end{align}

\paragraph{Selection and continuous updates.}
At each time, $(\psi_t,\chi_t)$ maximizes
\eqref{eq:variational-feature-pair} for the current kernels $G_t,K_0$.
The singular-vector equations become
\begin{equation}
 \rho_t\psi_t(x)=\Etrain[G_t(x,X)y\chi_t(X)],\qquad
 \rho_t\chi_t(x)=\Etrain[K_0(x,X)y\psi_t(X)],
 \label{eq:main-three-selection}
\end{equation}
where both features have unit RKHS norm and $\rho_t>0$ is the achieved
correlation. Algorithm~\ref{alg:three-continuous} gives
\begin{align}
 \dot h_{1,t,a}^{\star}(x)
 &=\chi_t(x),\qquad a=1,\ldots,k_1,
 \label{eq:section-three-feature-flow}\\
 \partial_t C_t(\eta,\mathbf r)
 &=\frac{1}{\rho_t}
       \Etrain[y\chi_t(X)g_t^2(\eta,X)]
       \Etrain[y\psi_t(X)z_{1,t}(\mathbf r,X)],
 \label{eq:section-three-column-flow}\\
 \partial_t S_t(\eta,s)
 &=\sigma'(s)\Etrain[(y-h_{3,t}(X))z_{2,t}(\eta,X)].
 \label{eq:section-three-response-flow}
\end{align}
The first equation corrects the lower filter. The second accumulates its
paired column spikes, using
$u_t=\rho_t^{-1}\Etrain[y\chi_tg_t^2]$ and
$q_t=\Etrain[y\psi_tz_{1,t}]$.
The third is the final-layer gradient-flow update, with $s$ held fixed.
The output evolves by \eqref{eq:main-three-informal-flow}.

\paragraph{Forward filter.}
The forward map $\Fwd_\ell(K,y)$ is Algorithm~\ref{alg:neural-lofi}
expressed in feature space. Let $\mathcal H_K$ be the RKHS of the current
kernel $K$. Following \citet{dandi_neural_lofi}, its features are selected by
\begin{align}
 f_0&\in\underset{\|f\|_{\mathcal H_K}=1}{\arg\max}
                  \;\Etrain[yf],\nonumber\\
 f_a&\in\underset{\substack{\|f\|_{\mathcal H_K}=1\\
              \langle f,f_b\rangle_{\mathcal H_K}=0,\;1\le b<a}}
              {\arg\max}\left|\Etrain[yf^2]\right|,
              \qquad a=1,\ldots,k_\ell-1.
 \label{eq:three-refresh-filter}
\end{align}
The first coordinate maximizes first-order correlation; the remaining
coordinates successively maximize second-order correlation. Orthogonality
is measured in $\mathcal H_K$ among the second-order features.
The map returns $\Fwd_\ell(K,y)=(f_0,f_1,\ldots,f_{k_\ell-1})$.
At initialization, choose the sign of each second-order forward feature
and of $\chi$ to have positive empirical inner product with a fixed
reference function. At later steps, use its previous value as the
reference, and change the sign of $\psi$ together with that of $\chi$.

\begin{assumption}[Data, lifts and initialization]
\label{ass:three-setup}
The training set is finite, and the numbers of filtered coordinates
$k_1,k_2$ are fixed. The lift laws are
$\nu=\mathcal N(0,I_{k_1})$ and $\mu=\mathcal N(0,I_{k_2})$, or their
finite empirical versions. All lift rows are held fixed in time.
Initially $C_0=0$. Choose $h_{3,0}$ in the span of the initial features
$z_{2,0}$ on the training inputs, and set $S_0$ by
\eqref{eq:three-response-initialization}.
\end{assumption}

\begin{assumption}[Activation]
\label{ass:three-activation}
The activation is $C^2$, and $\sigma,\sigma',\sigma''$ are bounded.
\end{assumption}

\begin{assumption}[Spectral gaps]
\label{ass:three-regularity}
On $[0,T]$, the label-weighted first moments used by the forward
filters have norms bounded away from zero. The retained eigenvalues
are uniformly separated in absolute value from zero, from one another,
and from the discarded spectrum. The leading coupled singular value
stays uniformly separated from zero and from the remaining singular values.
\end{assumption}

\begin{assumption}[Sequential width limit]
\label{ass:three-width}
The two lifts are sampled from independent nested Gaussian sequences.
The data, ranks and initial prediction vector are the same across widths.
The same reference functions initialize the signs at every width,
with nonzero inner product with the limiting features.
The initial upper Gaussian-population kernel matrix is
positive definite. The population trajectory satisfies
Assumption~\ref{ass:three-regularity} on $[0,T]$.
\end{assumption}

\begin{formaltheorem}[Precise version of Theorem~\ref{thm:three-layer-closed-flow}]
\label{thm:three-layer-closed-flow-formal}
Under Assumptions~\ref{ass:three-setup}--\ref{ass:three-regularity}, the
current-state equations have a unique local solution. On $[0,T]$,
the piecewise-linear interpolants of
Algorithm~\ref{alg:three-continuous} converge to it uniformly as
$dt\to0$, with error $O_T(dt)$.
Under Assumption~\ref{ass:three-width}, the continuous flows also converge
almost surely, first as $p_1\to\infty$ at fixed admissible $p_2$, then as
$p_2\to\infty$, to the solution of
\eqref{eq:main-coupled-filter-flow}--\eqref{eq:main-three-informal-flow},
with expectations taken over both Gaussian lifts.
Convergence is uniform on $[0,T]$ for $C_t,S_t$ in the fixed norms
specified in \eqref{eq:three-proof-state-space}, and on the data and
$[0,T]$ for the filtered features,
output, selected scores and current kernels $K_{1,t},K_{2,t},G_t$.
\end{formaltheorem}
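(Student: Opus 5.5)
The plan is to treat the algorithm as an explicit Euler scheme for an ODE on a Banach state space, and then pass to the width limits by continuity of the vector field in the lift laws. The state is
\[
\Theta_t=(\mathbf h^\star_{1,t},\,C_t,\,w_{3,t})
\]
restricted to the $n$ training inputs. Here $\mathbf h^\star_{1,t}\in\R^{n\times k_1}$, $C_t\in L^2(\mu\otimes\nu)$, and $w_{3,t}\in L^2(\mu)$. The readout $w_3$ determines $S_t$, and $h_{3,t}$ is a derived quantity; equivalently one can track $S_t$ in a weighted sup norm in $s$, which is the norm I would fix in \eqref{eq:three-proof-state-space}. All the other quantities are functions of $\Theta_t$:
\begin{itemize}
\item $\mathbf h^\star_{2,t}=\Fwd_2(K_{1,t},y)$;
\item $h_{2,t}$, $z_{2,t}$, $g^2_t$ and the kernels $K_{1,t}$, $K_{2,t}$, $G_t$;
\item the selected pair $(\psi_t,\chi_t,\rho_t)$.
\end{itemize}
The algorithm then reads $\Theta_{t+dt}=\Theta_t+dt\,\mathcal V(\Theta_t)+O(dt^2)$, where $\mathcal V$ is the right-hand side of \eqref{eq:section-three-feature-flow}--\eqref{eq:section-three-response-flow}. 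The $O(dt^2)$ term appears because the algorithm uses the updated $z_1$ in the column spike and the updated $h_2$ in the readout step.

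\emph{Step 1: local Lipschitz continuity of $\mathcal V$.} By Assumption~\ref{ass:three-activation}, the maps $\Theta\mapsto K_{1},K_2,G$ are Lipschitz on bounded sets. This follows because $\sigma,\sigma',\sigma''$ are bounded and Gaussian moments of $\mathbf r,\eta$ are finite, so differences can be controlled by Cauchy--Schwarz in $L^2(\mu\otimes\nu)$.

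The spectral selections are finite-dimensional. On the training set, $\Fwd_2$ amounts to a normalized first moment together with leading generalized eigenvectors of $K^{1/2}\mathrm{diag}(y)K^{1/2}$. The coupled pair $(\psi,\chi)$ is the leading singular pair of $G^{1/2}\mathrm{diag}(y)K_0^{1/2}$. Under the gap conditions of Assumption~\ref{ass:three-regularity}, the Davis--Kahan / Wedin theorems \citep{yu2015davis_kahan} make these eigenvector and singular-vector maps Lipschitz, and the sign convention selects a continuous branch. If $K_1$ is singular I would instead work with the finite-rank feature maps directly.

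Given these facts, Picard--Lindel\"of yields a unique local solution. The standard global error estimate for Euler schemes with a Lipschitz field, discrete Gr\"onwall, together with the $O(dt^2)$ local consistency, gives the $O_T(dt)$ bound for the piecewise-linear interpolants. One must also check that the discrete trajectory stays in the region where the gaps hold; this follows from the uniform convergence itself, via a continuity/bootstrap argument.

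\emph{Step 2: deriving \eqref{eq:main-three-informal-flow}.} This is a chain-rule computation on $h_{3,t}=\mathbb E_\eta[w_{3,t}z_{2,t}]$, with four terms:
\begin{itemize}
\item the $\dot w_3$ term gives the $K_2$ residual term;
\item $\dot{\mathbf h}^\star_2$ enters through $\eta^\top\dot{\mathbf h}^\star_2$;
\item $\partial_tC_t$ acting on $z_1$ gives, via $\mathbb E_\eta[g^2_t u_t]$ and \eqref{eq:main-three-selection}, the term $\psi_t\,\Etrain[K_1 y\psi_t]$;
\item $C_t\dot z_1$ gives the last term.
\end{itemize}
Differentiability of $\mathbf h^\star_{2,t}$ in $t$ follows from Step 1 together with smoothness of isolated spectral projectors.

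\emph{Step 3: the sequential width limits.} At finite widths the state lives on empirical lift measures $\nu_{p_1},\mu_{p_2}$. The vector field $\mathcal V_{\nu,\mu}$ depends on the lift laws only through integrals of bounded-growth functions of $(\mathbf r,\eta)$. For nested i.i.d.\ sequences, the strong law of large numbers, made uniform over the relevant families by a countable-dense/equicontinuity argument, gives $\mathcal V_{\nu_{p_1},\mu}\to\mathcal V_{\nu,\mu}$ almost surely, uniformly on bounded sets of states. To compare states across widths, I would embed $C_t$ as a function on the empirical support, or equivalently represent $C_t$ through its finite-dimensional generators $(u_s,q_s)$. Gr\"onwall then transfers convergence of the fields to convergence of the trajectories, first in $p_1$ with $p_2$ fixed, then in $p_2$. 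The positive-definiteness and nonzero-reference assumptions keep the spectral gaps and sign choices stable in a neighbourhood of the population trajectory.

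I expect the main obstacle to be the uniform Lipschitz control of the spectral selection maps, namely $\Fwd_2$ composed with $K_1$, and $(\psi,\chi)$ as functions of $G$, including consistent sign and branch selection along the trajectory and across widths. This is where the gap assumptions must be propagated from the limit to the discrete and finite-width trajectories via a bootstrap argument.
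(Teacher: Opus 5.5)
Your treatment of the time discretization essentially follows the paper. You use the same state $(\mathbf h_1^\star,C,w_3)$; the paper's norm on $S$ is exactly $\|w_3\|_{L^2(\mu)}$. You also use Lipschitz spectral selections from the gap assumption, Picard iteration, the Euler recursion $E_{k+1}\le(1+L\,dt)E_k+C\,dt^2$ with a bootstrap, and the same four-term chain rule for $h_{3,t}$.

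\textbf{Step 1 needs an a priori bound.} As stated, Step 1 is not correct. For $K_1,K_2$ your Cauchy--Schwarz argument works, because $z_1,z_2$ are bounded. However, $G_t$, the operator $A_t$ and the column velocity $u_t=\rho_t^{-1}\Etrain[y\chi_tg_t^2]$ contain terms such as $w_3(\eta)\bigl(\sigma'(h_2)-\sigma'(\bar h_2)\bigr)$. Bounding these by $\|h_2-\bar h_2\|_{L^2}$ requires $\sup_\eta|w_3(\eta)|$. On $L^2(\mu)$-balls the Lipschitz constant is unbounded: concentrate $w_3$ on a set of small Gaussian measure. A sup norm in $s$ on $S$ does not help, since the difficulty is in the neuron variable. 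At a fixed finite width the norms are equivalent, so the issue is invisible there. It matters for Gaussian lifts and for constants that are uniform in width.

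The paper works on $\{\|S(\eta,\cdot)\|_{\mathcal D}\le M\}$ a.e.\ and shows this set is not left on $[0,T]$. Indeed $w_{3,0}$ is a finite combination of bounded activations, and $\|\dot w_{3,t}\|_\infty\le\|\sigma\|_\infty(\|y\|_n+\|\sigma\|_\infty\|w_{3,t}\|_2)$. Gr\"onwall on $\|w_{3,t}\|_2$ then bounds $\|w_{3,t}\|_\infty$. You need this estimate for the Picard iterates, for the Euler iterates, and uniformly in width before Picard--Lindel\"of applies.

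\textbf{The width limits take a different route, which does not work as formulated.} The paper couples the sampled rows to the Gaussian labels by quadratic optimal-transport (quantization) maps $T_{1,p_1},T_{2,p_2}$, with cells of mass $1/p_j$. It extends finite-width profiles to be constant on the cells, which places all flows in the single Hilbert space \eqref{eq:three-proof-state-space}. The finite-width field is then the Gaussian field with the labels replaced by $T(\mathbf r),T(\eta)$. Since labels enter only through projections such as $\mathbf r^\top\mathbf h_1^\star$ and $\eta^\top\mathbf h_2^\star$, the two fields differ by $C_T(\delta_{1,p_1}+\delta_{2,p_2})$ on bounded sets. The strong law enters essentially only through $\delta_{j,p_j}\to0$.

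Your claim that $\mathcal V_{\nu_{p_1},\mu}\to\mathcal V_{\nu,\mu}$ uniformly on bounded sets of states is not available. The two fields act on different spaces, and no law of large numbers is uniform over $L^2(\nu)$-balls of coupling kernels. What your approach can give is a propagation-of-chaos (Sznitman-type) comparison:
\begin{itemize}
\item evaluate the limiting profiles at the sampled neurons; they are continuous in the labels, since for instance $q_s(\mathbf r)=\Etrain[y\psi_s\sigma(\mathbf r^\top\mathbf h_{1,s}^\star)]$;
\item apply the law of large numbers only along the single limiting trajectory;
\item close with width-uniform Lipschitz constants.
\end{itemize}
That is a legitimate alternative. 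It still requires an explicit embedding to state convergence of $C_t,S_t$ in the fixed norms.

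\textbf{The initial condition is not controlled.} You misplace the positive-definiteness assumption: it does not concern spectral gaps but the initial condition. The initial readout is the minimum-norm one realizing $h_{3,0}$, namely $w_{3,0}=\Etrain[b_0z_{2,0}]$ with $\Etrain[K_{2,0}(\cdot,X)b_0(X)]=h_{3,0}$. Here $K_{2,0}$ depends on the sampled lifts. Invertibility of its limit is what gives $E(0)\to0$, which your Gr\"onwall step needs but never addresses.
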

Here an admissible fixed upper sample means that the mixed
Gaussian-lower/empirical-upper flow realizes the prescribed initial
predictions, has a stable initial kernel inverse, and satisfies
Assumption~\ref{ass:three-regularity} on $[0,T]$. The width statement concerns
sufficiently large upper widths, for which these conditions hold.

\subsection{Proof of Theorem \ref{thm:three-layer-closed-flow-formal}}

\paragraph{Proof sketch.}
To obtain a closed evolution and compare networks of different widths,
we represent the accumulated columns by a kernel $C_t(\eta,\mathbf r)$
on the fixed Gaussian label spaces. Each selected feature pair contributes
a rank-one increment, and $C_tz_{1,t}$ records the accumulated action
on the current lower features.
Differentiating $C_tz_{1,t}$ therefore retains both newly added columns
and the action of earlier columns on moving features. Together with
the readout update, this gives the output equation and the evolving
sensitivity. The spectral gaps ensure that the selected features vary
continuously with the state, yielding a locally Lipschitz flow.
Each iteration of Algorithm~\ref{alg:three-continuous} approximates
\eqref{eq:section-three-feature-flow}--\eqref{eq:section-three-response-flow}
over an interval of length $dt$ with error $O(dt^2)$, giving an
$O_T(dt)$ error on $[0,T]$.
Finally, we compare empirical and Gaussian lifts on the same probability
spaces. Stability of the flow lets us average the first lift and then
the second, while retaining the dependence of learned features on the
sampled neurons.

We start by establishing the continuity of the maximizing feature pair
with respect to the evolving order parameters.

\begin{lemma}[Continuity of the maximizing feature pair]
\label{lem:three-mode-continuity}
Fix the training data, input feature map $\mathbf z_0$ and lift laws
$\mu,\nu$. Under Assumptions~\ref{ass:three-activation}
and~\ref{ass:three-regularity}, the selected singular pair $(u,\mathbf v)$
and scores $(\psi,\chi)$ depend locally Lipschitz-continuously on the current
order parameters $(\mathbf h_1^\star,C,S)$, with
$\mathbf h_2^\star=\Fwd_2(K_1,y)$ and signs chosen as above.
Use the empirical norm $\|f\|_n^2=\Etrain[\|f(X)\|^2]$ for the
filtered features and scores, the Hilbert--Schmidt norm for $C$, and
the $L^2(\mu;\mathcal D)$ norm for $S$, where
$\mathcal D=\operatorname{span}\{\sigma'\}$ and
$\|b\sigma'\|_{\mathcal D}=|b|$.
The statement holds on bounded sets with
$\|S(\eta,\cdot)\|_{\mathcal D}\le M$ almost everywhere, for fixed $M<\infty$.
\end{lemma}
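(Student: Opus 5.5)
The plan is to write the selected pair as the composition of a chain of locally Lipschitz maps from the order parameters $(\mathbf h_1^\star,C,S)$ to the scores. The chain has four links:
\begin{align*}
(\mathbf h_1^\star,C,S) &\;\mapsto\; (z_1,K_1)\;\mapsto\;\mathbf h_2^\star=\Fwd_2(K_1,y)\;\mapsto\; h_2\\
&\;\mapsto\; g^2=S(\cdot,h_2)\;\mapsto\;\mathbf A=\Etrain[y\,g^2\otimes\mathbf z_0]\;\mapsto\;(u,\mathbf v,\psi,\chi).
\end{align*}
Since the training set is finite, every object evaluated on the data is a finite vector or a finite-rank operator. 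Lipschitz bounds therefore reduce to bounds in finite-dimensional norms, plus $L^2$ norms over the Gaussian lift variables $\mathbf r,\eta$.

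\textbf{First two links.} Since $\sigma$ is $C^2$ with bounded derivatives, $z_1(\mathbf r,x)=\sigma(\mathbf r^\top\mathbf h_1^\star(x))$ is Lipschitz in $\mathbf h_1^\star$, with constant $\|\sigma'\|_\infty|\mathbf r|$. This constant is square-integrable under $\nu$, so $z_1$ is Lipschitz into $L^2(\nu)$ and $K_1$ is Lipschitz as an $n\times n$ matrix. Next, $\Fwd_2(K_1,y)$ consists of the normalized first-moment feature $f_0\propto K_1y$ and the leading eigenvectors of the operator $f\mapsto\Etrain[yf^2]$ on $\mathcal H_{K_1}$. On the data these are generalized eigenproblems built from $K_1$ and $\mathrm{diag}(y)$. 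Assumption~\ref{ass:three-regularity} keeps the first-moment norm bounded away from zero and the retained eigenvalues separated from each other and from the discarded spectrum. The normalization of $f_0$ is then smooth, and Davis--Kahan-type perturbation (via the Riesz projection, as in \citet{yu2015davis_kahan}) gives Lipschitz eigenvectors up to sign. The sign convention (positive inner product with the previous value or the reference) fixes the sign locally, because a Lipschitz-small perturbation cannot flip a strictly positive inner product.

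\textbf{Upper links.} $h_2(\eta,x)=\eta^\top\mathbf h_2^\star(x)+\mathbb E_{\mathbf r}[C(\eta,\mathbf r)z_1(\mathbf r,x)]$ is Lipschitz in $(\mathbf h_2^\star,C,z_1)$ into $L^2(\mu)$. This follows from Cauchy--Schwarz, using the Hilbert--Schmidt norm of $C$ and boundedness of $\sigma$. Writing $S(\eta,s)=b(\eta)\sigma'(s)$ with $|b|\le M$, we have
\[
g^2(\eta,x)-\tilde g^2(\eta,x)=(b-\tilde b)\sigma'(h_2)+\tilde b\bigl(\sigma'(h_2)-\sigma'(\tilde h_2)\bigr).
\]
The first term is bounded by the $L^2(\mu;\mathcal D)$ norm of $S$, since $\sigma'$ is bounded. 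The second is bounded using $|\tilde b|\le M$ and the bounded $\sigma''$. This is exactly why the statement restricts to the set $\|S(\eta,\cdot)\|_{\mathcal D}\le M$. Hence $\mathbf A$, viewed as an operator from the input coefficients to $L^2(\mu)$, is Lipschitz in Hilbert--Schmidt norm, because $\mathbf z_0$ is fixed and bounded on the data. The leading singular pair is the top eigenvector of the finite-rank self-adjoint operator $\mathbf A^\top\mathbf A$ and its image. By the gap assumption, Wedin's theorem gives Lipschitz dependence of $(u,\mathbf v)$ up to sign, and $s_1=\rho$ is Lipschitz and bounded below. The scores $\chi=\langle\mathbf v,\mathbf z_0\rangle$ and $\psi=\langle u,g^2\rangle=\rho^{-1}\Etrain[G(\cdot,X)y\chi(X)]$ then inherit the Lipschitz bounds, with the joint sign fixed as above.

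\textbf{Main obstacle.} The step needing most care is the forward refresh $\Fwd_2$. The RKHS normalization and the orthogonality constraints in $\mathcal H_{K_1}$ involve $K_1^{-1}$ or pseudo-inverses, whose Lipschitz control needs $K_1$ to stay well conditioned on the span used. I would handle this by reformulating each variational problem on the finite data as a generalized symmetric eigenproblem $\mathrm{diag}(y)$ versus $K_1^{-1}$, equivalently eigenvectors of $K_1^{1/2}\mathrm{diag}(y)K_1^{1/2}$. The key is that the matrix square root is locally Lipschitz where $K_1$ is positive definite, or on its range with a spectral gap from zero. I would fold this nondegeneracy into the bounded neighborhood where the spectral-gap assumption holds; the constants are then uniform on bounded sets.
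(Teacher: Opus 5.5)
Your chain of maps is the paper's proof. It uses Lipschitz $z_1$ from bounded $\sigma'$ and finite second moments of $\mathbf r$, and perturbation of the forward filter under the assumed gaps. It then uses the Hilbert--Schmidt bound for $h_2$ and the same two-term split of $g^2-\tilde g^2$, which is where $\|S(\eta,\cdot)\|_{\mathcal D}\le M$ enters. Finally, it uses linearity of $\mathbf A$ in $g^2$, gap-based perturbation of the leading singular pair, $u=\mathbf A\mathbf v/\rho$, and the two score formulas. For the singular pair you invoke Wedin and the paper applies the implicit function theorem to $A^*A\mathbf v=\rho^2\mathbf v$, $\|\mathbf v\|=1$; these are interchangeable.

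As written, however, your argument does not prove the stated lemma at the step you flag as the main obstacle. You pose the forward problems on the data as generalized eigenproblems for $\operatorname{diag}(y)$ against $K_1^{-1}$, or through $K_1^{1/2}\operatorname{diag}(y)K_1^{1/2}$. That requires $K_1$ to be uniformly positive definite, because the matrix square root is not Lipschitz near singular matrices and pseudo-inverses jump when the rank changes. Nothing in Assumptions~\ref{ass:three-activation} and~\ref{ass:three-regularity} provides this. For example, two training inputs with equal $\mathbf h_1^\star$ give identical activation profiles and a singular $K_1$, while the gap conditions can still hold. So ``folding the nondegeneracy into the neighborhood'' adds a hypothesis, and your constants blow up near such points.

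The obstacle is avoidable, and avoiding it is what the paper does. Perturb the label-weighted moments $\mathbf m=\Etrain[y\,z_1]$ and $\widehat{\mathbf C}=\Etrain[y\,z_1\otimes z_1]$ as objects on $L^2(\nu)$. They are linear and bilinear in $z_1$, hence Lipschitz with no conditioning requirement. Take $f=\langle\mathbf w,z_1\rangle$ with $\mathbf w$ in the span of the activation profiles; eigenvectors of $\widehat{\mathbf C}$ with nonzero eigenvalue lie there automatically. Then $\|f\|_{\mathcal H_{K_1}}=\|\mathbf w\|_{L^2(\nu)}$, and $\mathcal H_{K_1}$-orthogonality is just orthogonality of the $\mathbf w_a$. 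The lower bound on $\|\mathbf m\|$ and Davis--Kahan with the assumed eigenvalue separation give Lipschitz $\mathbf w_a$, with signs fixed by the reference. Then $f_a(x)=\langle\mathbf w_a,z_1(\cdot,x)\rangle$ is Lipschitz in the empirical norm, since $\|z_1(\cdot,x)\|_{L^2(\nu)}\le\|\sigma\|_\infty$. No $K_1^{-1}$ or $K_1^{1/2}$ appears. With this replacement, the rest of your argument goes through unchanged.
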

\begin{proof}
The first-layer features
$z_1(\mathbf r,x)=\sigma(\mathbf r^\top\mathbf h_1^\star(x))$
depend Lipschitz-continuously on $\mathbf h_1^\star$: the derivative of
$\sigma$ is bounded and $\mathbf r$ has finite second moment.
Their label-weighted first and second moments inherit this local regularity.
Assumption~\ref{ass:three-regularity} keeps the first moment nonzero
and separates the retained eigenvalues of the second-moment operator.
Normalizing the first
moment and applying eigenvector perturbation bounds therefore give local
Lipschitz continuity of the forward features
$\mathbf h_2^\star=\Fwd_2(K_1,y)$.

For two sets of order parameters $(\mathbf h_1^\star,C,S)$ and
$(\bar{\mathbf h}_1^\star,\bar C,\bar S)$, let $h_2,\bar h_2$ be the
corresponding upper preactivations. Equation~\eqref{eq:section-three-upper-response}
and the bound $\|Cf\|_2\le\|C\|_{\rm HS}\|f\|_2$ give
\[
 \|h_2-\bar h_2\|_2
 \le L\bigl(\|\mathbf h_1^\star-\bar{\mathbf h}_1^\star\|_n
                 +\|C-\bar C\|_{\rm HS}\bigr).
\]
The $L^2$ norms on preactivations and sensitivities average over the
upper label and training inputs. Since $g^2(\eta,x)=S(\eta,h_2(\eta,x))$,
the corresponding sensitivities satisfy
\[
 \|g^2-\bar g^2\|_2
 \le\|\sigma'\|_\infty\|S-\bar S\|_{L^2(\mu;\mathcal D)}
       +M\|\sigma''\|_\infty\|h_2-\bar h_2\|_2.
\]
These estimates give local Lipschitz continuity of $g^2$ in
$(\mathbf h_1^\star,C,S)$. Since
$A=\Etrain[y\,g^2\otimes\mathbf z_0]$ is bounded and linear in $g^2$,
$A^*A$ also depends locally Lipschitz-continuously on these order parameters.
Its leading eigenvalue $\rho^2$ is positive and simple. Fixing the sign
of its unit eigenvector $\mathbf v$ by positive overlap with a reference,
the equations
\[
 A^*A\mathbf v=\rho^2\mathbf v,\qquad \|\mathbf v\|_2=1
\]
have an invertible linearization. On $\mathbf v^\perp$, the inverse of
$\rho^2I-A^*A$ has norm at most $1/(\rho^2-\rho_2^2)$, where $\rho_2$
is the second singular value of $A$ (zero when $A$ has rank one).
Assumption~\ref{ass:three-regularity} keeps this eigenvalue difference
bounded away from zero.
The implicit function theorem gives smooth dependence of $\rho$ and
$\mathbf v$ on $A$, and hence local Lipschitz dependence on the order
parameters. The identities
$u=A\mathbf v/\rho$, $\psi=\langle u,g^2\rangle$ and
$\chi=\langle\mathbf v,\mathbf z_0\rangle$ give the same regularity for
the left singular vector and both maximizing functions.
\end{proof}

We next verify that the column updates accumulate in $C_t$ as defined in
\eqref{eq:three-gaussian-column-kernel}, and derive the evolution of their
action $C_tz_{1,t}$ on the changing lower features.

\begin{proposition}[Accumulated column coupling]
\label{prop:three-accumulated-coupling}
Let $\mu,\nu$ be the Gaussian lift laws or their empirical versions.
Suppose $u_t\in L^2(\mu)$ and $q_t\in L^2(\nu)$ are continuous in $t$.
The column update $\dot C_t=u_t\otimes q_t$, $C_0=0$,
has a unique continuously differentiable Hilbert--Schmidt solution,
whose scalar kernel is given in $L^2(\mu\otimes\nu)$ by
\begin{equation}
 C_t(\eta,\mathbf r)=\int_0^t
                u_\tau(\eta)q_\tau(\mathbf r)\,d\tau.
 \label{eq:three-proof-column-history}
\end{equation}
It maps functions of the lower label $\mathbf r$ to functions of the
upper label $\eta$ through \eqref{eq:three-gaussian-column-kernel}.
For any continuously differentiable lower-feature curve $z_{1,t}$,
\[
 \partial_t(C_tz_{1,t})=\dot C_tz_{1,t}+C_t\dot z_{1,t}.
\]
\end{proposition}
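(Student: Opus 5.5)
The argument treats the column update as an ODE in the Hilbert space $\mathrm{HS}(L^2(\nu),L^2(\mu))$ of Hilbert--Schmidt operators. We then identify this space with $L^2(\mu\otimes\nu)$ through scalar kernels, and obtain the product rule from a bilinear-map argument.

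\emph{Existence and uniqueness.} For $u\in L^2(\mu)$ and $q\in L^2(\nu)$, the rank-one operator $u\otimes q:f\mapsto u\,\mathbb E_{\mathbf r}[q(\mathbf r)f(\mathbf r)]$ is Hilbert--Schmidt with $\|u\otimes q\|_{\rm HS}=\|u\|_{L^2(\mu)}\|q\|_{L^2(\nu)}$. The map $(u,q)\mapsto u\otimes q$ is a bounded bilinear map with norm one, so $t\mapsto u_t\otimes q_t$ is continuous into the HS space. The right-hand side of $\dot C_t=u_t\otimes q_t$ does not depend on $C_t$, so the problem is a pure integration in a Banach space. The Bochner (equivalently Riemann) integral $C_t=\int_0^t u_\tau\otimes q_\tau\,d\tau$ is $C^1$ with the required derivative, by the fundamental theorem of calculus for continuous Banach-valued functions. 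If $C,\tilde C$ are two $C^1$ solutions, their difference has zero derivative and vanishes at $0$, so it is identically zero by the mean value inequality.

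\emph{Kernel representation.} The unitary isometry $L^2(\mu\otimes\nu)\to\mathrm{HS}$ sends a kernel $k$ to $f\mapsto\mathbb E_{\mathbf r}[k(\cdot,\mathbf r)f(\mathbf r)]$. It maps the kernel $u_\tau(\eta)q_\tau(\mathbf r)$ to $u_\tau\otimes q_\tau$, and $\tau\mapsto u_\tau(\eta)q_\tau(\mathbf r)$ is continuous into $L^2(\mu\otimes\nu)$. Bounded linear maps commute with Bochner integrals, so the preimage of $C_t$ is $\int_0^t u_\tau q_\tau\,d\tau$ taken as an $L^2(\mu\otimes\nu)$-valued integral, which is \eqref{eq:three-proof-column-history}. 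The pointwise (a.e.) formula needs a Fubini step. Joint measurability of $(\tau,\eta,\mathbf r)\mapsto u_\tau(\eta)q_\tau(\mathbf r)$ follows by choosing measurable versions of the continuous curves; this works because continuous curves into $L^2$ are strongly measurable. Integrability holds because $\int_0^t\|u_\tau\|\|q_\tau\|d\tau<\infty$. Together these identify the Bochner integral with the a.e.\ iterated integral. This measurability bookkeeping is the only delicate point, and it is standard. The action on lower functions through \eqref{eq:three-gaussian-column-kernel} is then exactly the operator action.

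\emph{Product rule.} Evaluation $(C,f)\mapsto Cf$ is a bounded bilinear map $\mathrm{HS}\times L^2(\nu)\to L^2(\mu)$, since $\|Cf\|\le\|C\|_{\rm HS}\|f\|$. Assume $t\mapsto z_{1,t}$ is $C^1$ into $L^2(\nu)$, understood pointwise in the finitely many training inputs $x$. Then we expand
\[
 C_{t+h}z_{1,t+h}-C_tz_{1,t}=(C_{t+h}-C_t)z_{1,t}+C_t(z_{1,t+h}-z_{1,t})+(C_{t+h}-C_t)(z_{1,t+h}-z_{1,t}).
\]
Dividing by $h$, the first two terms converge to $\dot C_tz_{1,t}$ and $C_t\dot z_{1,t}$, using boundedness of evaluation. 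The cross term is $O(h)$ in norm by the bilinear bound, so it vanishes in the limit. The derivative $\dot C_tz_{1,t}+C_t\dot z_{1,t}$ is continuous in $t$, so $C_tz_{1,t}$ is $C^1$ and the formula holds.

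The empirical-lift case is a special case of the same argument: $L^2$ of a finite empirical measure is finite-dimensional, and all the steps above go through unchanged.
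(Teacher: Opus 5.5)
Your proposal is correct and follows the same route as the paper. You integrate the rank-one increments in the Hilbert--Schmidt space using $\|u\otimes q\|_{\rm HS}=\|u\|_{L^2(\mu)}\|q\|_{L^2(\nu)}$, identify that space with $L^2(\mu\otimes\nu)$ kernels, and get the product rule from continuity and bilinearity of operator evaluation. Beyond the paper, you make explicit the details it leaves implicit: the fundamental-theorem-of-calculus uniqueness argument, the Fubini step giving the a.e.\ kernel formula, and the difference-quotient expansion.
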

\begin{proof}
A rank-one update has kernel $u_t(\eta)q_t(\mathbf r)$ and
Hilbert--Schmidt norm $\|u_t\|_{L^2(\mu)}\|q_t\|_{L^2(\nu)}$.
Integrating in this Hilbert space gives the unique solution and the bound
\[
 \|C_t\|_{\rm HS}
 \le\int_0^t
       \|u_\tau\|_{L^2(\mu)}\|q_\tau\|_{L^2(\nu)}\,d\tau.
\]
The kernel therefore belongs to $L^2(\mu\otimes\nu)$.
Since applying a Hilbert--Schmidt operator to an $L^2(\nu)$ function is
continuous and bilinear, the product rule gives the displayed derivative.
\end{proof}

Combining this representation with the readout update gives the closed
response and output equations.

\begin{proposition}[Closed evolution of the response and output]
\label{prop:three-response-closure}
Under Assumptions~\ref{ass:three-setup}--\ref{ass:three-regularity}, let
$(\mathbf h_{1,t}^\star,C_t,w_{3,t})$ solve the lower-feature and
column equations \eqref{eq:section-three-feature-flow}--\eqref{eq:section-three-column-flow}
and final-layer gradient flow under squared loss, with the prescribed
initialization. Then $S_t(\eta,s)=w_{3,t}(\eta)\sigma'(s)$ satisfies
\eqref{eq:section-three-response-flow}, and
$h_{3,t}=\langle w_{3,t},z_{2,t}\rangle$ satisfies
\eqref{eq:main-three-informal-flow}.
These derivatives, and those of the feature maps and their kernels,
are determined by the current values of
$(\mathbf h_{1,t}^\star,\mathbf h_{2,t}^\star,C_t,S_t,h_{3,t})$.
\end{proposition}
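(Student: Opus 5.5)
The plan is to differentiate the definitions directly, using the chain rule together with Proposition~\ref{prop:three-accumulated-coupling}. Each resulting term will be identified with one term of \eqref{eq:main-three-informal-flow}.

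\emph{Response equation.} Final-layer gradient flow for $\tfrac12\Etrain[(y-h_{3,t})^2]$ with $h_{3,t}=\mathbb E_\eta[w_{3,t}(\eta)z_{2,t}(\eta,\cdot)]$ gives $\dot w_{3,t}(\eta)=\Etrain[(y-h_{3,t}(X))z_{2,t}(\eta,X)]$. This is the continuous limit of the readout line of Algorithm~\ref{alg:three-continuous}. Multiplying by $\sigma'(s)$ with $s$ held fixed yields \eqref{eq:section-three-response-flow} immediately. The initialization \eqref{eq:three-response-initialization} and the choice of $h_{3,0}$ in the span of $z_{2,0}$ make $S_0$ consistent with $w_{3,0}$. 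Positive definiteness of the initial kernel is used here so that $w_{3,0}$, and hence $S_0$, is determined by $h_{3,0}$ on the training inputs.

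\emph{Output equation.} Write
\[
\dot h_{3,t}(x)=\mathbb E_\eta[\dot w_{3,t}z_{2,t}]+\mathbb E_\eta[w_{3,t}\sigma'(h_{2,t})\dot h_{2,t}].
\]
The first term is $\Etrain[K_{2,t}(x,X)(y-h_{3,t}(X))]$. In the second term, $w_{3,t}\sigma'(h_{2,t})=g_t^2$, and by \eqref{eq:section-three-upper-response} together with Proposition~\ref{prop:three-accumulated-coupling},
\[
\dot h_{2,t}=\eta^\top\dot{\mathbf h}_{2,t}^\star+\dot C_tz_{1,t}+C_t\dot z_{1,t}.
\]
The first piece gives $\mathbb E_\eta[g_t^2\eta]^\top\dot{\mathbf h}^\star_{2,t}$, which exists because $\mathbf h^\star_{2,t}=\Fwd_2(K_{1,t},y)$ is locally Lipschitz, in fact differentiable under the gaps of Lemma~\ref{lem:three-mode-continuity}, composed with the differentiable curve $K_{1,t}$. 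The last piece gives the final term $\mathbb E_{\eta,\mathbf r}[g_t^2C_t\dot z_{1,t}]$.

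\emph{Column-spike term.} The only real computation is the middle piece. Substituting $\dot C_t=u_t\otimes q_t$ with $u_t=\rho_t^{-1}\Etrain[y\chi_tg_t^2]$ and $q_t=\Etrain[y\psi_tz_{1,t}]$ gives
\[
\mathbb E_\eta[g_t^2(\eta,x)u_t(\eta)]\,\mathbb E_{\mathbf r}[q_t(\mathbf r)z_{1,t}(\mathbf r,x)].
\]
The first factor equals $\rho_t^{-1}\Etrain[G_t(x,X)y\chi_t(X)]=\psi_t(x)$ by the selection equations \eqref{eq:main-three-selection}. The second factor equals $\Etrain[K_{1,t}(x,X)y\psi_t(X)]$ after exchanging $\mathbb E_{\mathbf r}$ with the finite empirical average. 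This produces the third line of \eqref{eq:main-three-informal-flow}.

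\emph{Closure and the main obstacle.} Finally I would check that every derivative depends only on the current state. $\dot{\mathbf h}_1^\star=\chi_t$ and $\dot z_{1,t}=\sigma'(h_{1,t})\mathbf r^\top\chi_t\mathbf 1$ depend on $(\mathbf h_1^\star,C,S)$ via Lemma~\ref{lem:three-mode-continuity}. $\dot K_{1,t}$ and hence $\dot{\mathbf h}_2^\star$ follow from these. $g_t^2=S_t(\eta,h_{2,t})$ depends on $S_t$ and the current features. The main obstacle is justifying differentiation under $\mathbb E_\eta,\mathbb E_{\mathbf r}$ and the differentiability of $\Fwd_2$ along the trajectory, rather than mere Lipschitz continuity. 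For this I would use bounded $\sigma,\sigma',\sigma''$ with Gaussian moments for dominated convergence, and the implicit function theorem for simple eigenvectors under Assumption~\ref{ass:three-regularity}, exactly as in Lemma~\ref{lem:three-mode-continuity}.
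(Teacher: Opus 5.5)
Your proposal is correct and follows essentially the same route as the paper. You differentiate $h_{3,t}=\langle w_{3,t},z_{2,t}\rangle$ and use Proposition~\ref{prop:three-accumulated-coupling} to split $\dot h_{2,t}$ into filter-refresh, new-column and old-column pieces. You then evaluate the new-column piece through $\langle g_t^2,u_t\rangle=\psi_t$ and $\langle q_t,z_{1,t}(x)\rangle=\Etrain[K_{1,t}(x,X)y\psi_t(X)]$, and justify differentiating under the neuron expectations by dominated convergence.

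One small correction: positive definiteness of the initial kernel is not among this proposition's hypotheses (it appears only in Assumption~\ref{ass:three-width}), and it is not needed. Every coefficient solution $b_0$ gives the same in-span, minimum-norm readout $w_{3,0}=\Etrain[b_0z_{2,0}]$, which is how the paper makes $S_0$ well defined; in any case, the derivative identities do not depend on this point.
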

\begin{proof}
The update rates are continuous in the order parameters by
Lemma~\ref{lem:three-mode-continuity}. Thus any solution of their integral
equations is continuously differentiable in time. We derive the response
and output identities for such a solution; its existence and uniqueness
are established in the proof of
Theorem~\ref{thm:three-layer-closed-flow-formal} below.

Write the readout as $w_{3,t}\in L^2(\mu)$.
The cross-layer operator is $A_t=\Etrain[y\,g_t^2\otimes\mathbf z_0]$.
For a nonzero singular pair, $A_tv_t=\rho_tu_t$ gives
$u_t=\rho_t^{-1}\Etrain[y\chi_tg_t^2]$; applying the upper feature map
gives $\rho_t\psi_t(x)=\Etrain[G_t(x,X)y\chi_t(X)]$.
The adjoint equation similarly gives
$\rho_t\chi_t(x)=\Etrain[K_0(x,X)y\psi_t(X)]$.
Each singular vector lies in the span of its feature map and therefore
has the minimum coefficient norm among representations of the selected
feature, which equals the RKHS norm of that feature. In particular,
the unit vectors $u_t,v_t$ give
$\|\psi_t\|_{\mathcal H_{G_t}}=\|\chi_t\|_{\mathcal H_{K_0}}=1$.

Every lower filtered coordinate follows
\eqref{eq:section-three-feature-flow}. The remaining continuous updates are
\begin{align}
 \dot C_t&=u_t\otimes q_t,\qquad
 \dot w_{3,t}=\Etrain[(y-h_{3,t}(X))z_{2,t}(\cdot,X)],\nonumber\\
 q_t&=\Etrain[y\psi_t(X)z_{1,t}(\cdot,X)],\qquad
 u_t=\rho_t^{-1}\Etrain[y\chi_t(X)g_t^2(\cdot,X)].
 \label{eq:three-proof-parameters}
\end{align}
Here $(u\otimes q)b=u\langle q,b\rangle$, and
$h_{2,t}(\eta,x)=\eta^\top\mathbf h_{2,t}^\star(x)
+(C_tz_{1,t}(x))(\eta)$. Substituting the two representers into
$\dot C_t$ gives \eqref{eq:section-three-column-flow}.
The loading $q_t$ fits the modified label on current lower activations,
and $u_t$ selects the upper response.

To initialize the response from $h_{3,0}$, choose coefficients $b_0$
on the training inputs satisfying
\[
 \Etrain[K_{2,0}(x,X)b_0(X)]=h_{3,0}(x)
 \qquad\text{for every training input }x.
\]
A solution exists by Assumption~\ref{ass:three-setup}. The readout
$w_{3,0}(\eta)=\Etrain[b_0(X)z_{2,0}(\eta,X)]$ realizes $h_{3,0}$
and lies in the span of the initial activation profiles.
Any other readout with the same predictions differs by a function
orthogonal to that span. Hence $w_{3,0}$ has minimum norm and is
independent of the chosen coefficient solution.

The corresponding initial response is
\begin{equation}
 S_0(\eta,s)=\sigma'(s)\Etrain[b_0(X)z_{2,0}(\eta,X)].
 \label{eq:three-response-initialization}
\end{equation}
Holding $s$ fixed and applying the readout gradient update gives
\eqref{eq:section-three-response-flow}; evaluating at $h_{2,t}$ gives
the current sensitivity in \eqref{eq:section-three-response}.
Its total derivative therefore contains both readout learning and
the changing activation derivative:
\begin{align}
 \partial_t g_t^2(\eta,x)
 &=\sigma'(h_{2,t}(\eta,x))
       \Etrain[(y-h_{3,t}(X))z_{2,t}(\eta,X)]\nonumber\\
 &\quad+\partial_sS_t(\eta,h_{2,t}(\eta,x))\,
                          \dot h_{2,t}(\eta,x).
 \label{eq:three-proof-sensitivity-flow}
\end{align}
The first term changes the response even with the hidden features
held fixed. The second changes it as the upper preactivation moves, with
$\partial_sS_t(\eta,s)=w_{3,t}(\eta)\sigma''(s)$.
Both update $G_t$, hence the next selected scores $(\psi_t,\chi_t)$,
which in turn update the lower filter and the column coupling.
Since $\partial_tS_t$ is $\sigma'(s)$ times a function of $\eta$, the
evolution preserves the product form $S_t(\eta,s)=w_{3,t}(\eta)\sigma'(s)$
of the initialization.

Proposition~\ref{prop:three-accumulated-coupling} represents all installed
columns by the current kernel $C_t$.
The accumulated columns act on the current lower features through
$C_tz_{1,t}$. Differentiating this product gives
\begin{align}
 \dot h_{2,t}(\eta,x)
 &=\eta^\top\dot{\mathbf h}_{2,t}^\star(x)
       +(\dot C_tz_{1,t}(x))(\eta)+(C_t\dot z_{1,t}(x))(\eta),\nonumber\\
 \dot z_{1,t}(\mathbf r,x)
 &=\sigma'(\mathbf r^\top\mathbf h_{1,t}^\star(x))
       \chi_t(x)\sum_{a=1}^{k_1}r_a.
 \label{eq:three-proof-feature-velocity}
\end{align}
The three terms in the first line refresh the forward filter, install
the current column, and propagate lower-feature motion through
previously installed columns.
The second line follows because every filtered coordinate has the
same velocity: $\mathbf r^\top\dot{\mathbf h}_{1,t}^\star
=\chi_t\sum_a r_a$.
The change in the first-layer kernel then refreshes the second-layer
forward filter:
\begin{align*}
 \dot K_{1,t}(x,x')
 &=\mathbb E_{\mathbf r}\big[
   \dot z_{1,t}(\mathbf r,x)z_{1,t}(\mathbf r,x')
   +z_{1,t}(\mathbf r,x)\dot z_{1,t}(\mathbf r,x')\big],\\
 \dot{\mathbf h}_{2,t}^{\star}
 &=D_K\Fwd_2(K_{1,t},y)[\dot K_{1,t}],
\end{align*}
where $D_K\Fwd_2$ is the derivative of the forward filter with
respect to its kernel. Thus all three derivatives are determined by
the current filtered features and their prescribed velocity.

Differentiate $h_{3,t}=\langle w_{3,t},z_{2,t}\rangle$ to obtain
\begin{align}
 \dot h_{3,t}(x)&=\Etrain[K_{2,t}(x,X)(y-h_{3,t}(X))]
       +\mathbb E_\eta[g_t^2(\eta,x)\dot h_{2,t}(\eta,x)],\nonumber\\
 \mathbb E_\eta[g_t^2(\eta,x)(\dot C_tz_{1,t}(x))(\eta)]
       &=\psi_t(x)\Etrain[K_{1,t}(x,X)y\psi_t(X)].
 \label{eq:three-proof-chain-rule}
\end{align}
The second identity uses $\langle g_t^2,u_t\rangle=\psi_t$ and
$\langle q_t,z_{1,t}(x)\rangle=\Etrain[K_{1,t}(x,X)y\psi_t(X)]$.
Substituting \eqref{eq:three-proof-feature-velocity} gives the four
terms of \eqref{eq:main-three-informal-flow}, including
$\mathbb E_\eta[g_t^2(C_t\dot z_{1,t})]$ for earlier columns.
These derivatives also hold for Gaussian lifts. The readout equation
keeps $w_{3,t}$ uniformly bounded over neurons on finite intervals;
the formulas for $u_t,q_t$ and $C_t$
then give $|\dot h_{1,t}(\mathbf r,x)|\le C_T\|\mathbf r\|$ and
$|\dot h_{2,t}(\eta,x)|\le C_T(1+\|\eta\|)$.
Bounded activation derivatives and finite Gaussian second moments
therefore justify differentiating the neuron expectations by dominated
convergence.
Conversely, the output equation and the response update give identical
derivatives for $h_{3,t}$ and $\langle w_{3,t},z_{2,t}\rangle$.
Their equal initialization therefore preserves equality.
Since $g_t^2=S_t(\eta,h_{2,t})$ and
$\mathbf h_{2,t}^\star=\Fwd_2(K_{1,t},y)$, the variational pair and the
representers in \eqref{eq:three-proof-parameters} are determined by the
current state, as are the feature and kernel derivatives.
\end{proof}

\begin{proof}[Proof of Theorem~\ref{thm:three-layer-closed-flow-formal}]
We now combine Lemma~\ref{lem:three-mode-continuity} and
Propositions~\ref{prop:three-accumulated-coupling}
and~\ref{prop:three-response-closure} to prove
Theorem~\ref{thm:three-layer-closed-flow-formal}.
Proposition~\ref{prop:three-accumulated-coupling} gives the evolution of
the accumulated columns and their action on the changing lower features;
Proposition~\ref{prop:three-response-closure} gives the closed response
and output equations. Lemma~\ref{lem:three-mode-continuity} supplies the
local Lipschitz bounds used to establish existence and uniqueness,
control the time-discretization error, and pass to the two successive
width limits.

The filtered features give the Gaussian projections
$\mathbf r^\top\mathbf h_1^\star(x)$ and
$\eta^\top\mathbf h_2^\star(x)$. The coupling $C$ maps the lower
activation $z_1(\cdot,x)\in L^2(\nu)$ to the upper-preactivation
correction $Cz_1(\cdot,x)\in L^2(\mu)$, while $S(\eta,\cdot)$ gives
the sensitivity evaluated at the corrected upper preactivation.
Their convergence is therefore naturally formulated on the Hilbert
spaces of functions of these Gaussian variables: we measure $C$ in
Hilbert--Schmidt norm and $S$ in $L^2(\mu;\mathcal D)$, with the response
norm defined in Lemma~\ref{lem:three-mode-continuity}.
Together with the filtered features on the training inputs, this gives
the state space
\begin{equation}
 \mathbb R^{nk_1}
 \times\operatorname{HS}(L^2(\nu),L^2(\mu))
 \times L^2(\mu;\mathcal D)
 \label{eq:three-proof-state-space}
\end{equation}
for $(\mathbf h_1^\star,C,S)$. Here $\operatorname{HS}$ denotes
Hilbert--Schmidt operators: the squared norm of $C$ is the integral
of its squared kernel over both neuron labels.
Since $S(\eta,s)=w_3(\eta)\sigma'(s)$, the chosen response norm satisfies
\[
 \|S\|_{L^2(\mu;\mathcal D)}^2
 =\mathbb E_\eta[|w_3(\eta)|^2].
\]
The same identity holds for differences of two responses.
We can therefore prove existence, uniqueness and convergence for
the coefficient function $w_3$, then recover $S$ by multiplying by
the fixed function $\sigma'$.
Since $\|\chi_t\|_{\mathcal H_{K_0}}=1$, the reproducing property gives
$|\chi_t(x)|\le\sqrt{K_0(x,x)}$. Hence
\[
 \|\dot{\mathbf h}_{1,t}^\star\|_n
 =\sqrt{k_1}\,\|\chi_t\|_n
 \le\sqrt{k_1\Etrain[K_0(X,X)]}.
\]
The fixed input kernel and filter rank make this bound uniform in width.

The canonical initial readout is a finite linear combination of bounded
activations, and its update satisfies
\[
 \|\dot w_{3,t}\|_\infty
 \le\|\sigma\|_\infty
       (\|y\|_n+\|\sigma\|_\infty\|w_{3,t}\|_2).
\]
Lemma~\ref{lem:three-mode-continuity} gives local Lipschitz continuity
of the right-hand sides of
\eqref{eq:section-three-feature-flow}--\eqref{eq:section-three-response-flow}
in $(\mathbf h_1^\star,C,S)$, while the response coefficients are bounded.
Integrating these right-hand sides from the prescribed initial state
gives a contraction on bounded trajectories near the initial state over a sufficiently short
interval: the displayed readout bound keeps the coefficients bounded,
and the Lipschitz constant is multiplied by the interval length.
This proves local existence and uniqueness.
Gr\"onwall bounds $\|w_{3,t}\|_2$ and hence $\|w_{3,t}\|_\infty$ on
finite intervals, so the solution extends while
Assumption~\ref{ass:three-regularity} holds.
Proposition~\ref{prop:three-response-closure} then identifies its output
with \eqref{eq:main-three-informal-flow}.

We now show that Algorithm~\ref{alg:three-continuous} converges to these
equations as $dt\to0$, with the lifts fixed. Denote the changes over one
iteration by $\Delta\mathbf h_1^\star,\Delta C,\Delta S$.
The lower-feature update is exactly $dt$ times
\eqref{eq:section-three-feature-flow}. The column and readout updates use
the features obtained after the earlier updates in that iteration.
Those features differ from their values at the start of the iteration
by $O(dt)$. The Lipschitz bounds therefore give
\begin{align*}
 \Delta\mathbf h_1^\star&=dt\,\dot{\mathbf h}_{1,t}^\star,\\
 \Delta C&=dt\,\partial_tC_t+O_T(dt^2),\\
 \Delta S&=dt\,\partial_tS_t+O_T(dt^2),
\end{align*}
where the derivatives are the right-hand sides of
\eqref{eq:section-three-feature-flow}--\eqref{eq:section-three-response-flow},
evaluated at the start of the iteration. The errors are measured in
\eqref{eq:three-proof-state-space}. Integrating the continuous equations
over the same interval gives these first-order increments with
$O_T(dt^2)$ remainders.

Let $E_k$ be the error between the algorithm and the continuous solution
at time $k\,dt$, measured in \eqref{eq:three-proof-state-space}.
While the iterates remain in a neighborhood of the continuous trajectory
on which Assumption~\ref{ass:three-regularity} holds uniformly, the error satisfies
\[
 E_{k+1}\le(1+L_Tdt)E_k+C_Tdt^2,\qquad E_0=0.
\]
Summing this recurrence gives $\max_{k\,dt\le T}E_k=O_T(dt)$.
For sufficiently small $dt$, this bound keeps every iterate in that
neighborhood up to time $T$. Linear interpolation gives the same uniform
error between steps.
The readout bound above holds for the discrete updates as well.
Continuity of the forward filter and Lemma~\ref{lem:three-mode-continuity}
transfer this convergence to the features, output, kernels and selected scores.

We next take the width limits.
\label{app:three-width-averaging}
The learned features depend on the sampled lifts, so we compare the
finite-width and Gaussian equations along their evolving trajectories.
For the lower lift, write the sampled rows as
$\mathbf r_1,\ldots,\mathbf r_{p_1}$. Partition the Gaussian label space
into measurable cells $V_1,\ldots,V_{p_1}$, each of probability $1/p_1$,
chosen to minimize
\[
 \sum_{i=1}^{p_1}\int_{V_i}\|\mathbf r-\mathbf r_i\|^2\,d\nu(\mathbf r).
\]
Set $T_{1,p_1}(\mathbf r)=\mathbf r_i$ for $\mathbf r\in V_i$.
This is the quadratic optimal transport map from the Gaussian law to
the empirical distribution of the sampled rows. The Gaussian density
and finite second moments ensure that such a minimizing partition exists
\citep[Theorem~2.12]{villani2003topics}.
Construct $T_{2,p_2}$ in the same way for the upper lift.

To see why the transport cost tends to zero, divide a bounded box $B$
into finitely many cells of diameter at most $\varepsilon$.
In each cell, the fraction of sampled rows converges almost surely
to its Gaussian probability by the strong law. Assigning as much Gaussian
mass as possible to sampled rows within each cell leaves a vanishing
unassigned mass inside $B$, with squared distance at most
$\varepsilon^2$ on the assigned mass. For the remaining assignments, use
$\|\mathbf r-\mathbf r_i\|^2\le2\|\mathbf r\|^2+2\|\mathbf r_i\|^2$.
Inside $B$, squared norms are bounded and the unassigned mass vanishes.
Outside $B$, the sampled second moment converges
to $\int_{B^c}\|\mathbf r\|^2\,d\nu$ by the strong law. Since the chosen
partition minimizes the cost,
\[
 \limsup_{p_1\to\infty}
 \|T_{1,p_1}(\mathbf r)-\mathbf r\|_{L^2(\nu)}^2
 \le\varepsilon^2+4\int_{B^c}\|\mathbf r\|^2\,d\nu(\mathbf r)
 \quad\text{almost surely}.
\]
Letting the cell diameter tend to zero and the box grow proves
convergence. The same argument applies to the upper lift, giving
\[
 \delta_{1,p_1}:=\|T_{1,p_1}(\mathbf r)-\mathbf r\|_{L^2(\nu)}\to0,
 \qquad
 \delta_{2,p_2}:=\|T_{2,p_2}(\eta)-\eta\|_{L^2(\mu)}\to0
\]
almost surely.

Extend each sampled neuron profile to be constant on its cell.
Because every cell has probability $1/p_j$, Gaussian integrals of these
extensions equal the original empirical averages. Extend $C_t$ on products
of lower and upper cells, and $S_t$ on upper cells. These extensions
preserve the norms and all update equations.
The finite-width and limiting solutions can therefore be compared in
the same Hilbert space \eqref{eq:three-proof-state-space}.

At fixed order parameters, replacing $\mathbf r$ by
$T_{1,p_1}(\mathbf r)$ changes $z_1$ by at most
$C_T\delta_{1,p_1}$ in $L^2$.
The first-layer kernel and refreshed forward filter satisfy the same
bound. Replacing the upper lift changes the upper preactivation by at
most $C_T(\delta_{1,p_1}+\delta_{2,p_2})$.
Lemma~\ref{lem:three-mode-continuity} gives this bound for the resulting
change in the selected pair and hence in the update equations.
The constants are uniform in width when the response coefficients
and lift moments are bounded and Assumption~\ref{ass:three-regularity}
holds uniformly.
The initial responses also converge: their coefficients solve the kernel
linear system above \eqref{eq:three-response-initialization}, whose
limiting matrix is invertible by assumption.

Let $E(t)$ denote the distance between the two solutions after this
identification, in the norm of \eqref{eq:three-proof-state-space}.
Integrating the difference of their update equations and applying
Gr\"onwall gives
\begin{equation}
 \sup_{t\le T}E(t)
 \le C_T\bigl(E(0)+\delta_{1,p_1}+\delta_{2,p_2}\bigr).
 \label{eq:three-proof-width-stability}
\end{equation}
Choose a neighborhood of the limiting trajectory on which
Assumption~\ref{ass:three-regularity} holds uniformly.
The estimate applies until the finite-width solution
leaves this neighborhood. Its right-hand side tends to zero, so for
sufficiently large widths the solution stays in the neighborhood
throughout $[0,T]$.

To obtain the sequential limit, fix $p_2$ and an admissible set of upper
lift rows. The two flows compared in \eqref{eq:three-proof-width-stability}
use these same rows, so the upper-lift error is zero. As $p_1\to\infty$,
the remaining error tends to zero almost surely. The limit therefore
uses Gaussian expectations in the first layer and empirical averages
over the fixed rows in the second. We then let $p_2\to\infty$ and apply
the same estimate to this intermediate evolution, obtaining the equations
with Gaussian expectations in both layers. The positive-definite initial
limiting kernel and stability under the spectral-gap assumption ensure
that all sufficiently large upper samples are admissible almost surely.

Continuity of the forward filter and the maximizing pair gives convergence
of the filtered features and selected scores. Cauchy--Schwarz then gives
convergence of the output and kernels from their expressions as inner
products. Together with \eqref{eq:three-proof-width-stability}, these
bounds establish almost-sure convergence uniformly on $[0,T]$ in the
stated norms. This completes the proof of
Theorem~\ref{thm:three-layer-closed-flow-formal}.
\end{proof}

\section{The exact linear reduction}
\label{sec:linear-dynamics}
We first derive \eqref{eq:main-linear-modes} for the
one-hidden-layer linear network $f(x)=\mathbf W_2\mathbf W_1x$. Take unnormalized linear activations, unit-rate
squared-loss gradient flow, whitened inputs $\mathbb E[xx^\top]=I$,
and possibly vector-valued targets. Write
$S=\mathbb E[yx^\top]$. Direct differentiation gives
\begin{equation}
 A_\Delta=S-\mathbf W_2\mathbf W_1,\qquad
 \dot{\mathbf W}_1=\mathbf W_2^\top A_\Delta,\qquad
 \dot{\mathbf W}_2=A_\Delta\mathbf W_1^\top.
 \label{eq:linear-exact-flow}
\end{equation}
Thus the incoming row and its matching outgoing column obey the
two actions in \eqref{eq:main-coupled-residual} exactly, with the
residual input--output correlation $A_\Delta$ as coupling matrix. The prediction term produces
saturation as the network fits the target.

Let $S=\sum_i s_i u_i v_i^\top$ and choose orthonormal hidden
directions $q_i$. The aligned initialization
\begin{equation}
 \mathbf W_1=\sum_i\alpha_iq_iv_i^\top,\qquad
 \mathbf W_2=\sum_i\beta_iu_iq_i^\top
 \label{eq:linear-aligned}
\end{equation}
is preserved: substituting it into \eqref{eq:linear-exact-flow}
and using orthogonality gives
\begin{equation}
 \dot\alpha_i=\beta_i(s_i-\alpha_i\beta_i),\qquad
 \dot\beta_i=\alpha_i(s_i-\alpha_i\beta_i),\qquad
 \frac{d}{dt}(\alpha_i^2-\beta_i^2)=0.
 \label{eq:linear-mode-pair}
\end{equation}
For a balanced positive seed,
$m_i=\alpha_i\beta_i$ satisfies $\dot m_i=2m_i(s_i-m_i)$, hence
\[
 m_i(t)=\frac{s_i}{1+(s_i/m_i(0)-1)e^{-2s_it}}.
\]
When $m_i(0)\ll s_i$, each factor initially grows as $e^{s_it}$, and
their product approaches $s_i$. Alignment is needed for independent scalar modes;
an arbitrary initialization also has cross-mode interactions.

Without alignment, linearizing \eqref{eq:linear-exact-flow} at
zero still gives, for each row--column pair,
\[
 \frac{d}{dt}\begin{pmatrix}\mathbf w_{1,j}\\
                   \boldsymbol\omega_{2,j}\end{pmatrix}
 =
 \begin{pmatrix}0&S^\top\\S&0\end{pmatrix}
 \begin{pmatrix}\mathbf w_{1,j}\\
                   \boldsymbol\omega_{2,j}\end{pmatrix}.
\]
The discarded terms are cubic. The block matrix has eigenvectors
$(v_i,\pm u_i)/\sqrt2$ with eigenvalues $\pm s_i$, which proves the
shared exponential amplification near zero for arbitrary initial
directions. This connects the linear dynamics to Theorem~\ref{thm:controlled-feedback};
the nonlinear surrogate additionally freezes gates and replaces the
residual by a label-based approximation.

\paragraph{More than two weight matrices.}
For $f(x)=\mathbf W_D\cdots\mathbf W_1x$, aligned singular directions
reduce each mode to layer amplitudes $a_{1i},\ldots,a_{Di}$.
Writing $m_i=\prod_{k=1}^D a_{ki}$, its loss is
$\tfrac12(s_i-m_i)^2$. Differentiating gives the mode equations of
\citet[Section~2]{saxe2014exact}:
\begin{equation}
 \dot a_{ki}=(s_i-m_i)\prod_{r\ne k}a_{ri},\qquad
 \dot m_i=(s_i-m_i)\sum_{k=1}^D
                         \left(\prod_{r\ne k}a_{ri}\right)^2.
 \label{eq:linear-deep-product}
\end{equation}
Equal positive initialization across layers is preserved by these
equations. Then every $a_{ki}=m_i^{1/D}$,which gives
\eqref{eq:main-deep-linear-mode}.
For $D=2$, this is the balanced
two-factor equation $\dot m_i=2m_i(s_i-m_i)$. For $D>2$, the early
growth law is $\dot m_i\simeq Ds_i m_i^{2-2/D}$.
The additional powers arise from multiplying the contributions of
all layers. The constant-rate exponential approximation for two
weight matrices therefore does not describe the simultaneous growth
of all layers in a deeper network.

\section{The Deep Staircase: A Theoretical Setting}
\label{app:toy-setting}

We analyze here the deep-staircase model of
Section~\ref{sec:hierarchical-discovery}. We first define the complete
teacher, then specialize the forward and backward spectral operations of
Algorithms~\ref{alg:neural-lofi}~and~\ref{alg:combined} to this setting.
We derive the population
mechanism separating the forward and label-corrected statistics, and the
resulting sample-complexity scalings. For simplicity, we do not provide
full finite-sample proofs. These derivations can be turned into rigorous
recovery guarantees by combining the population calculations below with
standard concentration and spectral-perturbation arguments, as in \cite{tabanelli2026deep,wortsmanzurich2026scalinglaws}

\subsection{Teacher and high-dimensional regime}
\label{app:toy-teacher}

\paragraph{Target.}
Let $\mathbf x\sim\mathcal N(0,I_d)$ and let
$\Phi_2(\mathbf x)\in\mathbb R^{D_2(d)}$,
$D_2(d)=d(d+1)/2$, denote the complete normalized quadratic Hermite
representation of $\mathbf x$.
We draw
\[
    A_\star=
    \begin{bmatrix}
        A_U\\ A_V\\ A_W
    \end{bmatrix}
    \in\mathbb R^{d_1\times D_2(d)},
    \qquad
    d_1=p_U+p_V+p_W,
    \qquad
    (A_\star)_{ij}
    \stackrel{\mathrm{iid}}{\sim}
    \mathcal N\!\left(0,D_2(d)^{-1}\right),
\]
and define the first-level features
\begin{equation}
    \mathbf h_U=A_U\Phi_2(\mathbf x),\qquad
    \mathbf h_V=A_V\Phi_2(\mathbf x),\qquad
    \mathbf h_W=A_W\Phi_2(\mathbf x).
    \label{eq:toy-blocks}
\end{equation}
This prior makes the planted directions dense and delocalized in the
quadratic feature space. In the high-dimensional regime below,
$A_\star A_\star^\top\to I_{d_1}$ in operator norm. At the second level, let $B_Q, B_Z\in\mathbb R^{p_U\times p_U}$ be symmetric and define
\begin{equation}
    Q(\mathbf h_U)
    =
    \frac{
        \mathbf h_U^\top B_Q\mathbf h_U-\operatorname{tr}(B_Q)
    }{\sqrt2},
    \qquad
    Z(\mathbf h_U)
    =
    \frac{
        \mathbf h_U^\top B_Z\mathbf h_U-\operatorname{tr}(B_Z)
    }{\sqrt2},
    \qquad
    S=
    \operatorname{sign}
    \bigl(\mathbf h_V^\top C\mathbf h_W\bigr).
    \label{eq:toy-coordinates}
\end{equation}
We assume that $B_Q$ and $B_Z$ are symmetric, have unit Frobenius norm
and are Frobenius orthogonal, and that there exists an orthogonal
transformation $T$ such that
$T^\top B_QT=B_Q$ and $T^\top B_ZT=-B_Z$.
Thus, in the Gaussian latent reference, $Q$ and $Z$ are centered,
unit-variance and uncorrelated, with
$(Q,Z)\overset{d}{=}(Q,-Z)$.
We take $p_V=p_W=p_h$, let
$C\in\mathbb R^{p_h\times p_h}$ be an independent hidden coupling matrix. The raw target is
\begin{equation}
    y_{\rm raw}
    =
    \tanh\!\left(
        (Q+ZS)/\sqrt2+b_0
    \right).
    \label{eq:toy-raw-target}
\end{equation}
We finally standardize $y_{\rm raw}$ of mean $\mu_y$ and variance $\sigma^2_y$ to $y = (y_{\rm raw}-\mu_y)/\sigma_y$ with zero mean and unit variance.

\paragraph{High-dimensional regime.} For the high-dimensional regime we take
\begin{equation}
    p_U,p_V,p_W\asymp d^\epsilon,
    \qquad
    d_1\asymp d^\epsilon,
    \qquad
    0<\epsilon<1,
    \qquad
    n\asymp d^\alpha.
    \label{eq:toy-regime}
\end{equation}
For the hidden-subspace sample complexity result we set $p_V=p_W=p_h$ and assume that
$C$ is full rank with balanced singular values
\begin{equation}
    c_-p_h^{-1/2}
    \leq
    \sigma_j(C)
    \leq
    c_+p_h^{-1/2},
    \qquad
    0<c_-\leq c_+<\infty.
    \label{eq:toy-C-conditioning}
\end{equation}
This assumption fixes the scale $\|C\|_F=\Theta(1)$ while excluding
hidden directions whose interaction strength vanishes asymptotically
relative to the others.

\subsection{BackLoFi in the deep-staircase setting}
\label{app:toy-estimator}

\paragraph{Student architecture and Hermite specialization.}
We first relate the estimator used below to the general Neural LoFi
architecture. In the numerical model, the raw input is first mapped through
a frozen nonlinear random feature map
\[
    \mathbf z_0(\mathbf x)=\sigma(W_0\mathbf x).
\]
Each subsequent Neural LoFi layer has the reduce--lift form
\[
    \mathbf h_\ell^\star
    =
    F_\ell^\top\mathbf z_{\ell-1},
    \qquad
    \mathbf h_\ell
    =
    R_\ell\mathbf h_\ell^\star,
    \qquad
    \mathbf z_\ell
    =
    \sigma(\mathbf h_\ell),
\]
where $F_\ell$ is estimated from the data and $R_\ell$ is a fixed random
lift. Thus $\mathbf h_1^\star$ plays the role of the first teacher level
$(\mathbf h_U,\mathbf h_V,\mathbf h_W)$, while the next filtered
coordinates $\mathbf h_2^\star$ represent second-level features such as
$Q$ and $Z$. For the theoretical analysis, we replace all the random feature map by
its complete degree-two Hermite counterpart, for instance, for the first level we consider
$\mathbf z_0(\mathbf x)=\Phi_2(\mathbf x)$. The numerical estimator keeps
the random feature maps.

\paragraph{Forward pass.}
For a scalar sample weight $\xi$ and a representation $\mathbf z$, define
\begin{equation}
    \widehat m_{\mathbf z}[\xi]
    =
    \widehat{\mathbb E}_n
    \left[
        (\xi-\bar\xi)\mathbf z
    \right],
    \qquad
    \widehat C_{\mathbf z}[\xi]
    =
    \widehat{\mathbb E}_n
    \left[
        (\xi-\bar\xi)\mathbf z\mathbf z^\top
    \right].
    \label{eq:toy-empirical-moments}
\end{equation}
We denote by $m_{\mathbf z}[\xi]$ and $C_{\mathbf z}[\xi]$ the
corresponding population quantities obtained by replacing the empirical
average and empirical mean by their expectations.

The first Forward step acts on
$\mathbf z_0=\Phi_2(\mathbf x)$ and, in the present setting, uses
$\widehat C_{\mathbf z_0}[y]$ to recover the visible
$\mathbf h_U$ subspace. Once this representation has been learned, the
upper Forward step acts on $\mathbf z_1=\Phi_2(\mathbf h_U)$ the quadratic Hermite representation of the
recovered visible coordinates and uses $m_{\mathbf z_1}[y]$ and $C_{\mathbf z_1}[y]$ respectively its first and second Hermite channels to estimate the two relevant second-level directions $Q$ and
$Z$ respectively. Their precise population components are computed in the next
subsection.
Finally, the recovered second-level coordinates are passed through the
last frozen random-feature map, $\mathbf z_2 = \sigma\!\left(R_2\mathbf h^{(2)}\right)$,
and a scalar ridge readout is fitted,
\[
    \widehat{\mathbf w}_{\rm out}
    =
    \arg\min_{\mathbf w}
    \widehat{\mathbb E}_n
    \left[
        \left(
            y-\mathbf w^\top\mathbf z_2
        \right)^2
    \right]
    +
    \beta\|\mathbf w\|^2,
    \qquad
    \widehat f
    =
    \widehat{\mathbf w}_{\rm out}^\top\mathbf z_2 .
\]

\paragraph{Backward score.}
In the general BackLoFi notation, the cross-layer operator at the first
boundary is
\[
    \mathcal A
    =
    \widehat{\mathbb E}_n
    \left[
        y\,\mathbf g^{2}(\mathbf x)\mathbf z_0(\mathbf x)^\top
    \right] =
    U_A\Sigma V_A^\top
\]
where $\mathbf g^{2}$ is the sensitivity obtained by differentiating the
fitted upper predictor with respect to its preactivation coordinates.
In the Hermite specialization, after fitting the upper Hermite predictor
$\widehat f_H$.
The cross-layer SVD defined above retains left singular vectors that
define the sample-dependent score
\begin{equation}
    s_i(\mathbf x)
    =
    u_i^\top\mathbf g(\mathbf x).
    \label{eq:toy-learned-scores}
\end{equation}
In the present setting we retain only the row, or effective-label,
channel of BackLoFi. The right singular vectors are not directly
installed and no paired column correction is performed. The information
returned to the lower spectral estimator is therefore entirely contained
in the scores $s_i$.

\paragraph{Second Forward pass.}
For each retained score, consider the effective label $y\,s_i$, we apply the same lower covariance estimator as in the first Forward $\widehat C_{\mathbf r_0}[y\,s_i]$
pass. After projecting out the already recovered visible subspace, the leading
directions of this operator provide candidates for the missing
$\mathbf h_V,\mathbf h_W$ subspace. If several score channels are
retained, their projected covariance operators are combined as specified
in Algorithm~\ref{alg:toy-covariance-row}. The selected directions are
then appended to the visible $\mathbf h_U$ subspace to form the corrected
first-level representation $\widehat{\mathbf h}^{(1)}$. At the second
level, we set
$\mathbf z_1^{\rm B}=\Phi_2(\widehat{\mathbf h}^{(1)})$ and apply again
the Forward LoFi moments
$m_{\mathbf z_1^{\rm B}}[y]$ and
$C_{\mathbf z_1^{\rm B}}[y]$.
The resulting second-level representation is finally used to refit the
predictor against the original labels.

\begin{algorithm}[t]
\caption{Covariance-row BackLoFi for the deep staircase}
\label{alg:toy-covariance-row}
\begin{algorithmic}[1]
\setlength{\itemsep}{1pt}
\Require Data $\{(\mathbf x_\mu,y_\mu)\}_{\mu=1}^n$,
first-level dimensions $(p_U,p_V,p_W)$, retained upper rank $k_2$.

\State Form the prescribed lower representation
$\mathbf z_0=\sigma(W_0\mathbf x)$ and compute
$\widehat C_{\mathbf z_0}[y]$.
Retain its leading $p_U$ directions by absolute eigenvalue and denote
their projector by $\widehat P_U$.

\State Let $\widehat{\mathbf h}_U$ be the corresponding visible
coordinates and form
\[
    \mathbf z_1=\Phi_2(\widehat{\mathbf h}_U).
\]
Apply upper Forward LoFi: use the direction of
$\widehat m_{\mathbf z_1}[y]$ to estimate $Q$, then project this direction
out of the second-order statistic and retain its leading eigenvector to
estimate $Z$.

\State Fit the upper predictor $\widehat f_H$ and compute its fitted
sensitivity $\mathbf g$.

\State Compute the cross-layer SVD
\[
    \widehat{\mathcal A}
    =
    \widehat{\mathbb E}_n
    \left[
        y\,\mathbf g\,\mathbf z_0^\top
    \right]
    =
    U_A\Sigma V_A^\top .
\]

\For{$\mathcal I\in\{\{1\},\{2\},\{1,2\}\}$}
    \For{$i\in\mathcal I$}
        \State Set
        $s_i=u_i^\top\mathbf g$ and form
        $\widehat C_{\mathbf z_0}[y s_i]$.
    \EndFor

    \State Set $\widehat P_U^\perp=I-\widehat P_U$ and form
    \[
        T_{\mathcal I}
        =
        \sum_{i\in\mathcal I}
        \left(
            \widehat P_U^\perp
            \widehat C_{\mathbf z_0}[y s_i]
            \widehat P_U^\perp
        \right)^2 .
    \]

    \State Retain the leading $p_V+p_W$ eigenvectors of
    $T_{\mathcal I}$ and append them to the protected visible subspace,
    giving the corrected first-level representation
    $\widehat{\mathbf h}^{(1)}_B$.

    \State Form the corrected upper quadratic representation
    \[
        \mathbf z_1^B
        =
        \Phi_2(\widehat{\mathbf h}^{(1)}_B).
    \]

    \State Rerun Forward LoFi with the original labels $y$:
    retain the normalized direction selected by
    $\widehat m_{\mathbf z_1^B}[y]$, then retain $k_2-1$ additional
    directions from the corresponding deflated second-Hermite/Wick
    operator. Denote the resulting upper coordinates by
    $\widehat{\mathbf h}^{(2)}_B$.

    \State Refit the scalar predictor against the original labels $y$
    by ridge regression on
    \[
        \left[
            \widehat{\mathbf h}^{(2)}_B;
            \Phi_2(\widehat{\mathbf h}^{(2)}_B)
        \right].
    \]
\EndFor

\State Select $\mathcal I$ by validation error, and return the corresponding predictor
and corrected first-level representation.
\end{algorithmic}
\end{algorithm}

\subsection{The staircase mechanism in population}
\label{app:toy-population}

\paragraph{Target decomposition and parity.} For the population proof only, set
$t_\pm=\frac{Q\pm Z}{\sqrt2}+b_0$
and define
\begin{equation}
    a(Q,Z)
    =
    \frac{1}{2\sigma_y}\big(
        \tanh(t_+)+\tanh(t_-)-2\mu_y
    \big),
    \qquad
    b(Q,Z)
    =
    \frac{1}{2\sigma_y}\big( \tanh(t_+)-\tanh(t_-) \big).
    \label{eq:toy-ab}
\end{equation}
Then the standardized target satisfies exactly
\begin{equation}
    y=a(Q,Z)+b(Q,Z)S,
    \label{eq:toy-decomposition}
\end{equation}
where $a$ is even in Z while $b$ is odd in $Z$. Their expansions around
$(Q,Z)=(0,0)$ make the first relevant components explicit:
\begin{equation}
\begin{aligned}
    a(Q,Z)
    &=
    a_0
    +
    \frac{\tanh'(b_0)}{\sqrt2\,\sigma_y}\,Q
    +
    \frac{\tanh''(b_0)}{4\sigma_y}
    \left(Q^2+Z^2\right)
    +
    O\!\left(\|(Q,Z)\|^3\right),\\
    b(Q,Z)
    &=
    \frac{\tanh'(b_0)}{\sqrt2\,\sigma_y}\,Z
    +
    \frac{\tanh''(b_0)}{2\sigma_y}\,QZ
    +
    O\!\left(\|(Q,Z)\|^3\right),
\end{aligned}
\label{eq:toy-ab-expansion}
\end{equation}
with $a_0=(\tanh(b_0)-\mu_y)/\sigma_y$. The corresponding expansion in $\mathbf h_U$ follows immediately, since
both $Q$ and $Z$ are quadratic features of $\mathbf h_U$.

\paragraph{Lower-level Forward operator.}
We first analyze the population counterpart of the lower Forward
estimator, since
$\mathbf h^{(1)}
=(\mathbf h_U,\mathbf h_V,\mathbf h_W)
=A_\star\Phi_2(\mathbf x)$,
its restriction to the planted first-level coordinates is
\[
    A_\star C_{\mathbf z_0}[y]A_\star^\top
    =
    A_\star \mathbb E
    \left[
        y\,\Phi_2(\mathbf x)\Phi_2(\mathbf x)^\top
    \right]A_\star^\top
    =
    \mathbb E
    \left[\big(a(Q,Z)+b(Q,Z)S\big)
        \,\mathbf h^{(1)}
        \mathbf h^{(1)\top}
    \right].
\]
Hence the $U,U$, $V,V$, $W,W$ and $V,W$ blocks of this matrix are
exactly the corresponding population blocks of the operator diagonalized
by Forward LoFi.
In the Gaussian latent limit, the three first-level blocks are independent
standard Gaussian vectors. In the decomposition above, $a(Q,Z)$ is even
in $Z$, while $b(Q,Z)$ is odd. Since the output is centered and the hidden
sign $S$ is balanced, $\mathbb E[a(Q,Z)]=0$, while the symmetry
$Z\mapsto-Z$ gives $\mathbb E[b(Q,Z)]=0$. Hence the complete hidden block
of the first Forward operator vanishes:
\begin{equation}
\mathbb E\!\left[
y
\begin{pmatrix}
\mathbf h_V\mathbf h_V^\top &
\mathbf h_V\mathbf h_W^\top\\
\mathbf h_W\mathbf h_V^\top &
\mathbf h_W\mathbf h_W^\top
\end{pmatrix}
\right]
=
\underbrace{\mathbb E[a(Q,Z)]}_{=0}
I_{p_V+p_W}
+
\underbrace{\mathbb E[b(Q,Z)]}_{=0\ {\rm by\ parity}}
\,
\mathbb E\!\left[
S
\begin{pmatrix}
\mathbf h_V\mathbf h_V^\top &
\mathbf h_V\mathbf h_W^\top\\
\mathbf h_W\mathbf h_V^\top &
\mathbf h_W\mathbf h_W^\top
\end{pmatrix}
\right]
=0.
\label{eq:toy-forward-hidden-cancellation}
\end{equation}
The mixed visible-hidden blocks vanish by the same sign symmetries. By contrast, the visible block is nonzero:
\begin{equation}
A_U C_{\mathbf z_0}[y]A_U^\top
=
\mathbb E\!\left[
a(Q,Z)\mathbf h_U\mathbf h_U^\top
\right]
\neq0,
\end{equation}
accordingly to the expansion of $a(Q,Z)$ in \ref{eq:toy-ab-expansion}. Thus the first Forward operator
contains signal on $\mathbf h_U$ while its entire
$\mathbf h_V\oplus\mathbf h_W$ block vanishes.

\paragraph{Upper-level Forward operator.}
Once the visible subspace has been recovered, the next Forward step acts
on $\mathbf z_1=\Phi_2(\mathbf h_U)$. Averaging over the hidden blocks
removes the $b(Q,Z)S$ contribution and gives
\[
    \mathbb E_{\mathbf h_V,\mathbf h_W}
    [y\mid\mathbf h_U]
    =
    a(Q,Z).
\]
Since $Q$ and $Z$ are quadratic coordinates of $\mathbf h_U$, they are
linear directions of $\mathbf z_1$. In population, the expansion~\eqref{eq:toy-ab-expansion} enters directly
into the two Forward statistics, at the leading order it yields:
\begin{align}
    m_{\mathbf z_1}[y]
    &=
    \mathbb E\!\left[a(Q,Z)\mathbf z_1\right]
    =
    \frac{\tanh'(b_0)}{\sqrt2\,\sigma_y}
    \underbrace{\mathbb E[Q\mathbf z_1]}_{\propto\, B_Q} ,
    \label{eq:toy-upper-first-moment}
    \\
    C_{\mathbf z_1}[y]
    &=
    \mathbb E\!\left[
        a(Q,Z)\mathbf z_1\mathbf z_1^\top
    \right]
    =
    \frac{\tanh''(b_0)}{4\sigma_y}
    \underbrace{
        \mathbb E\!\left[
            (Q^2+Z^2))\mathbf z_1\mathbf z_1^\top
        \right]
    }_{\text{spikes along } B_Q \text{ and } B_Z}.
    \label{eq:toy-upper-second-moment}
\end{align}
The first moment identifies the weight $B_Q$ associated with $Q$. We
then restrict the pure second-Hermite operator to the orthogonal
complement of $B_Q$, so that its leading informative direction is
$B_Z$. Projecting these two recovered weights back onto $\mathbf z_1$
gives the scalar coordinates $Q$ and $Z$. We therefore retain the
two-dimensional upper representation
\[
    \mathbf h^{(2)}=(Q,Z)\in\mathbb R^2.
\]
These are intermediate features of the same scalar hierarchy, which are
subsequently combined by the upper readout and provide the two channels
used by the backward pass.

\paragraph{Upper readout.}
After recovering $\mathbf h^{(2)}=(Q,Z)$, the Forward student applies
the final random-feature map
$\mathbf z_2=\sigma(R_2\mathbf h^{(2)})$ and fits a linear ridge
readout. To see the population components selected by this fit, let
$\mathbf r_i^\top=(r_{i,Q},r_{i,Z})$ denote the $i$-th row of $R_2$.
The corresponding random-feature coordinate is
\[
    z_{2,i}
    =
    \sigma\!\left(r_{i,Q}Q+r_{i,Z}Z\right),
\]
whose low-order Hermite expansion is
\[
    z_{2,i}
    =
    \alpha_0
    +
    \alpha_1(r_{i,Q}Q+r_{i,Z}Z)
    +
    \alpha_2\left[
        r_{i,Q}^2H_2(Q)
        +\sqrt2\,r_{i,Q}r_{i,Z}QZ
        +r_{i,Z}^2H_2(Z)
    \right]
    +\cdots .
\]
Since the visible regression function $a(Q,Z)$ is even in $Z$,
\[
\begin{aligned}
    \mathbb E[a(Q,Z)z_{2,i}]
    &=
    \alpha_1r_{i,Q}\,\mathbb E[aQ]
    +
    \alpha_2r_{i,Q}^2\,\mathbb E[aH_2(Q)] \\
    &\quad+
    \alpha_2r_{i,Z}^2\,\mathbb E[aH_2(Z)]
    +\cdots ,
\end{aligned}
\]
whereas the linear $Z$ and $QZ$ components vanish by parity.
In the Gaussian RF limit, the population feature covariance is
block-diagonal across Hermite sectors, so the ridge fit only rescales
these components. Consequently, the first relevant terms of the fitted
predictor have the form
\[
    \widehat f(Q,Z)
    =
    \gamma_Q Q
    +
    \gamma_{QQ}H_2(Q)
    +
    \gamma_ZH_2(Z)
    +
    \cdots .
\]

\paragraph{Backward score.}
Consequently, the fitted upper sensitivity with respect to the
second-level coordinates satisfies
\begin{equation}
    \mathbf g(Q,Z)
    =
    \nabla_{(Q,Z)}\widehat f(Q,Z)
    =
    \begin{pmatrix}
        \gamma_Q+\sqrt2\,\gamma_{QQ}Q+\cdots\\
        \sqrt2\,\gamma_Z Z+\cdots
    \end{pmatrix}.
    \label{eq:toy-upper-sensitivity}
\end{equation}
The relevant distinction is their parity in $Z$: the first response is
even in $Z$, whereas the second is odd and starts linearly in $Z$.
Since the hidden coefficient $b(Q,Z)$ is itself odd in $Z$, only the
latter can produce a non-zero hidden coupling after backward modulation.
Indeed, let $\mathcal A_Z$ denote the cross-layer row associated with
the second component of~\eqref{eq:toy-upper-sensitivity}. We identify
quadratic Hermite coefficient vectors with their symmetric-matrix
representation and keep the same notation for both. We approximate the
latent moments by replacing $(\mathbf h_U,\mathbf h_V,\mathbf h_W)$
with independent standard Gaussian vectors $(\mathbf u,\mathbf v,\mathbf w)$
of the same dimensions. The actual latent blocks are quadratic functions
of the same input $\mathbf x$; their joint law is not exactly Gaussian.
We sketch the reason for this approximation below.
In $\mathbb E_{\mathbf u}$, $Q,Z$ are evaluated at $\mathbf h_U=\mathbf u$;
in $\mathbb E_{\mathbf v,\mathbf w}$,
$S=\operatorname{sign}(\mathbf v^\top C\mathbf w)$, with $C$ fixed.
The second-order $V,W$ contraction gives
\begin{align}
    \mathcal A_Z
    &=
    \mathbb E\!\left[
        \bigl(a(Q,Z)+b(Q,Z)S\bigr)g_Z(Q,Z)
        \frac{\mathbf x\mathbf x^\top-I_d}{\sqrt2}
    \right]
    \nonumber\\
    &=
    \sqrt2\,\mathbb E_{\mathbf u}[b(Q,Z)g_Z(Q,Z)]
    \sum_{a=1}^{p_V}\sum_{b=1}^{p_W}
    \mathbb E_{\mathbf v,\mathbf w}[S\,v_aw_b]
    \left(
        A_{V,a}A_{W,b}+A_{W,b}A_{V,a}
    \right)
    +\mathcal R_Z,
    \label{eq:toy-cross-layer-contraction}
\end{align}
where $\mathcal R_Z$ collects the other contractions and the error from
replacing the dense quadratic latents by their Gaussian reference.
The product of two normalized quadratic features satisfies the exact identity
\begin{equation}
    h_{V,a}h_{W,b}
    =
    \operatorname{tr}(A_{V,a}A_{W,b})
    +
    \underbrace{
    \left\langle
        A_{V,a}A_{W,b}+A_{W,b}A_{V,a},
        \mathbf x\mathbf x^\top-I_d
    \right\rangle_F
    }_{\text{second Hermite order}}
    +
    \underbrace{
    \operatorname{Proj}_{\mathcal H_4}
    \!\left(h_{V,a}h_{W,b}\right)
    }_{\text{fourth Hermite order}} .
    \label{eq:toy-vw-hermite-product}
\end{equation}
The constant term has zero projection onto the quadratic Hermite
features. For the leading response $g_Z=\sqrt2\gamma_Z Z$ with
$\gamma_Z\neq0$, the displayed contraction is nonzero: strict
monotonicity of $\tanh$ gives $b(Q,Z)Z>0$ for $Z\neq0$, so
$\mathbb E_{\mathbf u}[bg_Z]=\sqrt2\gamma_Z\mathbb E_{\mathbf u}[bZ]\neq0$.
Its size is calculated in Section~\ref{app:toy-finite-sample}.

If $\mathcal A=U_A\Sigma V_A^\top$ and a retained left singular vector
$u_i=(u_{i,Q},u_{i,Z})$ has $u_{i,Z}\neq0$, its score has
the form
\begin{equation}
\label{eq:toy-backward-score}
    s_i(Q,Z)
    =
    u_i^\top\mathbf g(Q,Z)
    =
    s_{i,\mathrm{even}}(Q,Z)
    +
    \eta_i Z
    +
    \cdots ,
    \qquad
    \eta_i\neq0
\end{equation}

\paragraph{Sensitivity through the random lift.}
The sensitivity used below is written with respect to the filtered
second-level coordinates $\mathbf h^{(2)}=(Q,Z)$, whereas the general
BackLoFi construction uses the sensitivity with respect to the neuron
preactivations. Here these preactivations are
$R_2\mathbf h^{(2)}$. If
\[
    \mathbf g_{\rm neu}
    =
    \nabla_{R_2\mathbf h^{(2)}}\widehat f,
\]
then the chain rule gives
\begin{equation}
    \mathbf g
    :=
    \nabla_{\mathbf h^{(2)}}\widehat f
    =
    R_2^\top\mathbf g_{\rm neu},
    \qquad
    \mathcal A
    =
    \mathbb E
    \!\left[
        y\,\mathbf g\,\mathbf z_0^\top
    \right]
    =
    R_2^\top
    \mathbb E
    \!\left[
        y\,\mathbf g_{\rm neu}\mathbf z_0^\top
    \right].
    \label{eq:toy-random-lift-sensitivity}
\end{equation}
Thus the two-dimensional cross-layer operator considered below is the
neuron-level operator restricted to the subspace generated by the random
lift. At finite width, orthonormalizing this subspace amounts to replacing
$\mathbf g$ by
$(R_2^\top R_2)^{-1/2}\mathbf g$. For an iid Gaussian lift of fixed
rank two,
\[
    \frac{1}{p}R_2^\top R_2
    \longrightarrow I_2,
\]
so this only multiplies the score $u_i^\top\mathbf g$ by the common factor
$p^{-1/2}$ asymptotically, which does not change the recovered lower
eigenspace. We therefore use the simpler reduced-coordinate sensitivity
$\mathbf g$ below.

\paragraph{Second Forward pass.}
For each retained score $s_i$ in~\eqref{eq:toy-backward-score}, the
backward correction simply reruns the lower Forward covariance with the
effective label $y s_i$. In planted coordinates,
\begin{equation}
    A_\star C_{\mathbf z_0}[y s_i]A_\star^\top
    =
    \mathbb E\!\left[
        y s_i\,
        \mathbf h^{(1)}\mathbf h^{(1)\top}
    \right].
    \label{eq:toy-backward-operator}
\end{equation}
Its $V,W$ block is, at leading order in the odd component of the score,
\begin{align}
    \mathbb E\!\left[
        y s_i\,\mathbf h_V\mathbf h_W^\top
    \right]
    =
    \underbrace{\mathbb E\!\left[
        b(Q,Z)s_i(Q,Z)
    \right]}_{\eta_i\,
    \mathbb E\!\left[b(Q,Z)Z\right]+\cdots}
    \mathbb E\!\left[
        S\,\mathbf h_V\mathbf h_W^\top
    \right] \neq 0
    \label{eq:toy-backward-hidden-block}
\end{align}
The even part of $s_i$ does not contribute because $b(Q,Z)$ is odd in
$Z$. Thus every retained mode with $\eta_i\neq0$ produces a covariance
spike on the hidden $V,W$ subspace, which was absent from the ordinary
Forward operator. When several scores are retained, their projected covariance operators
are combined using the same squared-covariance aggregation as in the
estimator, before extracting the hidden eigenspace.

Since $C$ has full rank, the hidden restriction of
$C_{\mathbf z_0}[ys_i]$ has range
$\mathbf h_V\oplus\mathbf h_W$. Its leading $p_V+p_W$ eigendirections
therefore recover the joint hidden first-level subspace. We do not need
to identify the $V$ and $W$ bases separately: the subspace is recovered
up to an orthogonal change of coordinates. Projecting the data onto the
visible and recovered hidden directions yields the corrected
first-level representation
\[
    \widehat{\mathbf h}_B^{(1)}
    =
    \bigl(
        \widehat{\mathbf h}_U,
        \widehat{\mathbf h}_{VW}
    \bigr).
\]

\paragraph{Corrected upper Forward pass.}
We then resume the ordinary
Forward hierarchy using the original labels $y$ and form
\[
    \mathbf z_1^{B}
    =
    \Phi_2\!\left(\widehat{\mathbf h}^{(1)}_B\right).
\]
At population level, the relevant quadratic coordinates in this dictionary
are $Q$, $Z$, and
$T=\mathbf h_V^\top C\mathbf h_W$.
The coefficients of $m_{\mathbf z_1^B}[y]$ along the relevant
quadratic directions satisfy
\[
    \mathbb E[yQ]\neq0,
    \qquad
    \mathbb E[yZ]=0,
    \qquad
    \mathbb E[yT]=0.
\]
We retain this direction the $Q$ direction and then apply the covariance estimator $C_{\mathbf z_1^{B}}[y]$ operator on its orthogonal complement. The new feature made available by the backward
correction is the nonzero coupling
\begin{equation}
\begin{aligned}
    \mathbb E[yZT]
    &=
    \mathbb E\!\left[
        \bigl(a(Q,Z)+b(Q,Z)S\bigr)ZT
    \right] =
    \mathbb E[b(Q,Z)Z]\,
    \mathbb E|T|
    >0 ,
\end{aligned}
\label{eq:toy-corrected-upper-coupling}
\end{equation}
where $S=\operatorname{sign}(T)$. Hence the $Q$-deflated upper operator
has a nontrivial component on $\operatorname{span}\{Z,T\}$. Under a spectral-gap assumption, the corrected upper Forward step retains
a rank-$k_2$ representation $\mathbf h_B^{(2)}\in\mathbb R^{k_2}$
whose informative population span contains $Q$ together with
$\operatorname{span}\{Z,T\}$.
The scalar predictor is then refitted against the original labels $y$ using
a ridge readout on $\bigl[ \mathbf h_B^{(2)};\Phi_2(\mathbf h_B^{(2)}) \bigr]$
Since the retained upper span contains both $Z$ and $T$, this quadratic
readout contains the interaction $ZT$, providing a non-zero low-degree
prediction channel for the hidden branch.

\subsection{Finite-sample scaling and the learned-score bottleneck}
\label{app:toy-finite-sample}

We now translate the population gaps of the previous subsection into
finite-sample scales. The purpose is not to prove a sharp concentration
theorem for the complete learned-score pipeline, but to identify the
effective dimension and spectral gap controlling each stage. Throughout,
\[
    D_0=D_2(d)\asymp d^2,
    \qquad
    p_U,p_V,p_W\asymp d^\epsilon,
    \qquad
    0<\epsilon<1.
\]
We suppress logarithmic factors from truncating the Hermite features and
constants independent of $d$ and $n$.

\paragraph{Matrix concentration and subspace recovery.}
We consider the weighted covariances
defined in~\eqref{eq:toy-empirical-moments}:
\[
    M=C_{\mathbf z}[\xi]
    =\mathbb E\!\left[(\xi-\mathbb E\xi)\mathbf z\mathbf z^\top\right],
    \qquad D=\dim(\mathbf z).
\]
The lower Forward pass uses $C_{\mathbf z_0}[y]$, and the backward
correction uses $C_{\mathbf z_0}[ys_i]$. Both act on
$\mathbf z_0=\Phi_2(\mathbf x)$, so $D=D_2(d)=d(d+1)/2$.
The upper Forward covariance is $C_{\mathbf z_1}[y]$, with
$\mathbf z_1=\Phi_2(\mathbf h_U)$ and $D=D_2(p_U)=p_U(p_U+1)/2$.
In each case, let $\widehat M=\widehat C_{\mathbf z}[\xi]$ be the
empirical matrix from~\eqref{eq:toy-empirical-moments}, and let
$P$ and $\widehat P$ project onto a target eigenspace of $M$ and the
corresponding eigenspace of $\widehat M$. For independent samples, when the
truncated centered summands have operator norm and per-sample matrix variance
$\widetilde O(D)$ and the truncation bias is negligible, the matrix
Bernstein inequality \citep[Theorem~5.4.1]{vershynin2018highdimensional} gives
\begin{equation}
    \|\widehat M-M\|_{\mathrm{op}}
    =
    \widetilde O_{\mathbb P}
    \left(
        \sqrt{\frac{D}{n}}+\frac{D}{n}
    \right).
    \label{eq:toy-generic-concentration}
\end{equation}
For the truncation and concentration argument for Hermite features, see
\citet[Appendix~F.2.1, Lemmas~F.4--F.5]{wen2025gaussian_equivalence},
in particular the proof of Lemma~F.5. In the regime $n\gg D$ considered
below, the square-root term dominates.
If the target eigenspace is separated by a population gap $\Delta$,
the Davis--Kahan theorem \citep{yu2015davis_kahan} then gives the
sufficient condition, up to logarithmic factors,
\begin{equation}
    n\gg \frac{D}{\Delta^2}
    \label{eq:toy-generic-eigen-scale}
\end{equation}
for consistent recovery of its projector.

We use the analogous statement for the cross-layer matrix. If
$\mathcal A$ is a rectangular population operator, $\widehat{\mathcal A}$
its empirical estimate, and $u$ and $\widehat u$ corresponding left
singular vectors separated by a singular-value gap $\Delta_A$,
\citet{wedin1972perturbation} gives the same sufficient scaling,
\begin{equation}
    n\gg \frac{D}{\Delta_A^2},
    \label{eq:toy-generic-singular-scale}
\end{equation}
where $D$ is the effective dimension controlling
$\|\widehat{\mathcal A}-\mathcal A\|_{\mathrm{op}}$.
These estimates apply directly to fixed representations and scores.
Reusing the samples that fitted them requires a separate dependence
argument.

\paragraph{Visible Forward recovery.}
The lower Forward covariance acts on
$\Phi_2(\mathbf x)\in\mathbb R^{D_0}$.
From the population calculation above, the smallest visible spike satisfies
$\Delta_U\asymp p_U^{-1/2}$. Applying
\eqref{eq:toy-generic-eigen-scale} therefore gives
\begin{equation}
    n_U
    \gg
    D_0p_U
    \asymp
    d^{2+\epsilon}.
    \label{eq:toy-visible-sample-scale}
\end{equation}
This is the scale at which the Forward estimator recovers the visible
$U$-subspace. Its hidden population block remains zero, so this argument
does not reveal the $V,W$ directions.

\paragraph{Upper Forward recovery.}
Once $U$ has been recovered, the upper problem lives in
\[
    D_U=D_2(p_U)\asymp p_U^2
\]
dimensions. Assuming that the first-moment coefficient selecting $Q$ and
the $Q$-deflated Wick gap selecting $Z$ remain bounded away from zero, the
same effective-dimension argument gives
\begin{equation}
    n_{Q,Z}
    \gg
    p_U^2
    \asymp
    d^{2\epsilon}.
    \label{eq:toy-upper-sample-scale}
\end{equation}
Since $2\epsilon<2+\epsilon$, this stage is not a bottleneck.

\paragraph{Hidden recovery given an informative score.}
Let $p_h=p_V=p_W$. From the population $V,W$ block derived above,
the nonzero hidden eigenvalues have the same order as the singular values
of $C$. Under the balanced-singular-value condition
\eqref{eq:toy-C-conditioning}, the smallest hidden population gap scales as
$\Delta_H\asymp p_h^{-1/2}$.
The operator again acts in the $D_0$-dimensional quadratic input space,
so \eqref{eq:toy-generic-eigen-scale} yields
\begin{equation}
    n_{V,W}
    \gg
    D_0p_h
    \asymp
    d^{2+\epsilon}.
    \label{eq:toy-hidden-sample-scale}
\end{equation}

\paragraph{Learned-score bottleneck.}
The practical method must learn the informative score from the cross-layer
matrix
\[
    \widehat{\mathcal A}
    =
    \widehat{\mathbb E}
    \left[
        y\,\mathbf g(\mathbf x)\Phi_2(\mathbf x)^\top
    \right].
\]
We now compute the size of the $V,W$ contraction in
\eqref{eq:toy-vw-hermite-product}. For this calculation, we replace the
actual latent blocks $(\mathbf h_U,\mathbf h_V,\mathbf h_W)$ by
independent standard Gaussian vectors $(\mathbf u,\mathbf v,\mathbf w)$
of the same dimensions. We first explain why this is a natural
approximation and what it establishes.

Fix a teacher row and write $A$ for its symmetric-matrix representation.
Diagonalizing $A$ gives
\[
    h_A(\mathbf x)
    =\frac{\mathbf x^\top A\mathbf x-\operatorname{tr}A}{\sqrt2}
    =\frac1{\sqrt2}\sum_{r=1}^d\lambda_r(A)(\xi_r^2-1),
    \qquad \xi_r\stackrel{\mathrm{iid}}{\sim}\mathcal N(0,1).
\]
For the dense Gaussian teacher draws,
$\|A\|_F^2\to1$ and $\|A\|_{\mathrm{op}}=O_{\mathbb P}(d^{-1/2})$.
Thus the total variance tends to one, while the largest summand has
variance $O_{\mathbb P}(d^{-1})$. The Lindeberg central limit theorem
then gives a standard Gaussian limit. The same argument applies to
every fixed linear combination of teacher rows. Moreover,
$\mathbb E_{\mathbf x}[h_{A_i}h_{A_j}]
=\operatorname{tr}(A_iA_j)\to\delta_{ij}$.
Consequently, conditional on typical teacher draws, any fixed collection
of latent coordinates converges jointly, over the randomness of
$\mathbf x$, to independent standard Gaussians. This is also a
degree-two instance of the Gaussian approximation criterion in
\citet[Theorem~4.1]{wen2025gaussian_equivalence}.

This argument explains the independent Gaussian variables used in the
coefficient calculation. It does not yet bound the replacement error
when all $p_U+p_V+p_W\asymp d^\epsilon$ coordinates enter $Q,Z,S$.
In particular, convergence in distribution alone does not show that
the resulting error in the cross-layer row is smaller than
$d^{-1/2}$. That error remains part of $\mathcal R_Z$ in
\eqref{eq:toy-cross-layer-contraction}.

We therefore compute the following coefficients in the Gaussian
approximation: evaluate $Q,Z$ at $\mathbf u$, set
$S=\operatorname{sign}(\mathbf v^\top C\mathbf w)$, and keep $C$ fixed.
Then
\[
    H_{ab}=\mathbb E_{\mathbf v,\mathbf w}[S\,v_aw_b],
    \qquad
    \beta_Z=\mathbb E_{\mathbf u}[b(Q,Z)g_Z(Q,Z)],
    \qquad
    K_d=\sum_{a,b}H_{ab}
    (A_{V,a}A_{W,b}+A_{W,b}A_{V,a}).
\]
Under the norm-preserving identification of quadratic Hermite coefficient
vectors with symmetric matrices, the displayed contribution to the
$Z$ row in~\eqref{eq:toy-cross-layer-contraction} is
$\mathcal A_Z^{VW}=\sqrt2\,\beta_ZK_d$.
Conditional on $C$, $H$ is fixed when averaging over the independent
$V,W$ teacher rows.

Conditioning on $\mathbf v$ gives
\[
    \mathbb E_{\mathbf w}[S\mathbf w\mid\mathbf v]
    =\sqrt{\frac2\pi}\frac{C^\top\mathbf v}{\|C^\top\mathbf v\|},
    \qquad
    \langle H,C\rangle_F
    =\sqrt{\frac2\pi}\mathbb E_{\mathbf v}\|C^\top\mathbf v\|.
\]
The singular-value bounds in~\eqref{eq:toy-C-conditioning} and
Cauchy--Schwarz therefore imply
\[
    \sqrt{\frac2\pi}\frac{c_-}{c_+}
    \frac{\mathbb E_{\mathbf v}\|\mathbf v\|}{\sqrt{p_h}}
    \leq\|H\|_F\leq1.
\]
For the upper bound, the products $v_aw_b$ are orthonormal in Gaussian
$L^2$, so Bessel's inequality gives
$\sum_{a,b}(\mathbb E[Sv_aw_b])^2\leq\mathbb E[S^2]=1$.
Since $\mathbb E\|\mathbf v\|/\sqrt{p_h}\to1$, these bounds show
$\|H\|_F=\Theta(1)$ directly for the prescribed matrix $C$.

Each teacher row, viewed as a symmetric matrix $A$, has Gaussian
coefficients of variance $D_0^{-1}$ in an orthonormal Frobenius basis.
Writing $\mathbb E_A$ for expectation over these draws,
\[
    \mathbb E_A[A_{ij}A_{kl}]
    =\frac{\delta_{ik}\delta_{jl}+\delta_{il}\delta_{jk}}{2D_0},
    \qquad
    \mathbb E_A[A^2]=\frac{I_d}{d}.
\]
For two independent rows $A,B$, Gaussian fourth moments give
\begin{equation}
\begin{aligned}
    \mathbb E_A\|AB+BA\|_F^2
    &=2\mathbb E_A\operatorname{tr}(A^2B^2)
      +2\mathbb E_A\operatorname{tr}(ABAB)\\
    &=\frac2d+\frac1{D_0}+\frac d{D_0^2}
      =:\nu_d=\frac2d+O(d^{-2}).
\end{aligned}
\label{eq:toy-pair-contraction-moment}
\end{equation}
In $\mathbb E_A\|K_d\|_F^2$, all cross terms between distinct row
pairs vanish by independence and centering. Hence
\begin{equation}
    \mathbb E_A\|K_d\|_F^2=\nu_d\|H\|_F^2,
    \qquad
    \operatorname{Var}_A(\|K_d\|_F^2)
    \leq\frac{96\nu_d}{D_0}\|H\|_F^4.
    \label{eq:toy-contraction-concentration}
\end{equation}
For the variance bound, rotate the independent Gaussian row families
using an SVD of $H$, so that
$K_d=\sum_j h_j(A_jB_j+B_jA_j)$, where $h_j$ are the singular values
of $H$. With $F=\|K_d\|_F^2$, its gradient is
$\nabla_{A_j}F=2h_j(K_dB_j+B_jK_d)$, and similarly for $B_j$.
The Gaussian Poincar\'e inequality, with coefficient variance $D_0^{-1}$,
gives
\[
    \operatorname{Var}_A F
    \leq\frac{16}{D_0}\mathbb E_A\!\left[
        F\sum_jh_j^2(\|A_j\|_F^2+\|B_j\|_F^2)
    \right]
    \leq\frac{96}{D_0}\mathbb E_A[F]\|H\|_F^2.
\]
The last inequality follows by conditioning on either row family and
using the Gaussian fourth-moment bound
$\mathbb E[XY]\leq3\mathbb E[X]\mathbb E[Y]$ for nonnegative
quadratic forms $X,Y$ of the same centered Gaussian vector.
Since the variance in~\eqref{eq:toy-contraction-concentration} is
$O(d^{-1})$ times the squared mean, Chebyshev's inequality yields
\begin{equation}
    \|K_d\|_F=(\sqrt2+o_{\mathbb P}(1))\frac{\|H\|_F}{\sqrt d},
    \qquad
    \|\mathcal A_Z^{VW}\|_2
    =(2+o_{\mathbb P}(1))\frac{|\beta_Z|\|H\|_F}{\sqrt d}.
    \label{eq:toy-cross-layer-contraction-scale}
\end{equation}
Thus the displayed contraction has size $\Theta_{\mathbb P}(d^{-1/2})$
when $|\beta_Z|=\Theta(1)$. For the retained linear response
$g_Z=\sqrt2\gamma_ZZ$, this coefficient is
$\beta_Z=\sqrt2\gamma_Z\mathbb E_{\mathbf u}[bZ]$.
In the numerical specialization, $(Q,Z)$ converges to two independent
standard Gaussians and
$\mathbb E_{\mathbf u}[bZ]\to\mathbb E[\tanh'(G+b_0)]/(\sqrt2\sigma_y)>0$,
where $G\sim\mathcal N(0,1)$ and $\sigma_y$ denotes its limiting value.
If $|\gamma_Z|=\Theta(1)$, this gives the stated scale.

The rank-one matrix containing only this $Z$ row has a singular value
and a gap to zero of order $d^{-1/2}$. For the complete cross-layer
matrix, using this gap also requires control of $\mathcal R_Z$ and
separation from the other singular values. The calculation above does
not supply those bounds. If the displayed contribution remains
spectrally isolated at this scale and the fitted-source concentration
bound holds, then the right feature dimension $D_0\asymp d^2$,
\eqref{eq:toy-generic-singular-scale} and
$\Delta_A\asymp d^{-1/2}$ give
\begin{equation}
    n_{\mathrm{cross}}
    \gg
    \frac{D_0}{\Delta_A^2}
    \asymp
    d^3.
    \label{eq:toy-cross-layer-sample-scale}
\end{equation}
Hence the transition near $\alpha=3$ is naturally associated with learning
the backward score rather than with the subsequent effective-label
covariance.

\paragraph{Corrected upper Forward pass.}
After the lower correction, a new upper Forward estimator acts on a
representation of effective quadratic dimension
\[
    D_2(d_1)\asymp d_1^2\asymp d^{2\epsilon}.
\]
For a fixed number of retained upper directions and nonvanishing population
gaps, its natural sample scale is therefore again $d^{2\epsilon}$.
We leave the exact corrected upper rank unspecified until the corresponding
numerical architecture is fixed.

\begin{table}[t]
    \centering
    \begin{tabular}{lc}
        \hline
        Stage & Predicted sample scale \\
        \hline
        Visible $U$ recovery
        & $d^{2+\epsilon}$ \\
        Upper $Q,Z$ recovery
        & $d^{2\epsilon}$ \\
        Learned backward score
        & $d^3$ \\
        Hidden $V,W$ recovery given the score
        & $d^{2+\epsilon}$ \\
        Corrected upper Forward pass
        & $d^{2\epsilon}$ \\
        \hline
    \end{tabular}
    \caption{Stagewise effective-dimension scales for the quadratic
    staircase. The $d^3$ score-selection scale uses the contraction
    size derived in~\eqref{eq:toy-cross-layer-contraction-scale} and
    requires this contribution to remain spectrally isolated in the
    complete cross-layer matrix.}
    \label{tab:toy-sample-scales}
\end{table}

\paragraph{Scaling summary.}
Table~\ref{tab:toy-sample-scales} collects the resulting scales. For $0<\epsilon<1$, the learned-score route is therefore predicted to be
controlled by
\begin{equation}
    n_{\mathrm{BackLoFi}}
    \asymp
    \max\left\{
        d^{2+\epsilon},d^3
    \right\}
    =
    d^3.
    \label{eq:toy-backlofi-sample-scale}
\end{equation}
For the numerical choice $\epsilon=1/2$, the corresponding transitions are
\[
    \alpha_U=\frac52,
    \qquad
    \alpha_{\mathrm{Back}}=3.
\]

The $d^{2+\epsilon}$ scales follow from the population gaps established
above combined with the effective-dimension concentration rule
\eqref{eq:toy-generic-concentration}. By contrast, the $d^3$ prediction is
based on the contraction size derived
in~\eqref{eq:toy-cross-layer-contraction-scale}. Its application to
the complete fitted operator still requires control of the remaining
contractions and the resulting singular-value gap.

\section{The implemented algorithm}
\label{app:algorithm}
This appendix gives Algorithm~\ref{alg:combined} exactly as it is run for the
experiments, with the choices that the main text leaves open. The values of
every hyperparameter are listed in Appendix~\ref{app:hyperparameters}.

\paragraph{Parameterization.}
A layer is a filter $\mathbf F_\ell\in\mathbb R^{p_{\ell-1}\times K_\ell}$
followed by a Gaussian lift $\mathbf R_\ell\in\mathbb R^{p_\ell\times K_\ell}$
with i.i.d.\ $\mathcal N(0,1)$ entries,
\begin{equation}
 \mathbf h_\ell=\frac{1}{c_\ell}\mathbf R_\ell\mathbf F_\ell^{\top}\mathbf z_{\ell-1},
 \qquad
 \mathbf z_\ell=\frac{\sigma(\mathbf h_\ell)}{\sqrt{p_\ell}},
 \qquad
 \mathbf D_\ell=\frac{\operatorname{diag}(\sigma'(\mathbf h_\ell))}{\sqrt{p_\ell}},
 \qquad
 c_\ell^2=\Etrain\|\mathbf F_\ell^{\top}\mathbf z_{\ell-1}\|_2^2 ,
 \label{eq:spec-block}
\end{equation}
so that $\mathbf W_\ell=\mathbf R_\ell\mathbf F_\ell^{\top}/c_\ell$. The scale
$c_\ell$ is fitted with the forward network and never refitted. A cutoff $k_\ell$
stores $K_\ell=k_\ell+1$ columns, the normalized first moment followed by $k_\ell$
eigenvectors of $\widehat{\mathbf C}_\ell$ deflated against it. Each column carries
a tag, \textsc{fwd}, \textsc{read} or \textsc{pair}, according to whether it was
installed by forward Neural LoFi, by a row step or by a column step. The rank of
every boundary is $r_\ell=\max\{1,\operatorname{round}(fk)\}$ for a rank fraction
$f$. The widths never change: every append replaces $r_\ell$ existing columns.

\begin{algorithm}[t]
\caption{One pass of Algorithm~\ref{alg:combined}, as implemented.}
\label{alg:ffn-spec}
\begin{algorithmic}[1]
\Require Network $(\mathbf F_\ell,\mathbf R_\ell,c_\ell)_{\ell\le L}$ and readout
 $(\mathbf a_L,b)$; rank fraction $f$, scale $\beta$; signal $\in\{$first-order,
 exact$\}$.
\State Compute $\mathbf z_\ell,\mathbf D_\ell$ by \eqref{eq:spec-block};\;
 $\mathbf g^L\gets\mathbf D_L\mathbf a_L$;\; keep a copy $(\mathbf F^0_\ell,\mathbf R^0_\ell)$ of the network
\For{$\ell=L-1,\dots,1$}
 \State $(\mathbf U,\mathbf V)\gets$ top $r_\ell$ singular pairs of
  $\mathbf A_\ell=\Etrain[y\,\mathbf g^{\ell+1}\mathbf z_{\ell-1}^{\top}]$;\;
  $\mathbf s\gets\mathbf U^{\top}\mathbf g^{\ell+1}$ \Comment{frozen scores}
 \State $I\gets$ the $r_\ell$ columns of $\mathbf F_\ell$ of lowest importance
  \eqref{eq:spec-importance};\; $\tau_\ell\gets$ mean coordinate RMS of the kept
  \textsc{fwd} and \textsc{read} columns
 \State $\mathbf F_\ell[:,I]\gets\mathbf v_i\,\beta\tau_\ell/\RMS\langle\mathbf v_i,\mathbf z_{\ell-1}\rangle$;\;
  $\mathbf R_\ell[:,I]\gets$ fresh $\mathcal N(0,1)$ \Comment{row step}
 \State Recompute $\widetilde{\mathbf z}_\ell,\mathbf D_\ell$;\;
  $\mathbf Q\gets\Etrain[\widetilde{\mathbf z}_\ell(y\mathbf s)^{\top}]$
 \State $J\gets$ the $r_\ell$ columns of $\mathbf F_{\ell+1}$ of lowest importance;\;
  $\tau_{\ell+1}$ as above
 \State $\mathbf F_{\ell+1}[:,J]\gets\mathbf q_i\,\beta\sqrt{p_{\ell+1}}\,\tau_{\ell+1}/\RMS\langle\mathbf q_i,\widetilde{\mathbf z}_\ell\rangle$;\;
  $\mathbf R_{\ell+1}[:,J]\gets\mathbf U$ \Comment{column step}
 \If{$\ell>1$}
  \State first-order: $\mathbf g^\ell\gets\mathbf D_\ell\bigl(\Etrain\bigl[\tfrac{1}{c_{\ell+1}}\mathbf F^0_{\ell+1}(\mathbf R^0_{\ell+1})^{\top}\mathbf g^{\ell+1}\bigr]+\tfrac{1}{c_{\ell+1}}\mathbf Q^0\mathbf s\bigr)$,\;
   $\mathbf q^0_i\propto\tfrac{1}{c_\ell}\mathbf R^0_\ell(\mathbf F^0_\ell)^{\top}\mathbf v_i$
  \State exact: $\mathbf g^\ell\gets\mathbf D_\ell\mathbf W_{\ell+1}^{\top}\mathbf g^{\ell+1}$ through the corrected $\mathbf W_{\ell+1}$, per sample
 \EndIf
\EndFor
\State Recompute $\mathbf z_L$;\; refit $(\mathbf a_L,b)$ by ridge regression
\end{algorithmic}
\end{algorithm}

\paragraph{Scales.}
The only free number is $\beta$, the size of an appended
coordinate in units of a forward one. It sets the amplitude of every column
appended by the row step and by the column step. Write
$\RMS\,\xi=(\Etrain\xi^2)^{1/2}$ for the root mean square over the training
set, and let $\tau_\ell$ be the mean of
$\RMS\langle\mathbf f_a,\mathbf z_{\ell-1}\rangle$ over the surviving columns
$\mathbf f_a$ of $\mathbf F_\ell$ that carry a random lift (\textsc{fwd} and
\textsc{read}). The row step appends $\gamma^{\rm row}_i\mathbf v_i$ to
$\mathbf F_\ell$ and the column step appends $\gamma^{\rm col}_i\mathbf q_i$ to
$\mathbf F_{\ell+1}$, with
\begin{equation}
 \gamma^{\rm row}_i=\frac{\beta\,\tau_\ell}{\RMS\langle\mathbf v_i,\mathbf z_{\ell-1}\rangle},
 \qquad
 \gamma^{\rm col}_i=\frac{\beta\sqrt{p_{\ell+1}}\,\tau_{\ell+1}}{\RMS\langle\mathbf q_i,\widetilde{\mathbf z}_\ell\rangle}.
 \label{eq:spec-gammas}
\end{equation}
In the notation of \eqref{eq:main-coupled-steps}, where the factor $1/c$ of
\eqref{eq:spec-block} is absorbed, this reads
$\boldsymbol\Gamma^{\rm row}_\ell=\operatorname{diag}(\gamma^{\rm row}_i)/c_\ell$
and
$\boldsymbol\Gamma^{\rm col}_\ell=\operatorname{diag}(\gamma^{\rm col}_i)/c_{\ell+1}$.
An appended read coordinate thus has root mean square $\beta\tau_\ell$. A
paired column carries the extra factor $\sqrt{p_{\ell+1}}$ because its lift
column $\mathbf u_i$ has unit norm, while a forward column is lifted by
$\mathcal N(0,1)$ entries; with it, the paired column contributes to a neuron
of layer $\ell+1$ as much as $\beta$ forward coordinates. Each $\gamma_i$ is
set per column, which equalizes the amplitudes of the retained modes. The
columns $\mathbf q^0_i$ of the first-order signal are rescaled to the same
target as $\gamma^{\rm col}_i\mathbf q_i$. We use $\beta=0.25$ for the fully
connected network and $\beta=4$ for the convolutional one, each chosen by a
local scan at $n=10^4$.

\paragraph{Which columns are replaced.}
The widths are fixed, so each append replaces $r_\ell$
existing columns: the row step replaces the set $I$ of columns of
$\mathbf F_\ell$, and the column step the set $J$ of columns of
$\mathbf F_{\ell+1}$. In both cases we remove the $r_\ell$ columns of lowest
importance $\Omega$, and the rule treats forward, read and paired columns
alike at every pass. Importance is defined as in forward Neural LoFi, where a
direction matters when gradient flow amplifies its coefficients in the rows,
but it is computed on the current network, whose layers above are no longer
random. For a column $a$ of layer $m$ it is built in three steps.
\begin{enumerate}
 \item \emph{Coefficients.} Let $\widehat{\mathbf v}_a$ be the
  unit direction of the column and
  $h_a=\langle\widehat{\mathbf v}_a,\mathbf z_{m-1}\rangle$ its coordinate.
  Neuron $j$ reads it with the coefficient
  $c_{j,a}=(\mathbf R_m)_{ja}\|\mathbf v_a\|/c_m$, and sees the effective
  label $y\,\langle\boldsymbol\omega_j,\mathbf g^{m+1}\rangle$, where
  $\boldsymbol\omega_j$ is the column of $\mathbf W_{m+1}$ that reads it.
 \item \emph{Rate.} Project the row update
  \eqref{eq:main-pair-gradient} on $\widehat{\mathbf v}_a$, with the label in
  place of the residual and the gate expanded to first degree,
  $\sigma'(h)\simeq d_m+e_m h$ with
  $(d_m,e_m)=(\Etrain\sigma'(h),\Etrain h\sigma'(h))$. This gives
  $\dot c_{j,a}\propto e_m\,\lambda_j(a)\,c_{j,a}+d_m\,\ell_j(a)$, with the
  drivers $\lambda_j,\ell_j$ given in \eqref{eq:spec-importance}.
 \item \emph{Importance.} Take the root mean square of this
  rate over the neurons of the layer:
\end{enumerate}
\begin{equation}
\begin{gathered}
 \lambda_j(a)=\bigl\langle\boldsymbol\omega_j,\Etrain[y\,\widehat{\mathbf g}\,h_a^2]\bigr\rangle,
 \qquad
 \ell_j(a)=\bigl\langle\boldsymbol\omega_j,\Etrain[y\,\widehat{\mathbf g}\,h_a]\bigr\rangle,\\
 \Omega(a)=\RMS_j\bigl(e_m\,\lambda_j(a)\,c_{j,a}+d_m\,\ell_j(a)\bigr).
\end{gathered}
 \label{eq:spec-importance}
\end{equation}
A column whose coefficients no neuron would move can thus be
removed at the least cost. In the rate, the term in $\lambda_j$ is the covariance channel,
multiplicative in the coefficient, and the term in $\ell_j$ the spike
channel, additive; only the diagonal of the driver is
kept, since the drive of $\widehat{\mathbf v}_a$ by the other directions a
neuron reads averages out over neurons. The backward signal enters through
$\widehat{\mathbf g}=\bar{\mathbf g}+\mathbf U\mathbf U^{\top}(\mathbf g^{m+1}-\bar{\mathbf g})$,
its mean plus its sample dependence along the retained modes of the boundary
above, the only part that reaches the neurons with a coherent sign. When the backward signal carries no sample dependence,
$\widehat{\mathbf g}=\bar{\mathbf g}$, $\lambda_j(a)$ is
$\langle\boldsymbol\omega_j,\bar{\mathbf g}\rangle\,\Etrain[yh_a^2]$, and
$\Omega$ orders the covariance directions by the signed Rayleigh quotient that
forward Neural LoFi uses, weighted by how strongly the neurons read them. For
the top layer $\boldsymbol\omega_j=(\mathbf a_L)_j$ and $\widehat{\mathbf g}\equiv1$.

\paragraph{Convolutional network.}
The network is conv1 ($3\times3$, no filter), conv2 ($3\times3$, filter
$k_c$ over channels) with pooling, and a fully connected layer (filter $k_3$).
At the boundary conv2\,|\,FC, $\mathbf A$ pairs the FC sensitivity with the
spatial mean of conv1's output; the row step replaces channel directions of
conv2, each with a fresh random $3\times3$ kernel, and the column step
replaces FC filter columns by $\mathbf Q$ computed on the pooled map, with all
$H'W'$ lift rows of a paired channel equal to $\mathbf u_i$ and the target
divided by $\sqrt{H'W'}$ to match a forward channel. Conv1 has no filter, so the
boundary conv1\,|\,conv2 performs the column step only, with locations treated
as samples and $\mathbf u_i$ placed on the centre tap. The backward signal is
carried through the pooling by its adjoint, and the importance is computed per
filter and tap. Every statistic is streamed over the training set.

\paragraph{Numerics.}
The top $r_\ell$ singular pairs are computed exactly when $r_\ell+32$ exceeds
half the smaller dimension of $\mathbf A_\ell$ and by a randomized SVD otherwise
(oversampling $32$, four power iterations, fixed seed). Statistics are chunked
averages over the training set, accumulated in double precision.

\subsection{Variants tested and not retained}
\label{app:algorithm-variants}
We think it is useful to report approaches that did not
work, so that others need not try them again. The following variants were
implemented and run on the fully connected network, and none was retained.
\begin{description}
 \setlength{\itemsep}{1pt}
 \item[Gate-only signal] ($\mathbf g^\ell=\mathbf D_\ell\mathbf 1$): no gain over
  forward Neural LoFi, as expected, since its read-directions stay in the span
  of the forward filter (Section~\ref{sec:combined}).
 \item[Mean-field and exact signals:] the same test error as the first-order
  signal within seed noise; the first-order signal is kept because it needs
  no propagation through the installed blocks.
 \item[Common column scale] (the mean or the largest paired column set to
  $\beta\sqrt{p_{\ell+1}}\tau_{\ell+1}$, relative amplitudes kept): no
  improvement over forward Neural LoFi; only the per-column scale did improve.
 \item[Scale $\tau$ over all columns:] paired columns carry
  $\sqrt{p}$-larger coordinates, so including them inflates $\tau$ by a factor
  of about $6$ per pass, and the later appends with it.
 \item[Weakest swap] (the lowest Rayleigh quotients $|\Etrain[yh_a^2]|$):
  backward columns score high, so passes accumulate them and consume the
  forward columns.
 \item[Replace-own swap] (each pass rewrites the slots of the previous one):
  no gain beyond the first pass.
 \item[Update of the whole block $\ell+1$] (forward columns refitted on
  $\widetilde{\mathbf z}_\ell$ before the swap): the same test error, within
  $0.1$ points at every $n$.
 \item[Dense interpolation]
  ($\mathbf W\gets(1-\alpha)\mathbf W+\alpha\,\text{estimate}$, $\alpha=0.1$, no
  swap): the same test error, within $0.1$ points at every $n$.
\end{description}

\section{Additional numerical simulations}
\label{app:additional-simulations}

\subsection{Number of passes at fixed width}
\label{app:passes}
Figure~\ref{fig:passes-ffn} shows the test error of the fully connected
network against the number of passes of Algorithm~\ref{alg:combined} at fixed
width. With $r_\ell=5\%$, $7\%$, $10\%$ and $12\%$ of $k_\ell$ the error
decreases with the passes to the same minimum, reached when about half of the
filter columns have been replaced (pass $4$ at $12\%$, pass $5$ at $10\%$, pass
$7$--$8$ at $7\%$, pass $9$--$10$ at $5\%$); further passes exhaust the forward
columns and the error rises quickly. With $r_\ell=3\%$ the error is still decreasing slowly after ten
passes, and with $r_\ell=2\%$ it stays at the forward level.

\begin{figure}[H]
 \centering
 \includegraphics[width=\textwidth]{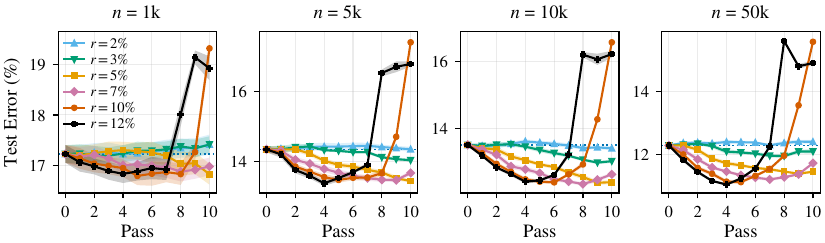}
 \caption{Test error of the fully connected network against the number of
 passes of Algorithm~\ref{alg:combined} at fixed width, for six rank
 fractions $r_\ell/k_\ell$ and four training-set sizes $n$. Pass $0$ is
 forward Neural LoFi (dotted line). Each point is the best filter width
 $k_\ell$ for its $n$, rank fraction and pass, selected by the mean over $10$
 seeds; bands are one standard error. Settings in
 Appendix~\ref{app:hyperparameters}.}
 \label{fig:passes-ffn}
\end{figure}

\subsection{What successive passes select}
\label{app:pass-overlap}
Figure~\ref{fig:pass-overlap} shows, for each pass, the fraction of its new
read and write directions that lies in the span of those installed by the
earlier passes. For the read directions it stays above $0.97$, against at most
$0.02$ for a random subspace: each pass selects again the same read subspace
and installs it with fresh lifts. Successive passes therefore resample
random features within the same subspace. The error in
Figure~\ref{fig:passes-ffn} consequently depends on the fraction of columns
replaced and is insensitive to the per-pass step $r_\ell$.

\begin{figure}[H]
 \centering
 \includegraphics[width=\textwidth]{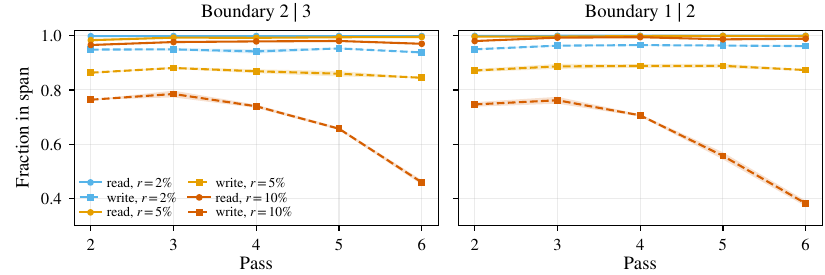}
 \caption{Fraction of the read (full) and write (dashed) directions selected
 at a pass inside the span of those installed by the earlier passes,
 $\|V_t^{\top}Q_{<t}\|_F^2/r_\ell$, at the two boundaries of the fully connected
 network ($k_\ell=512$, $n=10{,}000$, mean and standard error over $3$ seeds).
 A random subspace gives at most $0.02$.}
 \label{fig:pass-overlap}
\end{figure}

\subsection{Damped updates}
\label{app:damped}
The fixed-width version of Algorithm~\ref{alg:combined} used in
Figure~\ref{fig:test-error-vs-n} replaces at each pass the $r_\ell$ forward
columns of lowest importance by the corrected ones, so that the passes consume
the forward columns (Appendix~\ref{app:passes}). Figure~\ref{fig:damped-ffn}
tests a rule that consumes nothing: the layers are kept dense and the two
updates of~\eqref{eq:main-coupled-steps} are applied as damped averages,
\begin{equation}
 \mathbf W_\ell\leftarrow(1-\alpha)\,\mathbf W_\ell
   +\alpha\,\mathbf R^{\leftarrow}_\ell\boldsymbol\Gamma^{\rm row}_\ell\mathbf V_\ell^{\top},
 \qquad
 \mathbf W_{\ell+1}\leftarrow(1-\alpha)\,\mathbf W_{\ell+1}
   +\alpha\,\mathbf U_\ell\boldsymbol\Gamma^{\rm col}_\ell\mathbf Q_\ell^{\top},
 \label{eq:damped-steps}
\end{equation}
with $\alpha=0.1$, the same estimates, no column replaced and the readout
refitted after each pass. With $r_\ell=25\%$ of $k_\ell$ the error decreases
monotonically over about $25$ passes to $18.7\%$, $16.9\%$, $13.6\%$, $12.6\%$
and $11.0\%$ at $n=500$, $1{,}000$, $5{,}000$, $10{,}000$ and $50{,}000$,
within $0.2$ points of the minimum of the swap ($18.7\%$, $16.8\%$, $13.4\%$,
$12.4\%$ and $11.1\%$) and never significantly below it, then rises slowly, by
$0.15$--$0.25$ points at pass $50$; with $r_\ell=10\%$ the descent is slower
(minimum at passes $30$--$48$ for $n\ge5{,}000$) and stops $0.2$ points higher.
The damped rule reaches essentially the same minimum error as the swap
over more passes, followed by a gradual rise. This is consistent with the
identity $(1-\alpha)\mathbf W+\alpha\mathbf E=(1-\alpha)\bigl(\mathbf
W+\tfrac{\alpha}{1-\alpha}\mathbf E\bigr)$: the network is positively
homogeneous and the ridge readout absorbs the factor $1-\alpha$, so the damped
rule is the update~\eqref{eq:main-coupled-steps} with the scale $\beta$
replaced by $\beta\alpha/(1-\alpha)$, applied at every pass. Both methods
use the same direction-selection rule.

\begin{figure}[H]
 \centering
 \includegraphics[width=\textwidth]{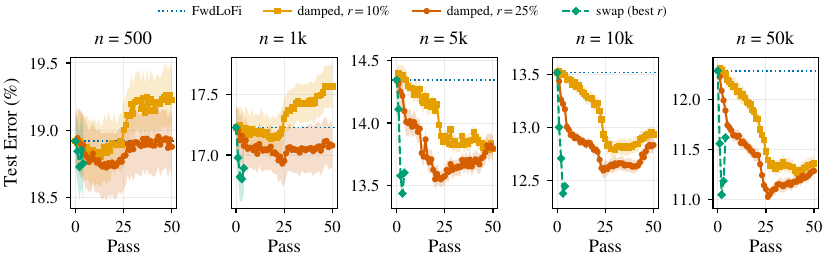}
 \caption{Test error of the fully connected network against the number of
 passes of the damped update~\eqref{eq:damped-steps} with $\alpha=0.1$, at
 two rank fractions $r_\ell/k_\ell$, and of the fixed-width swap of
 Figure~\ref{fig:test-error-vs-n} at its best rank fraction (dashed), for five
 training-set sizes $n$. Pass $0$ is forward Neural LoFi (dotted line). Each
 point is the best filter width $k_\ell$ for its $n$, rank fraction and pass,
 selected by the mean over $10$ seeds; bands are one standard error. Settings
 in Appendix~\ref{app:hyperparameters}.}
 \label{fig:damped-ffn}
\end{figure}

\subsection{Gradient-descent training curves}
\label{app:gd-curves}
Figures~\ref{fig:gd-curves} and~\ref{fig:gd-curves-cnn} show the test error of
full-batch gradient descent along training for the learning rates that do not
diverge, on the fully connected and on the convolutional network. On the fully
connected network the rate $0.1$ is the best at every $n$. On the convolutional
network the rate $0.003$ is the best up to $n=5{,}000$ and the largest stable
rate, $0.005$, from $n=10{,}000$ upwards; the rate $0.001$ is behind at every
$n$. At large $n$ the error is still decreasing at the last step on both
networks, so the gradient-descent curves of Figure~\ref{fig:test-error-vs-n}
are limited by the step budget there ($3{,}000$ steps on the fully connected
network, $5{,}000$ on the convolutional one).

\begin{figure}[H]
 \centering
 \includegraphics[width=\textwidth]{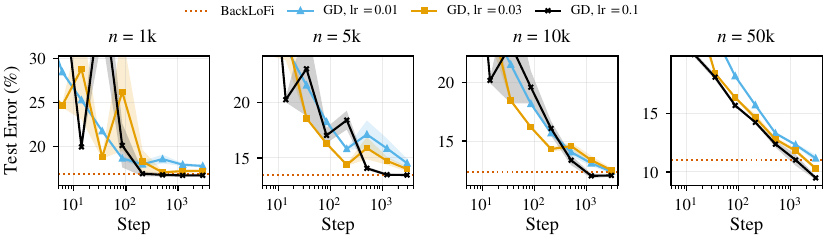}
 \caption{Test error of full-batch gradient descent on the fully connected
 network against the step, for three learning rates; mean and standard error
 over $5$ seeds. Dotted: BackLoFi of Figure~\ref{fig:test-error-vs-n}.}
 \label{fig:gd-curves}
\end{figure}

\begin{figure}[H]
 \centering
 \includegraphics[width=\textwidth]{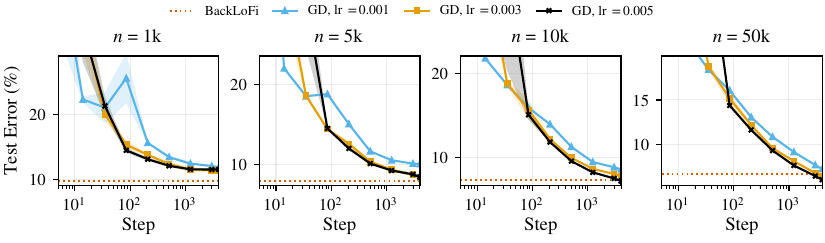}
 \caption{Test error of full-batch gradient descent on the convolutional
 network against the step, for three learning rates at every $n$; mean and
 standard error over $3$ seeds for the rates $0.003$ and $0.005$, one seed
 for $0.001$ at $n\ge5{,}000$. Dotted: BackLoFi of
 Figure~\ref{fig:test-error-vs-n}.}
 \label{fig:gd-curves-cnn}
\end{figure}

\section{Hyperparameters of the figures}
\label{app:hyperparameters}
The image experiments share one protocol: binary CIFAR-10 (vehicles $+1$,
animals $-1$), a training sub-sample of size $n$, the $10^4$ test images, a
ridge readout with leave-one-out penalty among $50$ log-spaced values in
$[10^{-6},10^{6}]$, test error as the fraction of wrong signs, means over seeds
with every selection made on the seed mean. Algorithmic terms are defined in
Appendix~\ref{app:algorithm}.
\begin{description}
 \setlength{\itemsep}{2pt}
\item[Figure~\ref{fig:toy-main}.]
We use $d=100$ and the finite-dimensional realization
$(p_U,p_V,p_W)=(6,5,5)$, corresponding to the $\epsilon=1/2$ scaling. The sixteen first-level quadratic directions are
drawn from a Gaussian ensemble and jointly orthonormalized in
$\mathbb R^{D_2(100)}$. For the second level,
$B_Q=I_6/\sqrt6$, while $B_Z$ is a randomly rotated balanced contrast
with three eigenvalues $+1/\sqrt6$ and three eigenvalues $-1/\sqrt6$.
The hidden coupling $C$ is drawn as a Haar orthogonal matrix scaled by
$1/\sqrt5$. We use $b_0=1/2$ and no additive label noise. Labels are
centered and standardized using the training statistics, which are then
kept fixed for validation and test.

The spectral estimators follow Algorithm~4. The frozen lower random-feature
map has width $65{,}536$, unit-norm Gaussian weights, and activation
$\sigma_{\geq2}(t) =\operatorname{ReLU}(t)-\frac{t}{2}-\frac{1}{\sqrt{2\pi}}$
The lower filter retains $16$ directions, of which the leading $6$ define
the protected visible anchor. The upper Forward step uses the complete
quadratic Hermite lift of these $16$ coordinates, of dimension $136$, and
retains $16$ directions. Within BackLoFi algorithm, after correction, the upper
Forward step is rerun with the original labels. The final affine ridge
readout is fitted on the linear and quadratic upper features, with relative
regularization selected from
$\{10^{-8},10^{-6},10^{-4},10^{-2},1\}$ by validation MSE.

For the gradient-descent baselines, we use the same frozen lower
random-feature map followed by two trainable hidden layers of widths
$(512,512)$ with ReLU activations and an affine scalar readout. We use AdamW with
learning rate in $\{3\times10^{-4},10^{-3}\}$, batch size
$\min(n,1024)$, weight decay $10^{-4}$, gradient-norm clipping at $5$,
and $4000$ updates. End-to-end GD trains both hidden layers jointly.
For layer-wise GD, the budget is split into $2000$ updates for the first
layer and a temporary readout, followed by $2000$ updates for the second
layer and final readout with the first layer frozen. The best validation
checkpoint and learning rate are selected separately for each sample size
and seed. We use
$\alpha\in\{1.5,1.6,\ldots,3.5\}$ with
$n=\operatorname{round}(100^\alpha)$, two seeds $\{0,1\}$,
$16{,}384$ validation samples and $32{,}768$ test samples. Curves report
the mean and one standard error over the two seeds.

 \item[Figure~\ref{fig:test-error-vs-n}.]
  $n\in\{500,1\mathrm{k},2\mathrm{k},5\mathrm{k},10\mathrm{k},20\mathrm{k},50\mathrm{k}\}$.
  \emph{Left:} $L=3$ fully connected ReLU layers, $p_\ell=5{,}000$, $10$ seeds.
  FwdLoFi: $k_\ell\in\{32,\dots,2{,}048\}$ (powers of two), best per $n$.
  BackLoFi ($1^{\rm st}$ ord): fixed width, $r_\ell/k_\ell\in\{5,10,15,20,25\}\%$,
  $P\le4$ passes, $\beta=0.25$, best $(k_\ell,r_\ell,P)$ per $n$; exact:
  $r_\ell/k_\ell\in\{10,20,25\}\%$, $k_\ell\ge64$. RF: identity filters. Ridge: on
  the $3{,}072$ pixels. GD: same widths, full batch, MSE on $\pm1$ labels, no
  momentum or weight decay, $3{,}000$ steps, rates $\{0.01,0.03,0.1,0.2\}$; curve
  at rate $0.1$ (best on average over $n$) and its best recorded step, $5$ seeds.
  \emph{Right:} conv $3\times3$ with $4{,}096$ channels (no filter), conv
  $3\times3$ with $4{,}096$ channels (filter $k_c$), $4\times4$ max pooling,
  fully connected $5{,}000$ (filter $k_3$), ReLU, $5$ seeds.
  FwdLoFi: $k_c\in\{64,\dots,1{,}024\}$, $k_3\in\{32,\dots,1{,}024\}$.
  BackLoFi ($1^{\rm st}$ ord): $k_3\le512$, ranks
  $\mathrm{round}(f\min(k_c,k_3))$ and $\mathrm{round}(fk_c)$ at the two
  boundaries, $f\in\{5,10,15,20,25\}\%$, $P\le4$, $\beta=4$; exact:
  $k_c,k_3\in\{128,\dots,1{,}024\}$, $f\in\{10,20,25\}\%$, $P\le3$. RF: identity
  filters.
  GD: same widths, full batch, MSE on $\pm1$ labels, no momentum or weight
  decay, $5{,}000$ steps, bfloat16 autocast (a replica without it at $n=500$
  gives the same curve within the seed noise), rates $\{0.001,0.003,0.005\}$
  ($0.01$ diverges at the third step for every $n$); $3$ seeds for $0.003$ and
  $0.005$ at every $n$ and for $0.001$ up to $n=2\mathrm{k}$, one seed above;
  the curve uses, at each $n$, the best of the rates run with three seeds,
  selected on the seed mean, at its best recorded step ($0.003$ up to
  $n=5\mathrm{k}$, $0.005$ above).
  Shaded bands are one standard error over seeds.
  \emph{Backward signal:} compared on the grid of the exact arm, the
  first-order signal of \eqref{eq:main-first-order} and the exact signal
  backpropagated through the corrected network give the same error on the
  fully connected network within the standard error at every $n$; on the
  convolutional network the first-order signal is lower by about $0.3$ points
  for $2{,}000\le n\le10{,}000$ ($0.4$ at $n=500$) and equal at $n=50{,}000$.
  \emph{Reference values at $n=50{,}000$:} fully connected, RF $13.9\%$,
  FwdLoFi $12.3\%$, BackLoFi $11.1\%$, GD $9.5\%$; convolutional, RF $8.4\%$,
  FwdLoFi $7.8\%$, BackLoFi $6.7\%$, GD $5.7\%$.
 \item[Figure~\ref{fig:passes-ffn}.] Fully connected network and BackLoFi
  ($1^{\rm st}$ ord) of Figure~\ref{fig:test-error-vs-n}, with
  $r_\ell/k_\ell\in\{2,\allowbreak 3,\allowbreak 5,\allowbreak 7,\allowbreak 10,\allowbreak 12\}\%$, $P=1,\dots,10$, $\beta=0.25$,
  $k_\ell\in\{32,\dots,2{,}048\}$ (best per point),
  $n\in\{1\mathrm{k},5\mathrm{k},10\mathrm{k},50\mathrm{k}\}$, $10$ seeds.
 \item[Figure~\ref{fig:pass-overlap}.] Fully connected network, first-order
  signal, $k_\ell=512$, $n=10\mathrm{k}$, $r_\ell/k_\ell\in\{2,5,10\}\%$, $6$ passes,
  $\beta=0.25$; $3$ seeds.
 \item[Figure~\ref{fig:damped-ffn}.] Fully connected network of
  Figure~\ref{fig:test-error-vs-n} with dense layers and the damped
  updates~\eqref{eq:damped-steps}, first-order signal, $\alpha=0.1$,
  $r_\ell/k_\ell\in\{5,10,25\}\%$ ($5\%$ not shown: it stays within $0.1$ points
  of FwdLoFi for $n\le1\mathrm{k}$ and descends more slowly than $10\%$ above),
  $P=1,\dots,50$, $\beta=0.25$, $k_\ell\in\{32,\dots,2{,}048\}$ (best per
  point), $n\in\{500,1\mathrm{k},5\mathrm{k},10\mathrm{k},50\mathrm{k}\}$, $10$
  seeds. The forward scale $\tau$ entering $\boldsymbol\Gamma$ is measured
  with the forward-phase filters on the current features; both updates are
  applied at each boundary in turn, so a hidden layer between two boundaries
  is damped twice per pass; a layer whose pre-activation scale drifts by more
  than a factor $8$ would be rescaled by a power of two (never triggered).
  Swap reference: the $4$-pass arm of Figure~\ref{fig:test-error-vs-n} at its
  best $r_\ell$ for each $n$.
 \item[Figure~\ref{fig:gd-curves}.] GD runs of
  Figure~\ref{fig:test-error-vs-n}, rates $\{0.01,0.03,0.1\}$,
  $n\in\{1\mathrm{k},5\mathrm{k},10\mathrm{k},50\mathrm{k}\}$, $5$ seeds.
 \item[Figure~\ref{fig:gd-curves-cnn}.] Convolutional network of
  Figure~\ref{fig:test-error-vs-n} with trainable layers of the same widths,
  random initialization, full batch (gradient accumulated over the whole
  training set), MSE on $\pm1$ labels, no momentum or weight decay, $5{,}000$
  steps, bfloat16 autocast; rates $\{0.001,0.003,0.005\}$ ($0.01$ diverges) at
  every $n$; seeds as in the caption.
\end{description}

\end{document}